\def\eqref#1{equation~\ref{#1}}
\def\ceil#1{\lceil #1 \rceil}
\def\1{\bm{1}}
\def\eps{{\epsilon}}
\DeclareMathAlphabet{\mathsfit}{\encodingdefault}{\sfdefault}{m}{sl}
\SetMathAlphabet{\mathsfit}{bold}{\encodingdefault}{\sfdefault}{bx}{n}
\newcommand{\E}{\mathbb{E}}
\newcommand{\R}{\mathbb{R}}
\theoremstyle{plain}
\newtheorem{theorem}{Theorem}[section]
\newtheorem{proposition}{Proposition}
\newtheorem{lemma}{Lemma}
\newtheorem{corollary}{Corollary}
\theoremstyle{definition}
\newtheorem{assumption}{Assumption}
\theoremstyle{remark}
\newtheorem{remark}[theorem]{Remark}
\definecolor{darkred}{RGB}{230,0,0}
\definecolor{darkgreen}{RGB}{0,150,0}
\definecolor{darkblue}{RGB}{0,0,150}
\newcommand{\sbwc}{self-bounded weak convexity }
\DeclarePairedDelimiterX{\inp}[2]{\langle}{\rangle}{#1, #2}
\newcommand{\poly}{\mathrm{poly}}
\newcommand{\Fh}{\widehat F}
\newcommand{\lamin}[1]{\lambda_{\min}\left(#1\right)}
\newcommand{\negi}{{\neg i}}
\newcommand{\Phixi}{\Phi(w,x_i)}
\newcommand*{\rom}[1]{\expandafter\@slowromancap\romannumeral #1@}
\newcommand{\mathleft}{\@fleqntrue\@mathmargin0pt}
\newcommand{\mathcenter}{\@fleqnfalse}
\newcommand{\ssymbol}[1]{^{\@fnsymbol{#1}}}
\newcommand{\hf}{\widehat{F}}
\newcommand{\simiid}{\widesim{\text{\small{iid}}}}
\newcommand{\one}{\mathbf{1}}
\newcommand{\nn}{\notag}
\newcommand{\beq}{\begin{equation}}
\newcommand{\eeq}{\end{equation}}
\newcommand{\bea}{\begin{align}}
\newcommand{\eea}{\end{align}}
\newcommand{\beas}{\begin{align*}}
\newcommand{\eeas}{\end{align*}}
   \newcommand{\footremember}[2]{%
    \footnote{#2}
    \newcounter{#1}
    \setcounter{#1}{\value{footnote}}%
}
\newcommand{\widesim}[2][1.5]{
  \mathrel{\overset{#2}{\scalebox{#1}[1]{$\sim$}}}}
\def\bea#1\eea{\begin{align}#1\end{align}}
\title{Sharper Guarantees for Learning Neural Network Classifiers\\ with Gradient Methods}
\author{Hossein Taheri\footremember{ht}{Department of Computer Science and Engineering, University of California, San Diego. Email: \href{mailto:htaheri@ucsd.edu}{htaheri@ucsd.edu}}, \;\; Christos Thrampoulidis\footremember{ct}{Department of Electrical and Computer Engineering, University of British Columbia. Email: \href{mailto:cthrampo@ece.ubc.ca}{cthrampo@ece.ubc.ca}},\;\; and\;\; Arya Mazumdar\footremember{am}{Department of Computer Science and Engineering, University of California, San Diego. Email: \href{mailto:arya@ucsd.edu}{arya@ucsd.edu}}}
\begin{document}

\maketitle

\begin{abstract}
In this paper, we study the data-dependent convergence and generalization behavior of gradient methods for neural networks with smooth activation. Our first result is a novel bound on the excess risk of deep networks trained by the logistic loss, via an alogirthmic stability analysis. Compared to previous works, our results improve upon the shortcomings of the well-established Rademacher complexity-based bounds. Importantly, the bounds we derive in this paper are tighter, hold even for neural networks of small width, do not scale unfavorably with width, are algorithm-dependent, and consequently capture the role of initialization on the sample complexity of gradient descent for deep nets. Specialized to noiseless data separable with margin $\gamma$ by neural tangent kernel (NTK) features of a network of width $\Omega(\poly(\log(n)))$, we show the test-error rate to be $e^{O(L)}/{\gamma^2 n}$, where $n$ is the training set size and $L$ denotes the number of hidden layers. This  is an improvement in the test loss bound compared to previous works while maintaining the poly-logarithmic width conditions. We further investigate excess risk bounds for deep nets trained with noisy data, establishing that under a polynomial condition on the network width, gradient descent can achieve the optimal excess risk. Finally, we show that a large step-size significantly improves upon the NTK regime's results in classifying the XOR distribution. In particular, we show for a one-hidden layer neural network of constant width $m$ with quadratic activation and standard Gaussian initialization that SGD with linear sample complexity and with a large step-size $\eta=m$ reaches the perfect test accuracy after only $\ceil{\log(d)}$ iterations, where $d$ is the data dimension.   
\end{abstract}

\section{Introduction}
\subsection{Overview}
Neural networks, with their vast capacity for capturing intricate patterns in data, have triggered a paradigm shift in machine learning. Despite the power of these networks in modeling complex relationships, the interplay between their optimization and generalization behaviors (that is the gap between training and test errors) continues to be a compelling area of research. In practice, training neural networks using gradient-based optimization methods often leads to interpolation. That is, deep networks can meticulously fit the training data, driving empirical loss to near-zero and training error to perfect classification. However, these networks also demonstrate the capability to generalize well to unseen data. 
Various recent research endeavors have explored the training and test error guarantees of deep networks, with a focus on the Neural Tangent Kernel (NTK) regime \cite{jacot2018neural,du2019gradient}. One prominent feature of such works is that during gradient descent iterates the network's weights are constrained to move at most a constant distance with respect to overparameterization i.e., $\|w^\star - w_0\| = O_m(1)$, where $w^\star\in\R^p$ denotes the vector of target weights, $w_0$ is the initial weight vector, and $m$ is the network width \cite{chen2020much,ji2019polylogarithmic,telgarsky2023feature}. 

Yet, even for the relatively simple setting of learning deep nets in the kernel regime, the existing generalization bounds are still suboptimal. Moreover, the boundaries of the kernel regime are still largely unknown and an active area of research \cite{liu2022loss,banerjee2022restricted,telgarsky2023feature}. While the kernel regime can partially demonstrate the behavior of neural networks, the resulting guarantees often require large width, small step-size or large iteration and sample complexities. There is increasing evidence in recent years that for certain class of data distributions neural networks can overcome these limitations by using a large step-size which allows the network's parameters to move a long distance from initialization, often leading to better sample and computational complexities \cite{damian2022neural,ba2022high}. 

In this work, we study the generalization and convergence behavior of gradient-based algorithms in neural nets with smooth activation functions for a wide class of data distributions. Our first result characterizes the test and train loss rates for classification problems under the condition that deviation from initialization is bounded depending on the network's width. In particular, for $L$-hidden layer networks our results hold under $\|w^\star - w_0\| \lesssim m^{O(1/L)}$, allowing the network's weights to move from initialization a distance increasing with $m$. This shows that the kernel regime continues to hold for a wider range of setups than previous results for which the deviations are restricted to be constant in $m$. The key reason for this improvement is exploiting the objective's Hessian structure in the gradient-descent path. More importantly, using the Hessian information enables us to develop, for the first time, \emph{algorithm-dependent} generalization bounds of deep neural networks. As will be discussed throughout the paper, the bounds we derive via algorithmic-stability are tighter than previous relevant bounds in the literature.
\begin{table}[t]
\begin{tabular}{p{4.2cm} p{1.7cm} p{2.5cm} p{2cm} p{3.4cm}}
\hline
& {\bf Activation} & {\bf \; Width} & {\bf Train Loss}  &{\bf Test Loss}\\
\hline\\[-0.14in]
\hspace{-0.02in}\cite{chen2020much,bartlett2017spectrally}& ReLU &$ \hspace{-0.02in}\Omega(\poly(\frac{\log(n)}{\gamma}))$ & $\;\widetilde O(\frac{1}{\gamma^2 T})$ & $\widetilde O(\frac{e^{O(L)}}{\gamma^2} \sqrt{\frac{m}{n}}
+ \frac{1}{\gamma^2 T})$ \\[.2in]
Thm. \ref{thm:train-test}, Cor. \ref{cor:NTK} & Smooth & $\Omega(\poly(\frac{\log(n)}{\gamma}))$   & \;$\widetilde O(\frac{1}{\gamma^2 T})$ & $\widetilde O(\frac{e^{O(L)}}{\gamma^2 n} + \frac{1}{\gamma^2 T})$ \\[.03in]
\hline
\end{tabular}
\caption{Comparison of our results on learning deep nets with GD under NTK separability condition to related prior results. Here   $m:$ network width, $L:$ network depth, $\gamma:$ NTK-margin, $n:$ number of samples and $T:$ number of iterations. 
}
\label{tab:lit}
\end{table}
\begin{table}[t]
\begin{tabular}{p{6cm} p{2cm} p{3cm} p{2.2cm} p{1cm}}
\hline
&{\bf Activation} & {\bf Width} &{\bf Iteration } &{\bf Sample}\\
\hline\\[-.1in]
\hspace{-0.02in}\cite{telgarsky2023feature,chen2020much,taheri2024generalization}, Cor. \ref{cor:NTK}   & all & $\Omega(\poly(d))$ & $d^2$ & $\widetilde O(d^2)$
\\[0.24in]
\hspace{-0.02in}\cite{glasgowsgd} & ReLU & $\Omega(\poly(\log(d)))$ & $\poly(\log(d))$ & $\widetilde O(d)$\\[0.05in]
Thm. \ref{thm:xor} & Quadratic & $\Omega(1)$ & $\log(d)$ & $\widetilde O(d)$\\[.03in]
\hline
\end{tabular}
\caption{Comparison of our findings on learning the $d$-dimensional XOR distribution with SGD to relevant prior results.}
\label{tab:xor}
\end{table}
We specialize these results to a well-known NTK separability condition tailored for noiseless data and show that our results substantially improve the prior results on the test error performance while still allowing the width to be small, specifically poly-logarithmic on sample size (Table \ref{tab:lit}). A more detailed comparison is deferred to Section \ref{sec:lit}. We also consider the case of noisy data distributions and show that deep models are consistent, i.e., they can achieve the optimal test loss in the presence of noise as the sample size grows. 
\par 
While these results improve upon the existing bounds within the NTK regime, we show in Section \ref{sec:xor} that using a large step-size can further improve both the computational and sample complexities. In particular, we show for the stylized setup of data following the XOR distribution, a two-layer neural network with quadratic activation reaches zero test error after only $\log(d)$ steps of SGD with an aggressive step-size $\eta=m$. A comparison of our findings with the guarantees of the kernel regime with both GD \cite{taheri2024generalization} and one-pass SGD \cite{telgarsky2023feature,chen2020much} and the most relevant work in the feature learning regime \cite{glasgowsgd} is summarized in Table \ref{tab:xor}. 

Below is a summary of our contributions.
\begin{itemize}[leftmargin=*]
    \item In Theorem \ref{thm:train-test}, we develop sufficient conditions for the global convergence of gradient descent in deep and smooth networks and show that if $m=\Omega(\|w^\star-w_0\|^{6L+4})$, the training loss is bounded by $O(\frac{\|w^\star-w_0\|^2}{\eta T})$, where $\eta$ is the step-size and $w^\star$ can be any choice of network weights that achieves small training loss. Under similar conditions on $m$, we show the generalization error is bounded by $O(\frac{\|w^\star-w_0\|^2 G_0^2}{n})$ where $G_0$ is the Lipschitz parameter of the network at initialization. 
    \item In Corollary \ref{cor:NTK}, we interpret these results by specializing them to a commonly-used margin-based NTK separability condition. The results of the corollary and comparison to previous works in literature are summarized in Table \ref{tab:lit}. A promising feature of our approach is that the test loss bound does not have an unfavorable dependence on the width while still maintaining minimum poly-logarithmic width conditions, which is new in the context of deep learning. To the best of our knowledge this is the tightest test error bound for deep nets trained by GD in the NTK regime.
    \item We consider the more general case of noisy data with non-vanishing optimal test loss in Theorem \ref{thm:generalization} and show that under a polynomial growth condition on network width, GD achieves a convergence rate of $1/\sqrt{n}$ to the optimal loss after $T = \sqrt{n}$ iterations. 
    \item In Section \ref{sec:xor}, we consider the $d$-dimensional XOR distribution and show that a one-hidden layer network of constant width after exactly $\log(d)$ iterations of SGD with step-size $\eta=m$ achieves perfect test accuracy with $n=\widetilde O(d)$ samples, considerably improving kernel regime's limitations.  
\end{itemize}
\subsection{Prior works}\label{sec:lit}
\paragraph{Generalization of deep nets.} Among prior works on the generalization capabilities of deep networks, the only initialization dependent bounds were provided in 
\cite{bartlett2017spectrally} obtaining bounds of order $O(\frac{\mathcal{R}}{n})$ where the Rademacher complexity is derived as $\mathcal{R}:=(\prod_{i=1}^L \|W_i\|_{2})(\sum_{i=1}^L \frac{\|W_i^{\top}-M_i^{\top}\|_{2,1}^{2 / 3}}{\|W_i\|_{2}^{2 / 3}})^{3 / 2}.$ Here $\|W_i\|_{2}$ is the spectral norm of the weight matrix of layer $i$ (typically a constant) and $M_i$s are any data-independent matrices. Thus one can choose $M_i=W_{i,0}$, i.e., the initialization weight matrix. In fact, the above bound resembles the bound that we obtained via an optimization-dependent stability analysis. However, note that $\mathcal{R}$ depends on the distance traversed by weights through the $\ell_{2,1}$ norm  which is always larger than the Frobenius norm, and in the worst case, the gap can be significantly large depending on the width. To see this, note that for a matrix $V\in\R^{m\times m}$ it holds that $\|V\|_{2,1} \le \sqrt{m}\|V\|_{F}$. We note that ``initialization-independent'' bounds (e.g., \cite{neyshabur2018pac,golowich2018size}) that are usually proportional to $\|w_t\|$  (rather than $\|w_t-w_0\|$) are strictly looser than the bound we obtain. This is primarily due to the fact that $\|w_t-w_0\|$ can be much smaller than $\|w_t\|$ and in fact as our experiments show $\|w_t-w_0\|$ is of constant order and can even decrease with width. Whereas, $\|w_t\|$ (or $\|w_t\|/\sqrt{m}$ due to the normalization in our setup) grows by increasing $m$, making the initialization-independent bounds potentially grow with width at the rate $O(\sqrt{m})$, despite lacking an explicit dependence on $m$. Hence, for wide networks, prior generalization bounds of deep neural nets based on Rademacher complexity are larger than the bound we derive in Theorem \ref{thm:train-test}.
\par 
\paragraph{Test rates under the NTK separability condition.} Other works that provide generalization bounds and optimization guarantees for neural nets include \cite{cao2019generalization,nitanda2019gradient,ji2019polylogarithmic,chen2020much,richards2021learning,taheri2024generalization,wang2023generalization}. In particular, \cite{ji2019polylogarithmic} derived the width condition $m=\Omega(\frac{\poly(\log(n))}{\gamma^8})$ for achieving the $\frac{1}{\gamma^2\sqrt{n}}$-test error rate in two-layer nets via a uniform-convergence argument \cite{shalev2014understanding}. 
This bound was extended to deep networks in \cite{chen2020much} with a generalization gap of order 
$$\widetilde O\left(\frac{4^L}{\gamma^2} \sqrt{\frac{m}{n}}\wedge\left(\frac{L^{3/2}}{\gamma^2 \sqrt{n}} + \frac{L^{11/3}}{\gamma^2 m^{1/6}}\right)\right),$$
where $\wedge$ takes the minimum of two quantities. As discussed earlier, this bound is dependent on width since it is derived essentially by taking the minimum of two generalization bounds based on Rademacher complexity derived in \cite{bartlett2017spectrally} and \cite{cao2019generalization}. Importantly, in the small width regime the bound simplifies into $\tilde O(\frac{4^L}{\gamma^2}\sqrt{\frac{m}{n}})$, which has an undesirable dependence on the width. 
In this paper, we improve the generalization gap to $\tilde O(\frac{e^{O(L)}}{\gamma^2 n})$ under the width condition $m=\Omega(\poly(\log(n)/\gamma))$. To the best of our knowledge, these are the smallest generalization bound and width condition in literature up to now for learning deep neural nets. 
The key reason behind the improvement is leveraging the Hessian structure of the objective throughout the gradient descent iterates. The improved generalization guarantees result from the algorithmic dependency of our bounds and in fact the bounds can even be expressed such that they solely depend on the cumulative training loss (c.f. Eq. \ref{eq:thm2genopt}). As the training loss captures the role of initialization and is independent of width, the resulting generalization bounds share the same favorable properties.  
\par

\paragraph{Feature learning and the XOR distribution.} Some recent works have pointed out the limitations of the kernel regime in understanding the full power of neural networks \cite{abbe2022merged}. In particular, as in the kernel regime, the networks weights are bounded not to move significantly from initialization, the learned features are not considerably different from those learned at initialization.  On the other hand, it is hypothesized that neural networks can learn the true underlying features of the data distribution if the network weights are allowed (by large step-sizes or avoiding early-stopping) to move a large distance from initialization. This phenomenon was first proved for specific regression tasks where the labels essentially only depend on a small number of features, such as when $y=g(Ux)$ for $U\in\R^{k\times d}$ where $k\ll d$ \cite{damian2022neural,ba2022high,abbe2022merged,cui2024asymptotics}, by one large SGD step leading to superior sample complexities compared to the kernel regime. For classification tasks, some focus has been on the XOR distribution(a.k.a. parities) \cite{wei2019regularization}. Recent works have studied the problem of learning the $d$-dimensional XOR distribution using neural networks in both NTK and feature learning settings \cite{barak2022hidden,telgarsky2023feature,taheri2024generalization,glasgowsgd}. Specifically, it has been shown that under NTK with a sufficiently small step size, a polynomially wide network requires $d^2$ GD steps and $d^2$ sample size. Some studies have achieved linear sample complexity for learning XOR \cite{bai2019beyond,glasgowsgd,telgarsky2023feature}; but these methods involve more computational effort compared to our results. The work most related to ours is  \cite{glasgowsgd}, which demonstrated that with a particular Gaussian initialization, a ReLU network requires $\poly(\log(d))$ large SGD steps and $\poly(\log(d))$ neurons to learn XOR with linear sample complexity. In contrast, we show that by using a quadratic activation, learning this distribution requires only $\log(d)$ large steps with a constant-width network, while maintaining the same linear sample complexity.

\subsection*{Notation}
Probability and expectation with respect to the randomness in random variable $x$ are denoted by $\Pr_x(\cdot)$ and $\E_x[\cdot]$. We use the standard complexity notation $\lesssim,o(\cdot),O(\cdot
),\Theta(\cdot),\Omega(\cdot
)$ and use $\tilde{o}(\cdot), \tilde{O}(\cdot
),\tilde\Theta(\cdot),\tilde\Omega(\cdot
)$ to hide polylogarithmic factors. We denote $a\wedge b:= \min\{a,b\}$. The Gradient and Hessian of the model $\Phi:\R^{p\times d}\rightarrow \R$ with respect to the first input (i.e., weights) are denoted by $\nabla \Phi$ and $\nabla^2 \Phi$, respectively. The minimum eigenvalue of a symmetric matrix is denoted by $\lambda_\min(\cdot)$. We use $\|\cdot\|$ for the Euclidean norm of vectors and $\|\cdot\|_2$ for the spectral norm of matrices. We denote $[w_1,w_2]:=\{w\,:\,w=\alpha w_1+(1-\alpha) w_2, \alpha\in[0,1]\}$ the line between $w_1,w_2\in\R^{p}$. 

\section{Main results}\label{sec:main}

Throughout the paper, we consider the following unregularized objective for a neural network classifier parameterized with $w\in\R^p,$
\bea\label{eq:logregDNN}
\min_{w\in\R^{p}} \Fh(w):=\frac{1}{n}\sum_{i=1}^{n} f\left(y_i \Phi(w,x_i)\right),
\eea
with data points satisfying $\|x\|\le 1$, the binary labels $y_i\in\{\pm 1\}$ and $f(\cdot)$ is a loss function for classification tasks such as the logistic loss, $f(t):= \log(1+e^{-t})$ and $\Phi(\cdot,x)$ is the network's output. We also define the test loss as $F(w):=\E_{x,y}[f(y\Phi(w,x))].$
\subsection{Train and test loss bounds in deep nets}
In our first theorem, we establish conditions for the width and target weights of the network that guarantees the training loss decays to zero if the network can interpolate the training set. We consider gradient descent update rule where at any iteration $t\le T$: $w_{t+1}=w_t-\eta\nabla \hf(w_t).$
Before stating the theorem, we note that this result is valid under the standard descent-lemma condition for the step-size, as stated in Lemma \ref{lem:des} in the appendix. In particular, the descent lemma holds if the step-size satisfies the standard Eq. \ref{eq:descon} in the appendix.   
\begin{theorem}[Train \& Test loss of deep nets]\label{thm:train-test} 
Consider the $L$-layer neural network with width $m$ where $\Phi(w,x):= \frac{1}{\sqrt m}  W_{L+1}^\top(\frac{1}{\sqrt m} \sigma(W_{L}^\top \cdots \frac{1}{\sqrt{m}}\sigma (W_1^\top x)\cdots)$ and $\sigma$ is a $1$-smooth and $1$-Lipschitz activation function such that $\sigma(0)=0$. Moreover, let $\beta_L$ be a constant that only depends on $L$, let all parameters of the network be initialized as i.i.d. standard Gaussian and assume the step-size satisfies the condition of the descent lemma. Fix $T$ and assume the target weights vector $w^\star\in\R^p$ that obtain small training loss such that 
\bea\label{eq:firstcon}
\rho^\star\geq \max\left\{\sqrt{\eta T \widehat F(w^\star)},\sqrt{\eta \widehat F(w_0)}\right\}.
\eea
where $\rho^\star:=\|w^\star-w_0\|$. Moreover, assume the width $m$ is large enough such that it satisfies,
\bea\label{eq:secondcon}
m \ge 4\beta_L^2 (6 {\rho^\star})^{6L+4}
\eea
Then, $\|w_t-w_0\|=O(\rho^\star)$ and the training loss satisfies with high probability over initialization,
\bea\label{eq:train_rate}
\Fh(w_T) \le\frac{4{\rho^\star}^2}{\eta T}.
\eea
Moreover, assume for every $n$ samples from the data distribution there exists $w^\star$ satisfying Eqs. \ref{eq:firstcon}-\ref{eq:secondcon}. Then, the expected generalization gap satisfies with high probability over the randomness of initialization,
\bea\label{eq:thm2genopt}
\E_{\mathcal{S}}\left[{F}(w_T)-\widehat F(w_T)\right]\le 2.2\,\frac{\eta(G_0+1/4)^2  }{n} \,\E_{\mathcal{S}}\left[ \sum_{t=0}^{T-1}\Fh(w_t)\right],
\eea
where the expectation is over the randomness in the training set denoted by $\mathcal{S}$ and $G_0$ is the Lipschitz parameter of network at initialization  i.e., $\|\nabla \Phi(w_0,\cdot)\|\le G_0$.
\end{theorem}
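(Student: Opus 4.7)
The argument splits into two essentially independent parts mirroring the two conclusions: (i) the trajectory bound $\|w_t-w_0\|=O(\rho^\star)$ together with the training-loss rate in Eq.~\ref{eq:train_rate}; and (ii) the algorithmic-stability-based generalization bound of Eq.~\ref{eq:thm2genopt}. Both parts share a common local geometric estimate of the network: under the width condition \ref{eq:secondcon}, along the entire GD trajectory one has $\|\nabla_1\Phi(w,x)\|\le G_0+O(\|w-w_0\|/\sqrt{m})$ and a small Hessian bound of the form $\|\nabla_1^2\Phi(w,x)\|\le \beta_L \|w-w_0\|^{3L-1}/\sqrt{m}$, proved by layer-by-layer induction on $\Phi$. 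The width scaling $m\ge 4\beta_L^2(6\rho^\star)^{6L+4}$ is tuned exactly so that the Hessian term contributes only lower-order perturbations.

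For part (i), I would proceed by induction on $t$, assuming $\|w_s-w_0\|\le C\rho^\star$ for all $s\le t$ and some mild absolute constant $C$. The Hessian estimate above, combined with $|f'|,f''\le 1$, produces an ``almost-convexity'' inequality
\[
\widehat F(w_t)-\widehat F(w^\star)\;\le\;\langle \nabla\widehat F(w_t),\,w_t-w^\star\rangle+\varepsilon_t,
\]
where $\varepsilon_t$ is controlled by $\|w_t-w_0\|^{3L+2}/\sqrt{m}$ and is negligible under \ref{eq:secondcon}. Combining with the descent lemma $\widehat F(w_{t+1})\le\widehat F(w_t)-\tfrac{\eta}{2}\|\nabla\widehat F(w_t)\|^2$ and the identity
\[
\|w_{t+1}-w^\star\|^2 = \|w_t-w^\star\|^2-2\eta\langle\nabla\widehat F(w_t),w_t-w^\star\rangle+\eta^2\|\nabla\widehat F(w_t)\|^2,
\]
telescoping from $0$ to $T-1$ yields $\tfrac{1}{T}\sum_{t<T}[\widehat F(w_t)-\widehat F(w^\star)]\lesssim {\rho^\star}^2/(\eta T)$. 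Monotonicity of $\{\widehat F(w_t)\}$ from the descent lemma together with assumption \ref{eq:firstcon} (which ensures $\widehat F(w^\star)\le {\rho^\star}^2/(\eta T)$ and $\widehat F(w_0)\le {\rho^\star}^2/\eta$) then gives Eq.~\ref{eq:train_rate}. The trajectory bound $\|w_{t+1}-w^\star\|^2\le \|w_0-w^\star\|^2+\eta^2\sum_s\|\nabla\widehat F(w_s)\|^2=O({\rho^\star}^2)$ drops out of the same telescoping, closing the induction by choosing $C$ large enough to absorb the slack.

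For part (ii), I would use a one-sample replacement stability argument. Fix $i$, let $\mathcal{S}^{(i)}$ be $\mathcal{S}$ with its $i$-th sample replaced by an independent copy, and let $w_t^{(i)}$ denote the GD iterate on $\mathcal{S}^{(i)}$. Setting $\delta_t:=\|w_t-w_t^{(i)}\|$, the decomposition
\[
\nabla\widehat F(w_t)-\nabla\widehat F^{(i)}(w_t^{(i)})=\bigl[\nabla\widehat F(w_t)-\nabla\widehat F(w_t^{(i)})\bigr]+\tfrac{1}{n}\bigl[\nabla\ell_i(w_t^{(i)})-\nabla\ell_i'(w_t^{(i)})\bigr]
\]
together with the per-sample smoothness estimate $\|\nabla^2\ell_i\|\le \|\nabla_1\Phi\|^2 f''_{\max} + \|\nabla_1^2\Phi\|\,|f'|\le (G_0+\tfrac14)^2+o(1)$ (by part (i) and the Hessian bound) produces the recursion
\[
\delta_{t+1}\le \bigl(1+\eta(G_0+\tfrac14)^2\bigr)\,\delta_t+\tfrac{2\eta G_0}{n}\,|f'(y_i\Phi(w_t^{(i)},x_i))|.
\]
The self-bounded property of the logistic loss, $|f'(z)|\le f(z)$ globally, converts the perturbation into $\eta G_0\,\widehat F(w_t^{(i)})/n$ on average over the replacement index. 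Unrolling with $\delta_0=0$ and exploiting that $\eta(G_0+\tfrac14)^2$ is small enough to keep the multiplicative factor $O(1)$, I obtain $\E[\delta_T]\lesssim \tfrac{\eta G_0}{n}\E[\sum_{t<T}\widehat F(w_t)]$. Finally, the Lipschitzness of $f\circ\Phi$ along the trajectory (with constant $G_0+o(1)$) converts $\delta_T$ into the generalization gap, the factor $(G_0+\tfrac14)^2$ absorbing both the Lipschitz step and the smoothness factor in the recursion.

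The main obstacle is the layer-by-layer Hessian estimate $\|\nabla_1^2\Phi(w,x)\|\le\beta_L\|w-w_0\|^{3L-1}/\sqrt{m}$ with the precise exponent $3L-1$ that matches the width scaling $m\gtrsim {\rho^\star}^{6L+4}$: this is what makes the ``almost-convex'' slack $\varepsilon_t$ truly $o(1)$ and simultaneously reduces each per-sample Hessian to a convex contribution in the stability recursion. A secondary technical point is exploiting the self-boundedness $|f'|\le f$ inside the stability bound so that the right-hand side is linear in the cumulative training loss $\sum_t\widehat F(w_t)$ rather than in a weaker $\sum_t\sqrt{\widehat F(w_t)}$-type quantity; once these two estimates are in place, the remainder is bookkeeping.
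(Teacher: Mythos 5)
Your skeleton matches the paper's (descent lemma plus an almost-convexity inequality, telescoping, and an induction to keep $\|w_t-w_0\|=O(\rho^\star)$ for part (i); an expansiveness recursion for the stability argument in part (ii)), but it misses the two estimates that make the theorem work at poly-logarithmic width and constant step size, and in both places the step as you describe it would fail. First, in the training-loss argument you bound the almost-convexity slack by $\varepsilon_t\lesssim \beta_L\|w_t-w_0\|^{3L+2}/\sqrt m$ using only $|f'|\le 1$ and call it ``negligible''. Under the width condition \ref{eq:secondcon} this quantity is only a small \emph{constant} (of order $2^{-(3L+2)}$), not $o(1)$; summing over $T$ steps and dividing by $T$ leaves a constant additive error that destroys the $O({\rho^\star}^2/(\eta T))$ rate. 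The paper instead uses the \emph{self-bounded} weak convexity $\lambda_{\min}(\nabla^2\widehat F(w))\ge -C_2\,\widehat F(w)$ (which relies on $|f'(z)|\le f(z)$ for the logistic loss, not merely $|f'|\le1$), so the slack is $\tfrac12 C_2\,\max_{w'\in[w_t,w^\star]}\widehat F(w')\,\|w^\star-w_t\|^2$; the max over the segment is then reduced to endpoint values via the generalized local quasi-convexity proposition (Proposition \ref{prop:GLQCapp}), and the resulting $\tfrac{1}{3T}\sum_t\widehat F(w_t)$ term is absorbed into the left-hand side of the telescoped average. Without this self-bounding structure your $\varepsilon_t$ cannot be absorbed and the claimed rate does not follow unless $m$ grows polynomially with $T$.

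Second, your stability recursion $\delta_{t+1}\le(1+\eta(G_0+\tfrac14)^2)\delta_t+\tfrac{2\eta G_0}{n}|f'(\cdot)|$ uses the full smoothness of the per-sample loss as the expansion coefficient. The descent-lemma condition only guarantees $\eta\le 1/((G_0+\tfrac14)^2+C_2)$, so $\eta(G_0+\tfrac14)^2$ can be of order one, and the product of expansion factors over $T=\Theta(n)$ steps is exponential in $T$; your parenthetical hope that it stays $O(1)$ is not implied by the hypotheses, and forcing it by shrinking $\eta$ to $O(1/T)$ would make the training bound ${\rho^\star}^2/(\eta T)$ constant. The paper's Eq.~\ref{eq:expansion} gives expansiveness $1+\eta C_2\widehat F(\tilde w)$ with $C_2=\beta_L\bar\rho^{3L}/\sqrt m$: the positive-semidefinite part $f''\,\nabla\Phi\nabla\Phi^\top$ of the Hessian does not contribute to expansion under the descent-lemma step size, and only the $O(1/\sqrt m)$ negative-curvature part does, again weighted by $\widehat F$ itself. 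The width condition then forces $\sum_t\eta C_2\widehat F(w_t)\le 0.05$, so the cumulative expansion factor is $e^{0.05}\le 1.1$, which is where the constant $2.2$ comes from. (Minor points: your Hessian exponent should be $3L$, not $3L-1$, to match $m\gtrsim{\rho^\star}^{6L+4}$; and the paper works with leave-one-out rather than replace-one stability, though that choice is not the source of the problem.)
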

In words, the main condition of the theorem is the existence of network weights denoted by $w^\star$ that achieves small training error (Eq. \ref{eq:firstcon}) and its distance from initialization is at most $O(m^{1/(6L+4))})$ as implied by Eq. \ref{eq:secondcon}. Under these conditions, the training loss is controlled solely by $\|w^\star-w_0\|$ and has no explicit dependence on the width or depth of the network. 
As it will be stated in Corollary \ref{cor:NTK}, in the NTK regime with margin $\gamma$ it holds $\|w^\star-w_0\|=O(\log(n)/\gamma)$, leading to the width condition $m=\Omega(\poly(\log(n)/\gamma))$. This is unlike previous results which either required polynomial width (such as \cite{liu2022loss,cao2019generalization}) or led to sub-optimal bounds (e.g., \cite{chen2020much,ji2019polylogarithmic}).

In general, the theorem is valid for any feasible minimizer $w^\star\in\R^p$. Thus, we can choose $w^\star$ with smallest value for $\|w^\star-w_0\|$ to optimize the bounds. With such choice of $w^\star$, the distance the weights obtained by GD travel is also minimized as $\forall t\in[T] : \|w_t-w_0\| = O(\|w^\star-w_0\|)$. Thus, gradient descent tends toward solutions which attain small loss and lie at minimum possible distance from initialization. This is in line with related prior observations in several other works such as \cite{du2019gradient,oymak2019overparameterized}. 



The theorem also shows the sample complexity and iteration complexity of learning deep networks with gradient descent and demonstrates the role of initialization and weight's norms on the test error. Note that by replacing our training loss guarantees, the generalization gap simplifies into:
\bea\label{eq:gensubopt}
\E_{\mathcal{S}}\left[{F}(w_T)-\widehat F(w_T)\right] \le 9\frac{{\rho^\star}^2 (G_0+1/4)^2  }{n}\,  .
\eea 
Hence, the test loss after $T=\Theta(n)$ iterations takes the form of 
\bea\label{eq:tstbund}
\E_{\mathcal{S}}\Big[{F}(w_T)\Big] =O\left(\frac{\Big\|w^\star-w_0\Big\|^2 G_0^2}{n} + \frac{\Big\|w^\star-w_0\Big\|^2}{\eta n}\right),
\eea 
where the first term is the generalization error and the second term is the training loss. Remarkably, Eq. \ref{eq:tstbund} shows the tight correlation between the two terms, as the generalization gap is virtually the optimization error scaled by the squared Lipschitz constant $G_0^2$. This is indeed the consequence of Eq. \ref{eq:thm2genopt} which bounds the generalization gap based on the cumulative optimization error. 
\par
At a high-level, the test loss essentially depends on two quantities: (i) the Euclidean distance between the target weights and the initialization and (ii) the Lipschitz parameter of network at initialization. Due to the algorithmic-dependent nature of our generalization bounds and unlike the prior bounds in literature, the bound in Eq. \ref{eq:tstbund} captures the role of initialization on the test error: smaller deviations from initialization lead to smaller test error bounds and in particular the bound approaches zero as $\rho^\star$ goes to zero. As discussed earlier, gradient descent favors such solutions with small deviations from initialization.
In addition to  ``distance from initialization'', the test error bounds also depend on the squared Lipschitz parameter of the network. For standard Gaussian initialization, it can be shown (cf. Section \ref{sec:gradnorm}) that $G_0\lesssim e^{O(L)}$ which introduces an exponential dependence on depth to the generalization bound. We remark that this dependence also appears in the corresponding bounds derived via uniform convergence and Rademacher complexity (e.g., \cite{bartlett2017spectrally,golowich2018size,chen2020much}) through the term $\prod_{i=1}^L\|W_i\|_2$. 

An interesting feature of our approach is the algorithmic dependent bound in Eq. \ref{eq:thm2genopt}. This bound is generally tighter than the bound in Eq. \ref{eq:gensubopt}. With the descent lemma condition on the step-size (c.f. Lemma \ref{lem:des}) it holds that $\eta<1/(G_0^2+1/4)$ which simplifies the bound into:
\bea\label{eq:stabsim}
\E_{\mathcal{S}}\left[{F}(w_T)-\widehat F(w_T)\right] \le \frac{2.2}{n} \,\E_{\mathcal{S}}\left[ \sum_{t=0}^{T-1}\Fh(w_t)\right].
\eea
Hence, we have a bound which only depends on the training performance and the number of training samples. In our experiments in Section \ref{sec:exp}, we compute this bound for real-world data and compare it with the empirical results for generalization and test loss.  
\par

\begin{remark}
Our analysis relies on the recent progress in characterising the spectral norm of the deep net's Hessian during GD updates \cite{liu2020linearity,banerjee2022restricted}. In particular, \cite{liu2020linearity} proved that with standard Gaussian initialization, the model's Hessian is bounded with high probability by $\|\nabla^2 \Phi(w,x)\| = O(\frac{R^{3L}}{\sqrt{m}})$ if $\|w-w_0\|\le R$. These guarantees can also be used to study the convergence rate of deep networks trained by quadratic loss as done by \cite{liu2020linearity,liu2022loss} but their approach leads to excessively large width conditions. In contrast, here we consider classification tasks with an improved poly-logarithmic width requirement and also study the generalization performance.

Recently, the algorithmic stability has been employed for two-layer neural nets in \cite{taheri2024generalization,richards2021learning}which is an improved analysis of the stability-based approach typically used for convex objectives in \cite{bousquet2002stability,hardt2016train,lei2020fine}. Here, we essentially extend the stability analysis to deep networks. Compared to the two-layer nets, here in every iterate of gradient descent, the Hessian's norm guarantees depend on the network's weights. In particular, the analysis has to take into account that both $\|w_t-w_0\|$ and $\|w_t-w^\star\|$ remain small during GD updates. This is necessary in order to ensure the Hessian's norm guarantees and the approximate quasi convexity property hold during all GD iterates. We do this by an induction based argument which bounds these terms based on the fixed quantity $\|w^\star-w_0\|$, which is guaranteed to be bounded based on width by assumption. 
   
\end{remark}

\subsubsection{Specializing to the NTK-separablity condition} The results in the last section can be specifically applied to a class of data distributions that includes the XOR distribution which will be discussed more through the rest of the paper. Before stating our result in the corollary, we state the neural tangent kernel (NTK) separability assumption.
\begin{assumption}[NTK-separability \cite{nitanda2019gradient}]\label{ass:ntkdata}
Assume the tangent kernel of the model at initialization separates the data with margin $\gamma$ i.e., for a unit-norm vector $w\in\R^p$ it holds for all $i \in [n]:$
$y_i\left\langle\nabla \Phi\left(w_0, x_i\right), w\right\rangle \geq \gamma$.
\end{assumption}
In words, the above assumption implies that the features learned by the gradient at initialization can be linearly separated by some weights $w$. This assumption is commonly used in deep learning theory literature for studying classification tasks \cite{chen2020much,ji2019polylogarithmic}.
\begin{corollary}[NTK results]\label{cor:NTK}
Consider the same setup as Theorem \ref{thm:train-test}, let the loss function be logistic loss and let Assumption \ref{ass:ntkdata} hold. Define constant $B>0$ that bounds the model's output at initialization i.e., $\forall i\in[n]: |\Phi(w_0,x_i)| < B$. Assume the width is large enough such that 
$m\ge \beta_L^2 \left(\frac{2B+\log(1/\eps)}{\gamma}\right)^{6L+4}.$
Then there exists $w^\star\in\R^p$ such that $F(w^\star)\le \eps$ and $w^\star$ lies at a bounded Euclidean distance from initialization such that $\|w^\star-w_0\|\le \frac{1}{\gamma}(2B+\log(1/\eps))$.
\end{corollary}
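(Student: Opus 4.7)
The plan is to construct the target weights explicitly by moving from initialization along the direction guaranteed by the NTK margin assumption. Let $w\in\R^p$ be the unit-norm separating vector from Assumption \ref{ass:ntkdata}, set $\alpha := \frac{1}{\gamma}(2B+\log(1/\eps))$, and define
\[
w^\star := w_0 + \alpha\, w.
\]
This choice immediately gives $\|w^\star-w_0\| = \alpha$, which matches the claimed Euclidean distance bound.

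To bound the loss at $w^\star$, I would apply a second-order Taylor expansion of $\Phi(\cdot, x_i)$ around $w_0$: for every training point there exists $\xi_i\in[w_0, w^\star]$ with
\[
\Phi(w^\star, x_i) = \Phi(w_0, x_i) + \alpha\,\langle \nabla\Phi(w_0,x_i), w\rangle + \tfrac{\alpha^2}{2}\,w^\top \nabla^2\Phi(\xi_i, x_i)\,w.
\]
Multiplying through by $y_i\in\{\pm 1\}$, the three terms are controlled by (i) the output-at-initialization bound $|\Phi(w_0,x_i)| < B$, giving $y_i\Phi(w_0,x_i)\geq -B$; (ii) the NTK margin, giving $y_i\langle \nabla\Phi(w_0,x_i), w\rangle \geq \gamma$; and (iii) the Hessian spectral estimate $\|\nabla^2\Phi(\xi, x_i)\| \leq \beta_L \|\xi-w_0\|^{3L}/\sqrt{m}$ recalled in the Remark (from \cite{liu2020linearity,banerjee2022restricted}), which, since $\|\xi_i-w_0\|\leq \alpha$, yields
\[
y_i\,\Phi(w^\star, x_i) \;\geq\; -B + \alpha\gamma - \tfrac{\beta_L\,\alpha^{3L+2}}{2\sqrt m}.
\]

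The width hypothesis $m \geq \beta_L^2\,\alpha^{6L+4}$ renders the Hessian remainder at most $\tfrac12$, and substituting $\alpha\gamma = 2B+\log(1/\eps)$ then gives $y_i\Phi(w^\star, x_i)\geq B+\log(1/\eps)-\tfrac12 \geq \log(1/\eps)$ (this is exactly why the construction uses $2B$ rather than $B$ in the numerator of $\alpha$, so that the $-B$ from the initial output and the $\tfrac12$ from the Hessian slack are absorbed). The standard tail bound $\log(1+e^{-t})\leq e^{-t}$ for logistic loss then yields $f(y_i\Phi(w^\star, x_i))\leq \eps$ for each $i$, hence $\widehat F(w^\star)\leq \eps$; the identical computation applies pointwise to any $(x,y)$ satisfying the NTK margin inequality, which is the sense in which the statement $F(w^\star)\leq \eps$ is obtained.

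The only subtle step is bookkeeping the constants: verifying that the Hessian spectral bound in \cite{liu2020linearity,banerjee2022restricted} can be stated with the same constant $\beta_L$ that appears in Theorem \ref{thm:train-test}, and checking that the $2B$ (rather than $B$) in the definition of $\alpha$ leaves enough slack to dominate both the $-B$ coming from $\Phi(w_0, x_i)$ and the $\tfrac12$ residual from the quadratic term. Everything else is a direct NTK-style linearization made quantitative by the cited Hessian estimate.
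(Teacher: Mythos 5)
Your proposal is correct and follows essentially the same route as the paper: the same choice $w^\star=w_0+\frac{w}{\gamma}(2B+\log(1/\eps))$, the same second-order Taylor expansion controlled by the margin, the output bound $B$, and the Hessian spectral estimate under the stated width condition, followed by the logistic tail bound $\log(1+e^{-t})\le e^{-t}$. The only cosmetic difference is in absorbing the quadratic remainder (you bound it by $\tfrac12$ where the paper bounds it by $1$ and assumes $B\ge 1$ without loss of generality), and your remark about the train-versus-population loss reading of the conclusion matches how the paper itself treats it.
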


To interpret this result, we apply this result to Theorem \ref{thm:train-test} and fix $\eps=1/T$. First note that the output of network at initialization is constant with high-probability over initialization (e.g., see \cite[Lemma F.4]{liu2020linearity}, \cite[Lemma 4.4]{cao2019generalization}) which implies $B=\tilde O(1)$. Overall, with the stopping condition $T=\Theta(n)$ and recalling that $G_0\le e^{O(L)}$, Corollary \ref{cor:NTK} yields the expected test error rate of order $$\widetilde O\left(\frac{e^{O(L)} }{\gamma^2 n} + \frac{1}{\gamma^2 T}\right),$$
under the condition that $m= \Omega(\poly(\log(n)) \cdot\beta_L/\gamma^{6L+4})$. We remark the bound does not have any explicit dependence on $m$. This in itself is not surprising as one intuitively expects the bound not to scale unfavorably with width. Although recent works have derived width independent generalization bounds for two-layer networks \cite{ji2019polylogarithmic,telgarsky2023feature,taheri2024generalization}, we are not aware of any prior work proving width-independent bounds for learning multi-layer networks with GD. As discussed earlier in the introduction, the bound derived in the closely related work \cite{chen2020much} scales unfavorably with $m$ as it grows at the rate $\sqrt{m/n}$. In fact, the authors refer to deriving width-independent bounds as an open problem in \cite[Sec 3.1]{chen2020much}.


\subsubsection{Consistency of GD with noisy data}

The results of the last section mainly apply to data settings when the network can find the optimal solution within a bounded  distance from initialization that depends on the network's width. This setting was specially tailored to the noiseless case where achieving vanishing test loss was possible. We discuss next the case of learning deep nets by noisy data and show that achieving optimal test loss might be feasible in this setting.

\begin{theorem}[Test error for noisy data]\label{thm:generalization}
Consider the same setup and notation as Theorem \ref{thm:train-test} and assume the width of the network satisfies $m\ge\beta_L^2 n^{3L+3}$ and the step-size satisfies the conditions of the descent lemma. Then, with high probability over initialization the expected test loss at iteration $T=\sqrt{n}$ is bounded as:
\bea\label{eq:thm3boundmain}
\E_{\mathcal{S}}\Big[F(w_T)\Big] \le\left( 1+ \frac{4}{\sqrt{n}}\right) \left( F(w^\star) + \frac{{\rho^\star}^2}{\eta \sqrt{n}} + \frac{{\rho^\star}^2}{n \sqrt{n}}  + \frac{1}{\sqrt{n}}\right),
\eea
where $\rho^\star:=\|w^\star-w_0\|$ and $w^\star$ is the minimizer of the population loss i.e., $w^\star = \arg\min_{w\in\R^p} F(w)$. 
\end{theorem}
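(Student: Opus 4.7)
The plan is to adapt the combined optimization-plus-stability argument used in Theorem \ref{thm:train-test} to the noisy regime by replacing the (near-)interpolation hypothesis $\widehat F(w^\star)\approx 0$ with an approximate-convex optimization estimate, and to exploit the fact that $w^\star$ is the deterministic population minimizer so that $\E_{\mathcal S}[\widehat F(w^\star)] = F(w^\star)$.

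I would first re-run the inductive trajectory-control argument of Theorem \ref{thm:train-test}, showing $\|w_t - w_0\|=O(\rho^\star)$ throughout $t\in[T]$. The Hessian bound $\|\nabla^2 \Phi(w,x)\| \le \beta_L R^{3L}/\sqrt m$ of \cite{liu2020linearity,banerjee2022restricted}, together with the width requirement $m \ge \beta_L^2 n^{3L+3}$, is strong enough to keep the nonlinear correction negligible even when $R$ is allowed to grow polynomially with $n$---which is necessary here because $\rho^\star \ge \sqrt{\eta T\,\widehat F(w^\star)}$ now scales with $\sqrt{\sqrt n\, F(w^\star)}$ rather than being a constant as in the noiseless case. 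With this in hand, convexity of the logistic loss $f$ together with near-linearity of $\Phi$ along the trajectory yields approximate convexity of $\widehat F$ on the ball.

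Second, I would apply standard smooth-convex-optimization estimates (using the descent lemma) to obtain a last-iterate bound $\widehat F(w_T) - \widehat F(w^\star) \le {\rho^\star}^2/(2\eta T)$ and a cumulative regret bound $\sum_{t=0}^{T-1}\bigl(\widehat F(w_t) - \widehat F(w^\star)\bigr) \le {\rho^\star}^2/(2\eta)$, up to lower-order width-controlled error. Taking expectation over the training set and using $\E_{\mathcal S}[\widehat F(w^\star)] = F(w^\star)$ gives $\E_{\mathcal S}[\widehat F(w_T)] \le F(w^\star) + {\rho^\star}^2/(2\eta T)$ and $\E_{\mathcal S}\bigl[\sum_{t=0}^{T-1}\widehat F(w_t)\bigr] \le T\, F(w^\star) + {\rho^\star}^2/(2\eta)$. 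Feeding the latter into the simplified stability bound Eq. \ref{eq:stabsim} yields
\begin{align*}
\E_{\mathcal S}\bigl[F(w_T) - \widehat F(w_T)\bigr] \le \frac{2.2}{n}\left(T F(w^\star) + \frac{{\rho^\star}^2}{2\eta}\right),
\end{align*}
and adding $\E_{\mathcal S}[\widehat F(w_T)]$ and substituting $T = \sqrt n$ then collects the terms into the stated multiplicative factor $1 + O(1/\sqrt n)$ on $F(w^\star)$ together with the claimed additive pieces; the residual $O(1/\sqrt n)$ absorbs the concentration of $\widehat F(w^\star)$ around $F(w^\star)$, obtained via a Hoeffding estimate on the bounded per-sample losses.

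The main obstacle lies in the trajectory control of Step 1: in the noisy regime the potential $\|w_t - w^\star\|^2$ is no longer contractive under GD because $\widehat F(w^\star) \ne 0$, so the induction must telescope the descent-lemma inequality across all $T = \sqrt n$ iterations while the near-linearity perturbation accumulates. Keeping this perturbation below the descent amount at every iterate is precisely what forces the polynomial width scaling $m \ge \beta_L^2 n^{3L+3}$, which is the key technical departure from Theorem \ref{thm:train-test} and which sets the trade-off between network width and iteration count that appears in the final bound.
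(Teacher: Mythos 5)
Your overall architecture matches the paper's: an approximate-convexity optimization bound giving $\widehat F(w_T)$ and $\sum_t\widehat F(w_t)$ in terms of $F(w^\star)$ and ${\rho^\star}^2/(\eta T)$, a leave-one-out stability bound proportional to the cumulative training loss, the observation that $\E_{\mathcal S}[\widehat F(w^\star)]=F(w^\star)$ because $w^\star$ is deterministic, and the substitution $T=\sqrt n$. (The closing Hoeffding step you mention is unnecessary for exactly this reason.)

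The genuine gap is in your Step 1. You propose to ``re-run the inductive trajectory-control argument of Theorem \ref{thm:train-test}, showing $\|w_t-w_0\|=O(\rho^\star)$,'' but that induction is powered by the hypothesis ${\rho^\star}^2\ge \eta T\,\widehat F(w^\star)$, which is exactly what breaks in the noisy regime: $w^\star$ is now the fixed population minimizer with $F(w^\star)$ bounded away from zero, so $\eta T\,\widehat F(w^\star)=\Theta(\eta\sqrt n)$ grows with $n$ while $\rho^\star=\|w^\star-w_0\|$ does not, and the potential $\|w_t-w^\star\|^2$ is no longer approximately contractive. Your ``main obstacle'' paragraph names this tension but does not resolve it. The paper resolves it by abandoning the $O(\rho^\star)$ radius altogether and using the crude Lipschitz bound $\|w_t-w_0\|\le\eta\sum_{k<t}\|\nabla\widehat F(w_k)\|\le\eta t C_1$, then choosing the width $m\gtrsim \beta_L^2 n^{3L+3}(\eta C_1)^{6L}$ precisely so that $\beta_L(\eta C_1 T)^{3L}/\sqrt m\lesssim 1/(n\sqrt n)$ at $T=\sqrt n$; this is also what generates the residual terms $\eta^2C_1^2 n/(n\sqrt n)\le 1/\sqrt n$ and ${\rho^\star}^2/(n\sqrt n)$ in Eq.~\ref{eq:thm3boundmain}, which your derivation would not produce. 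Relatedly, you should not invoke Eq.~\ref{eq:stabsim} as a black box, since it was proved under Theorem \ref{thm:train-test}'s interpolation-type hypotheses (in particular the condition ensuring the exponential expansiveness factor stays near $1$); the stability recursion must be re-unrolled with the factor $(1+\eta C_2)^T\le 2$ guaranteed by the enlarged radius $\eta T C_1$ and the polynomial width, which is why the paper's constant is $4$ rather than $2.2$. Once Step 1 is replaced by this cruder but robust radius bound, the rest of your plan goes through as written.
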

The result above shows that GD reaches optimal level given that $\sqrt{n}\gg\|w^\star-w_0\|^2$; as for large $n$, it leads to the simplified expression 
$$\E_{\mathcal{S}}\Big[F(w_T)\Big] - F(w^\star)= O\left(\frac{\Big\|w^\star-w_0\Big\|^2 +F(w^\star)}{\sqrt{n}}\right).$$
Thus, even in non-interpolating regime, GD can still achieve the optimal solution of over-parameterized deep networks.  

The bound in Eq. \ref{eq:thm3boundmain} is derived via a stability argument which bounds the generalization gap based on the training performance. However, contrary to the conditions of Theorem \ref{thm:train-test} here we do not have the interpolation condition as $F
(w^\star)$ is not vanishing.  This comes at the expense of a larger width condition where the width is polynomial in $n$ whereas poly-logarithmic width was sufficient in Theorem \ref{thm:train-test}. The condition on early stopping further guarantees that the test loss reach the Bayes error given $n$ is sufficiently large. It is worth noting that the setting above still is operating almost in the NTK regime. This can be verified by the observation that for the bound to be meaningful it should hold that $\|w^\star-w_0\| \ll \sqrt{n}\lesssim m^{O(1/L)}$, as per the theorem's width condition. Finally, we remark that the result of Thm \ref{thm:generalization} nicely connects to recent empirical and theoretical results \cite{li2020gradient} which show that with early-stopping, GD can find a good solution for clustered data with label noise.

\subsection{Overcoming NTK limitations for XOR dataset}\label{sec:xor}
Next, we show the previous NTK guarantees can be improved by using large step-sizes. We consider the stylized set-up of learning the $d-$dimensional XOR data distribution. Consider one-hidden layer network with quadratic activation where $x\in\R^d,w_i\in \R^d$:
\begin{align*}
    \Phi(w,x) = \frac{1}{2m}\sum_{i=1}^m a_i (x^\top w_i)^2.
\end{align*}
In the above, $a_i \in \{\pm 1\}$ are fixed during training and satisfy $\sum_i a_i=0$ and only the first layer weights are trained. For initialization of first layer's weights we have $\forall i\le m,j\le d: w_{0,ij}\simiid N(0,\frac{1}{d})$. The data points $(x,y)\in\{\pm 1\}^d\times\{\pm 1\}$ are uniformly drawn from the resulting $d$-dimensional distribution of $2^d$ points where the labels are determined as $y=x(1)\cdot x(2).$ For the result in the next theorem, we assume the loss function is the linear loss $f(t) = -t$ and consider mini-batch SGD with the update rule $w^{t+1} = w^t - \eta \widehat{F}(w^t)$, where at each step, $n$ data points are drawn i.i.d. from the distribution to form $\widehat{F}(\cdot)$.
The next theorem shows the computational and sample complexities of learning this dataset for the aforementioned setup.
\begin{theorem}[Improved guarantees for learning XOR]\label{thm:xor}
Assume mini-batch SGD with batch-size $n\ge d\cdot\log^{14}(d)$ and step-size $\eta = m$. Then, after $T=\log(d)$ iterations the test accuracy satisfies with probability at least $1-e^{\log(m)-\log^2(d)}-e^{-\frac{m}{16}}-o_d(1)$ over the randomness of initialization and data sampling,
\begin{align}\label{eq:texor}
\E_{x,y} \Big[\one_{\{y\Phi(w^T,x)>0\}}\Big]\ge 1-o_d(1).
\end{align}
\end{theorem}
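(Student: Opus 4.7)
My plan is to exploit the fact that, with step-size $\eta = m$, the SGD update on each neuron decouples into a clean left multiplication by the empirical label-weighted second-moment matrix. Direct calculation gives $\nabla_{w_i}\widehat F(w) = -\frac{a_i}{m} M_t w_i$ where $M_t := \frac{1}{n}\sum_{j=1}^n y_{t,j} x_{t,j} x_{t,j}^\top$ is formed from the fresh mini-batch, so
\begin{align*}
w_i^{t+1} = (I + a_i M_t)\, w_i^t,\qquad w_i^T = \prod_{t=0}^{T-1}(I + a_i M_t)\, w_i^0.
\end{align*}
The population target $M^\star := \E[y x x^\top] = e_1 e_2^\top + e_2 e_1^\top$ decomposes as $M^\star = u_+ u_+^\top - u_- u_-^\top$ with $u_\pm := (e_1\pm e_2)/\sqrt{2}$, so $I + a_i M^\star$ has eigenvalue $2$ on the ``signal'' direction $u_{a_i}$ (defined as $u_+$ when $a_i = +1$, else $u_-$), eigenvalue $0$ on $u_{-a_i}$, and $1$ on the $(d-2)$-dimensional complement $P_\perp$. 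Thus the idealized dynamics collapse to $(I + a_i M^\star)^T w_i^0 = 2^T(u_{a_i}^\top w_i^0)\, u_{a_i} + P_\perp w_i^0$.

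Next, I handle the perturbation $E_t := M_t - M^\star$. Matrix Bernstein, using $\|y x x^\top\|_2 = d$ and $\E[(y x x^\top)^2] = d\, I$, yields $\|E_t\|_2 \lesssim \sqrt{d\log d/n} + d\log d/n \lesssim \log^{-13/2}(d)$ for every $t\in[T]$ on a union-bounded event of probability $1 - o_d(1)$. To pass from this spectral bound to sharp per-coordinate control of $w_i^T$, I track the scalars $\alpha_t^\pm := u_\pm^\top w_i^t$ and the residual $r_t := P_\perp w_i^t$. Using $M^\star u_\pm = \pm u_\pm$ and $P_\perp M^\star = 0$, these obey
\begin{align*}
\alpha_{t+1}^{a_i} = 2\alpha_t^{a_i} + \xi_t^+,\qquad \alpha_{t+1}^{-a_i} = \xi_t^-,\qquad r_{t+1} = r_t + a_i P_\perp E_t w_i^t,
\end{align*}
where $\xi_t^\pm := a_i u_\pm^\top E_t w_i^t$ are martingale differences with conditional variance bounded by $2\|w_i^t\|^2/n$ (via $\Var(y(u_\pm^\top x)(x^\top w_i^t)) \le 2\|w_i^t\|^2$). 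Solving the signal recursion gives $\alpha_T^{a_i} = 2^T \beta_i + \sum_{t=0}^{T-1} 2^{T-1-t}\xi_t^+$ with $\beta_i := u_{a_i}^\top w_i^0 \sim \mathcal N(0,1/d)$. An induction using the good event for $\|E_t\|_2$ bounds $\|w_i^t\| \lesssim \max(1,\, 2^t/\sqrt d)$; plugging this into a Freedman/Azuma estimate for the weighted martingale sum yields $|\alpha_T^{a_i} - 2^T\beta_i| = o(\sqrt d)$ and, for any fixed $x$, $|\langle x, r_T - r_0\rangle| = o(\sqrt d)$, each with probability at least $1 - e^{-\log^2 d}$. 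A union bound over the $m$ neurons produces the $1 - e^{\log m - \log^2 d}$ factor in the statement.

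For the final step I would exploit the XOR structure: $u_+^\top x = (x(1) + x(2))/\sqrt 2$ equals $\pm\sqrt 2$ precisely when $y = +1$ and $0$ when $y = -1$; the roles swap for $u_-^\top x$. Consequently, neurons with $a_i = y$ contribute $(x^\top w_i^T)^2 \approx 2\cdot 4^T \beta_i^2$, whereas neurons with $a_i = -y$ contribute only $(x^\top r_T)^2 = o(d)$ (sub-Gaussian via Hoeffding on $x$). Plugging into the model,
\begin{align*}
\Phi(w^T, x) \;\approx\; y\cdot \frac{4^T}{m}\sum_{i:\, a_i = y} \beta_i^2 \;+\; \text{(smaller cross/residual terms)}.
\end{align*}
A chi-squared lower tail gives $\sum_{i: a_i = y}\beta_i^2 \ge m/(4d)$ with probability at least $1 - e^{-m/16}$, while Hoeffding's bound $|\langle x, r_T\rangle| \lesssim \|r_T\|\sqrt{\log(1/\delta)}$ for Rademacher $x$ controls the residual contribution on a $1 - o_d(1)$ fraction of test points. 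Combining these gives $y\,\Phi(w^T,x) > 0$ with the stated probability.

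\textbf{Main obstacle.} The critical tension is that the signal is amplified by $2^T = d$ while the operator-norm bound on each $E_t$ is only $\log^{-O(1)}(d)$, so a worst-case accumulation of the perturbations over $T = \log_2 d$ steps would swamp the $\sqrt d$-scale signal. Circumventing this requires replacing operator-norm propagation with Freedman-type martingale concentration along the two fixed scalar directions $u_\pm$ and along any fixed test direction $x$, combined with the inductive norm bound $\|w_i^t\| \lesssim \max(1,\,2^t/\sqrt d)$ that prevents the iterate's norm from outpacing the signal. Carrying this induction simultaneously with the matrix-concentration good event and the Gaussian-initialization event is the most delicate bookkeeping; the remaining chi-squared and sub-Gaussian tail computations are routine.
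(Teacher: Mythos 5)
Your proposal is correct and reaches the same signal--versus--noise conclusion as the paper (signal $\Theta(2^t/\sqrt d)$ along the XOR directions, noise $o(\sqrt d)$ after $T=\log d$ steps, then a $\chi^2$ lower tail over the $a_i=y$ neurons and Hoeffding over the test point), but the technical route is genuinely different. The paper works coordinate by coordinate: it defines a reference trajectory $\E_x[w_i^t]$ by running GD on the population loss, solves the resulting two-dimensional linear recursion for coordinates $1,2$ explicitly, and controls the deviation $v^t=\E_x[w_i^t]-w_i^t$ by splitting the gradient error into a ``fluctuation at the reference point'' term and a ``Lipschitz propagation'' term, each bounded by Hoeffding, and then closing the loop with an elementary lemma for recursions of the form $v_{t+1}\le \beta\sum_{k\le t}v_k+\sum_{k\le t}\gamma_k$. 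You instead write the dynamics as the product $w_i^T=\prod_t(I+a_iM_t)w_i^0$, diagonalize $M^\star=u_+u_+^\top-u_-u_-^\top$ so that the $2^T$ amplification, the annihilation of $u_{-a_i}$, and the inert orthogonal complement are all immediate, and control $E_t=M_t-M^\star$ by matrix Bernstein plus Freedman-type martingale concentration along the two fixed eigendirections. Your identification of the main obstacle is exactly right and is the same mechanism the paper relies on implicitly: worst-case operator-norm propagation of $\|E_t\|\lesssim\log^{-13/2}(d)$ through the $2^{T-1-t}$ amplification fails at early times (when $\|w_i^t\|=\Theta(1)$ is dominated by the orthogonal component, giving an error of order $d/\mathrm{poly}\log(d)\gg\sqrt d$ in the signal direction), so one must use that the minibatch gradient is conditionally unbiased and that the projection of the per-step error onto a \emph{fixed} direction has standard deviation only $O(\|w_i^t\|/\sqrt n)$, together with the inductive norm bound $\|w_i^t\|\lesssim\max(1,2^t/\sqrt d)$; operator-norm propagation then suffices for the residual $r_T$ alone. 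One presentational caveat: your interim claim ``for any fixed $x$, $|\langle x,r_T-r_0\rangle|=o(\sqrt d)$'' cannot be applied to the random test point directly (no affordable union bound over $\{\pm1\}^d$); your final step correctly switches to bounding $\|r_T\|$ and applying Hoeffding over the test point, which is also what the paper does and is why the conclusion holds only for a $1-o_d(1)$ fraction of test points. Net assessment: your approach is more modular and makes the spectral structure transparent at the cost of invoking matrix Bernstein and Freedman; the paper's is more elementary (Hoeffding only) and tracks the explicit constants entering the stated failure probability, but obscures the eigenstructure behind coordinate bookkeeping.
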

Hence, with logarithmic number of SGD steps we use $\widetilde O(d)$ samples in total to reach almost perfect test accuracy. We remark based on the guarantees of Theorem \ref{thm:xor}, the network's width can be constant and at most must be polynomial in $d$ in order for Eq. \ref{eq:texor} to hold with high probability. This aligns with our experiments in Fig. \ref{fig:4}, demonstrating that the network's width can be independent of data dimension as a network of constant width (where $m=20$) suffices for learning arbitrary high-dimensional XOR data. We also note the considerable gains in iteration, sample and computational complexities by Theorem \ref{thm:xor} resulting from escaping the NTK regime with our step-size selection. Contrary to Theorem \ref{thm:train-test} which required small step-size in accordance with the descent lemma, here the step-size is proportional to width. For a comparison with recent results for this dataset we refer to Table \ref{tab:xor}. To the best of our knowledge these are the best complexities on iteration and network width for this setup. 

\begin{remark}
The proof of Theorem \ref{thm:xor} (provided in Appendix \ref{sec:proof_xor}) is nuanced and involves computing expected weights and their corresponding error terms due to SGD sampling noise for each parameter of the network. It is then showed that signal strength (i.e., the strength of important features) grows as $\frac{2^t}{\sqrt{d}}$ whereas the error terms due to sampling noise and initialization grow at most at the rate $(1+\frac{1}{\poly(\log(d))})^t \frac{\sqrt{d}}{\poly(\log(d))} + \frac{2^{t}t}{\sqrt{d}\cdot\poly(\log(d))}$. Therefore, after $\log(d)$ steps the noise strength reaches $\frac{\sqrt{d}}{\poly(\log(d))}$ whereas the signal's magnitude is at least $\sqrt{d}$, letting the signal outgrow the noise and leading to the network classifying every point correctly. 
\end{remark}
\section{Numerical Results}\label{sec:exp}
\paragraph{Experiments on learning under NTK with small step-size.}
In this section, we present numerical results on the behavior of the generalization bound derived in Theorem \ref{thm:train-test} for real-world data (FashionMNIST and MNIST datasets) and compare it with the empirical generalization gap.  
\par
For demonstrating our theoretical results, we are interested in the algorithmic-based generalization bound (Eq. \ref{eq:stabsim}) derived as,
\bea\label{eq:stabsim2}
\E_{\mathcal{S}}\Big[{F}(w_T)&-\widehat F(w_T)\Big] \le \frac{2.2}{n} \,\E_{\mathcal{S}}\left[ \sum_{t=0}^{T-1}\Fh(w_t)\right].
\eea
We find it helpful to note that the bound above requires the width condition in the theorem (i.e., $m=\Omega(\|w^\star-w_0\|^{6L+4})$) to be valid. However, verifying this condition is not feasible in general. Moreover, the bound is valid for \emph{expected} generalization gap where the expectation is taken over data sampling. Therefore computing the exact values of both sides of the above inequality is computationally exhaustive. For our experiments we consider one realization to estimate these values. 
Due to both of these reasons, the theoretical test loss and generalization loss that we present in this section should only be taken as approximations of the general behavior of the bound and not as an actual verified bound on the generalization. However, in order to reduce these impacts we conduct several experiments with varying network's width. 
\begin{figure}
\centering
\includegraphics[width=1.35in, height = 1.08in]{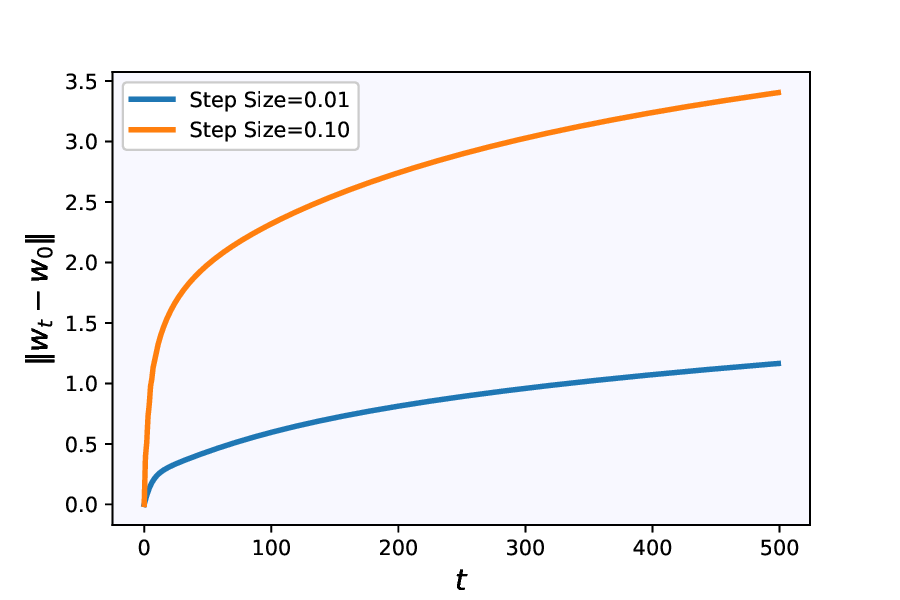}
\includegraphics[width=1.35in, height = 1.08in]{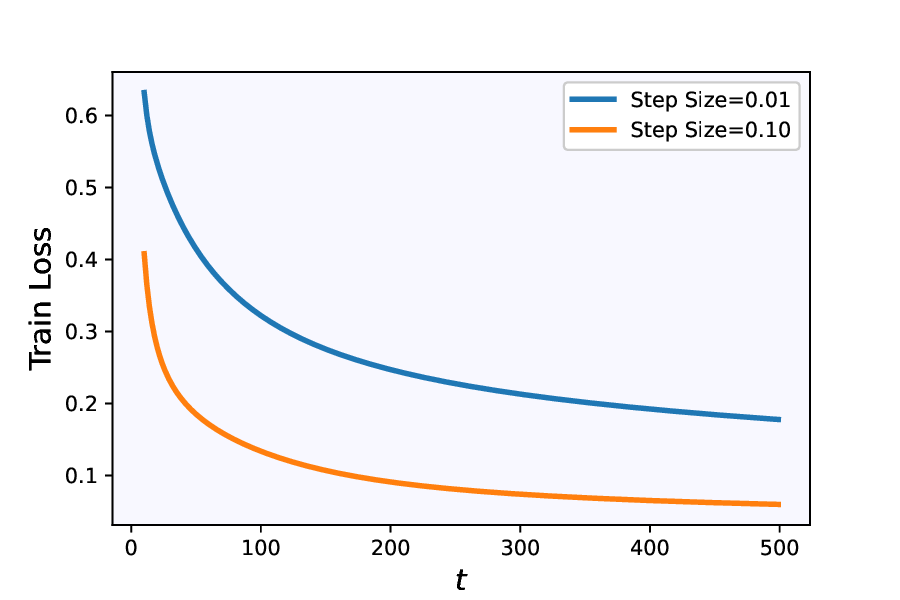}
\includegraphics[width=1.35in, height = 1.08in]{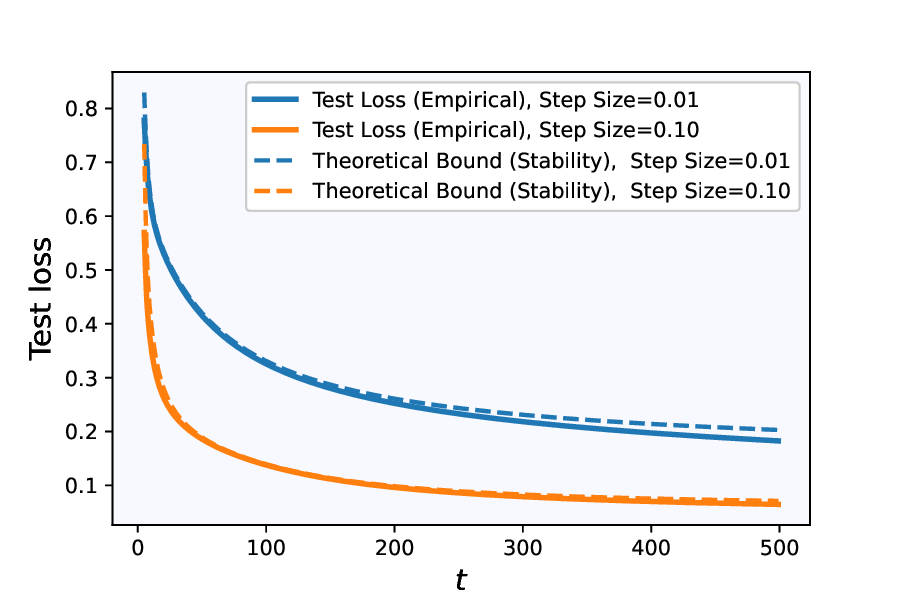}
\includegraphics[width=1.35in, height = 1.08in]{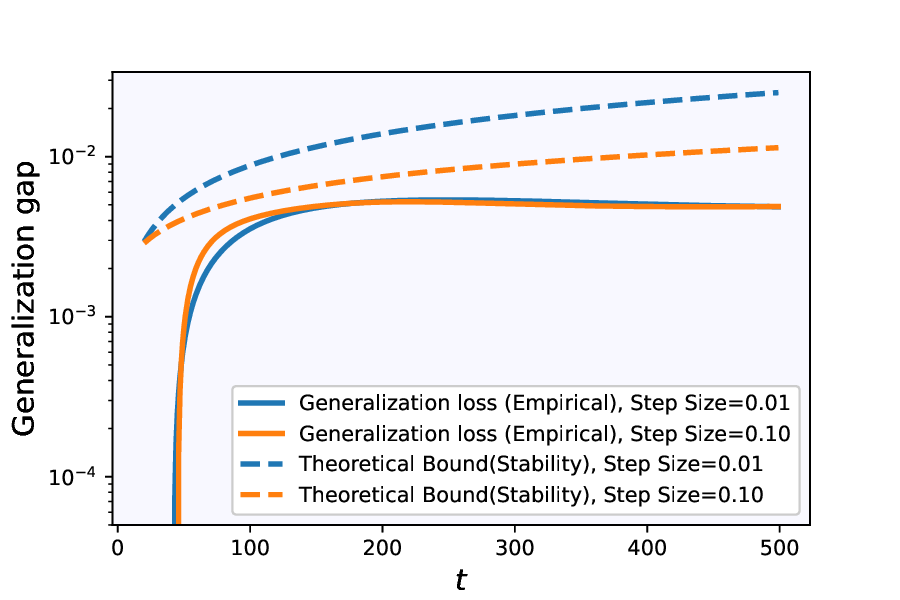}
    \caption{Iteration-based distance from initialization ($\|w_t-w_0\|$), training loss, test loss and generalization gap (i.e., test loss -- train loss) for training a two hidden-layer neural network with \emph{FashionMNIST} dataset and two choices of step-size. Here $n=12\times 10^3, m=500,$ and total number of parameters $ p \approx 6 \times 10^5$.}
    \label{fig:1}
    \centering
    \vspace{.2in}
   \includegraphics[width=1.35in, height = 1.08in]{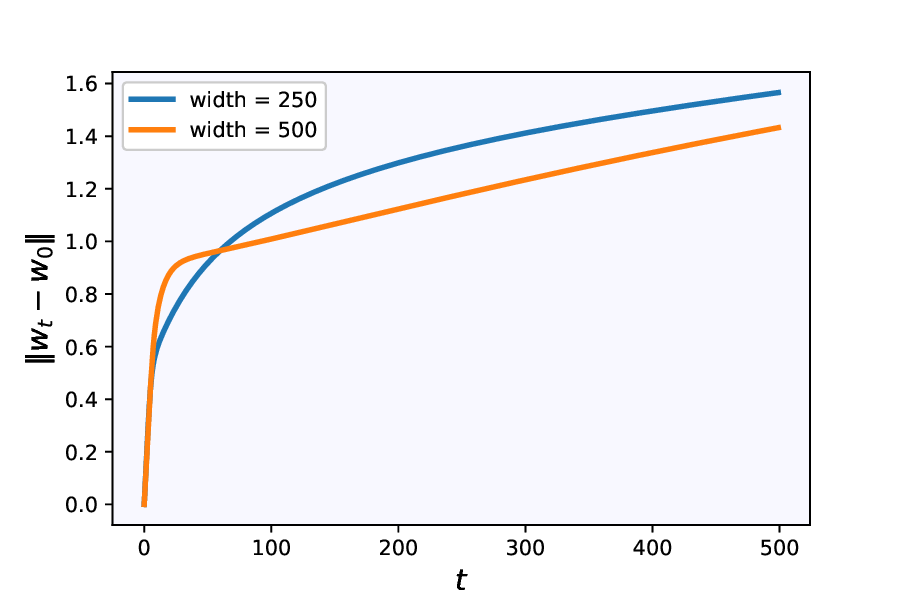}
\includegraphics[width=1.35in, height = 1.08in]{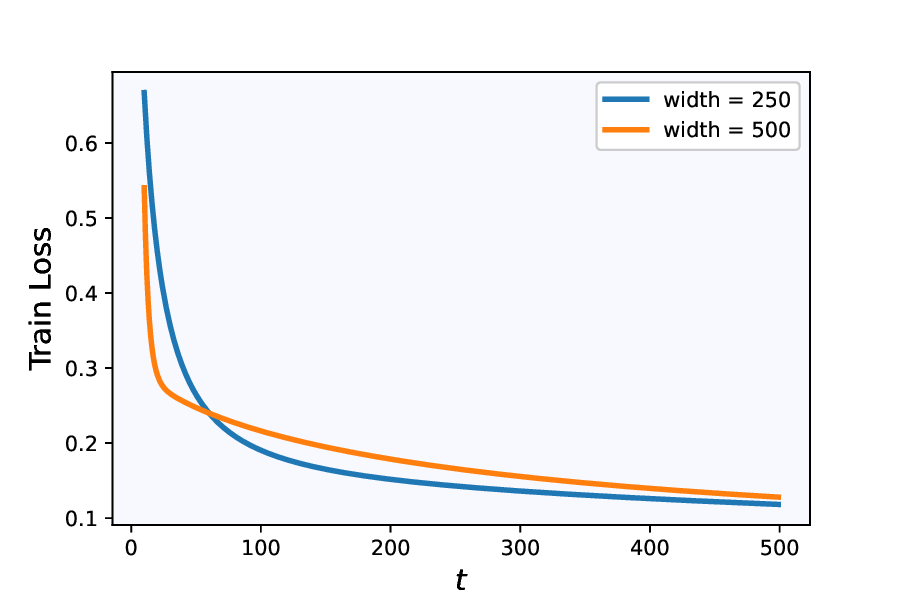}
\includegraphics[width=1.35in, height = 1.08in]{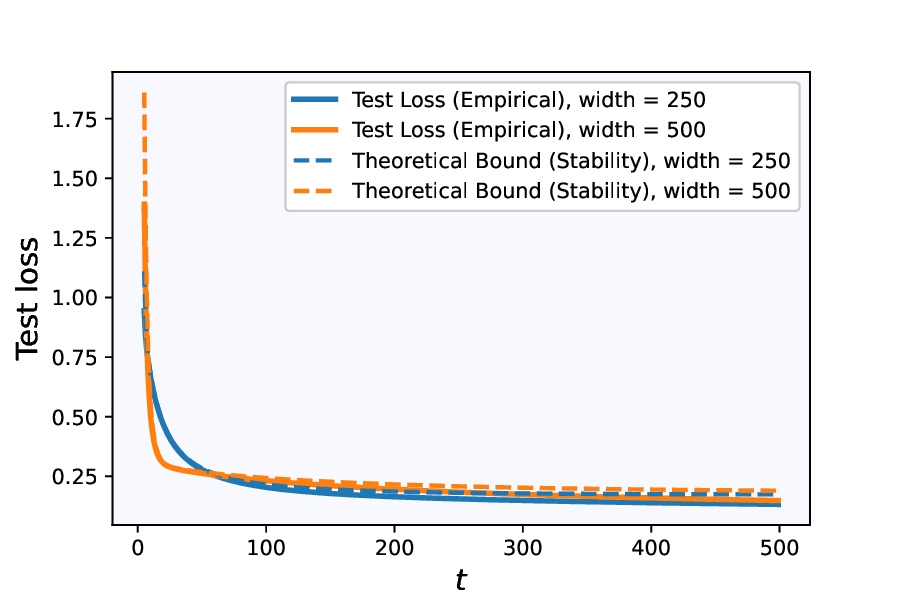}
\includegraphics[width=1.35in, height = 1.08in]{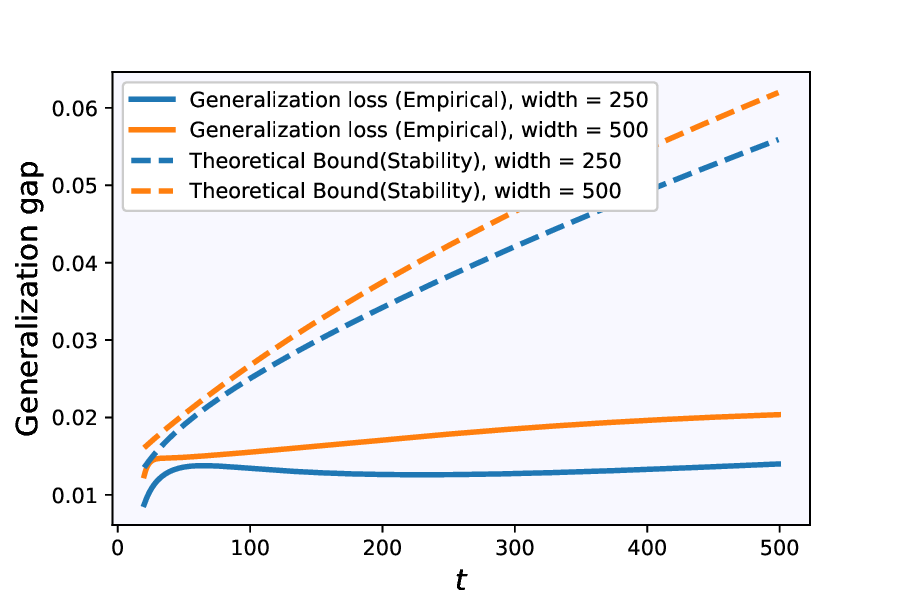}
    \caption{Iteration-based distance from initialization, training loss, test loss and generalization gap  for training a two hidden-layer neural network with \emph{FashionMNIST} dataset and $m=250, 500$. Here $n=4\times 10^3, p \approx 2\times 10^5$(blue line), $6\times 10^5$ (red line),  and $\eta=0.02 $.}
    \label{fig:2}
        \vspace{.2in}
    \centering
   \includegraphics[width=1.35in, height = 1.08in]{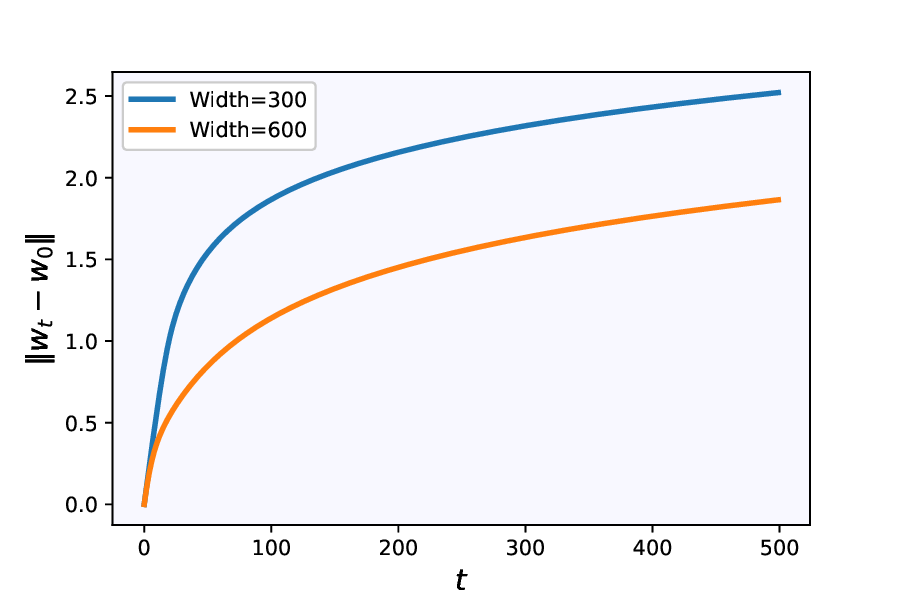}
\includegraphics[width=1.35in, height = 1.08in]{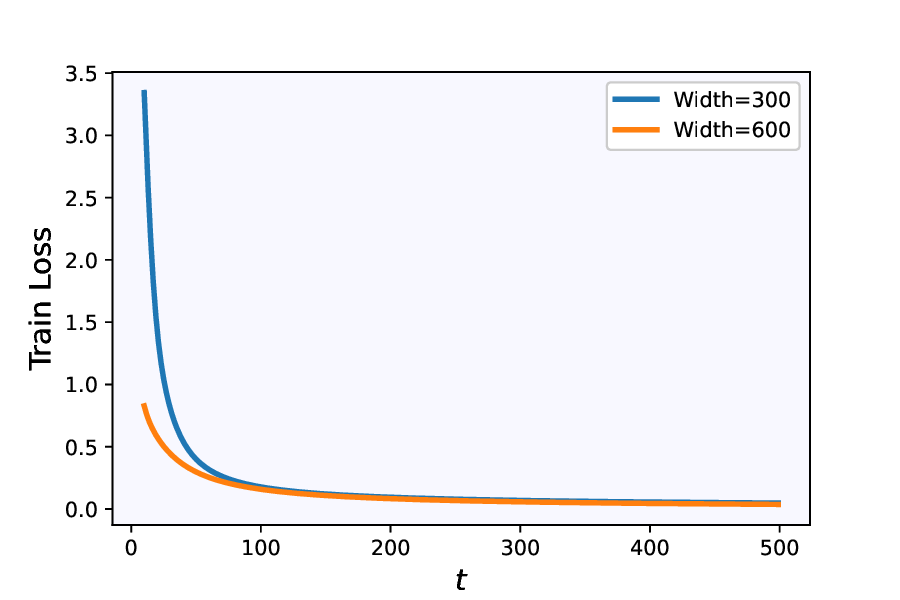}
\includegraphics[width=1.35in, height = 1.08in]{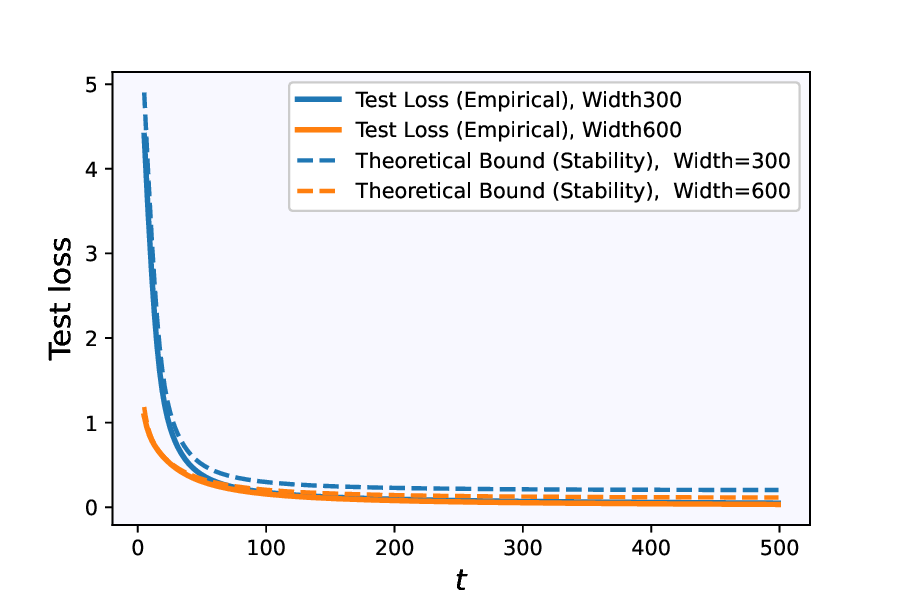}
\includegraphics[width=1.35in, height = 1.08in]{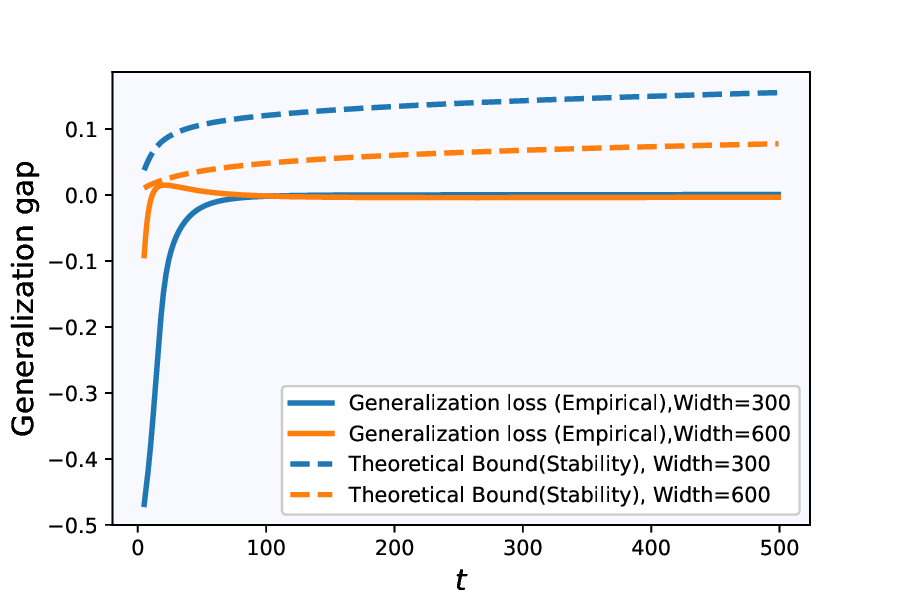}
    \caption{Iteration-based distance from initialization, training loss, test loss and generalization gap for training a two hidden-layer neural network with \emph{MNIST} dataset and $m=300,600$. Here $n=2\times 10^3, p \approx 3\times 10^5$(blue line), $8\times 10^5$ (red line) and $\eta=0.02$.}
    \label{fig:3}
\end{figure}

\begin{figure}
\centering
\includegraphics[width=1.8in,height = 1.5in]{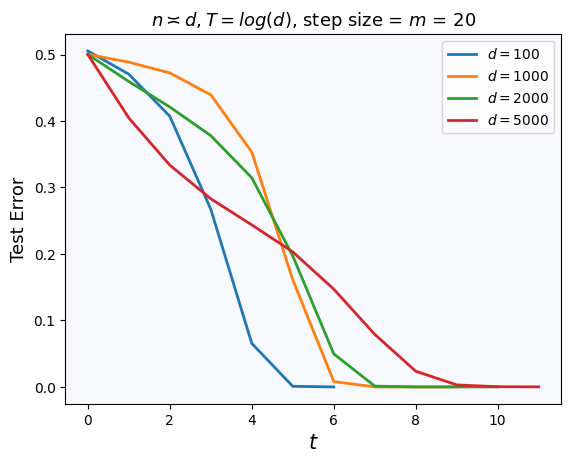}\hspace{.5in}
\includegraphics[width=1.8in,height = 1.5in]{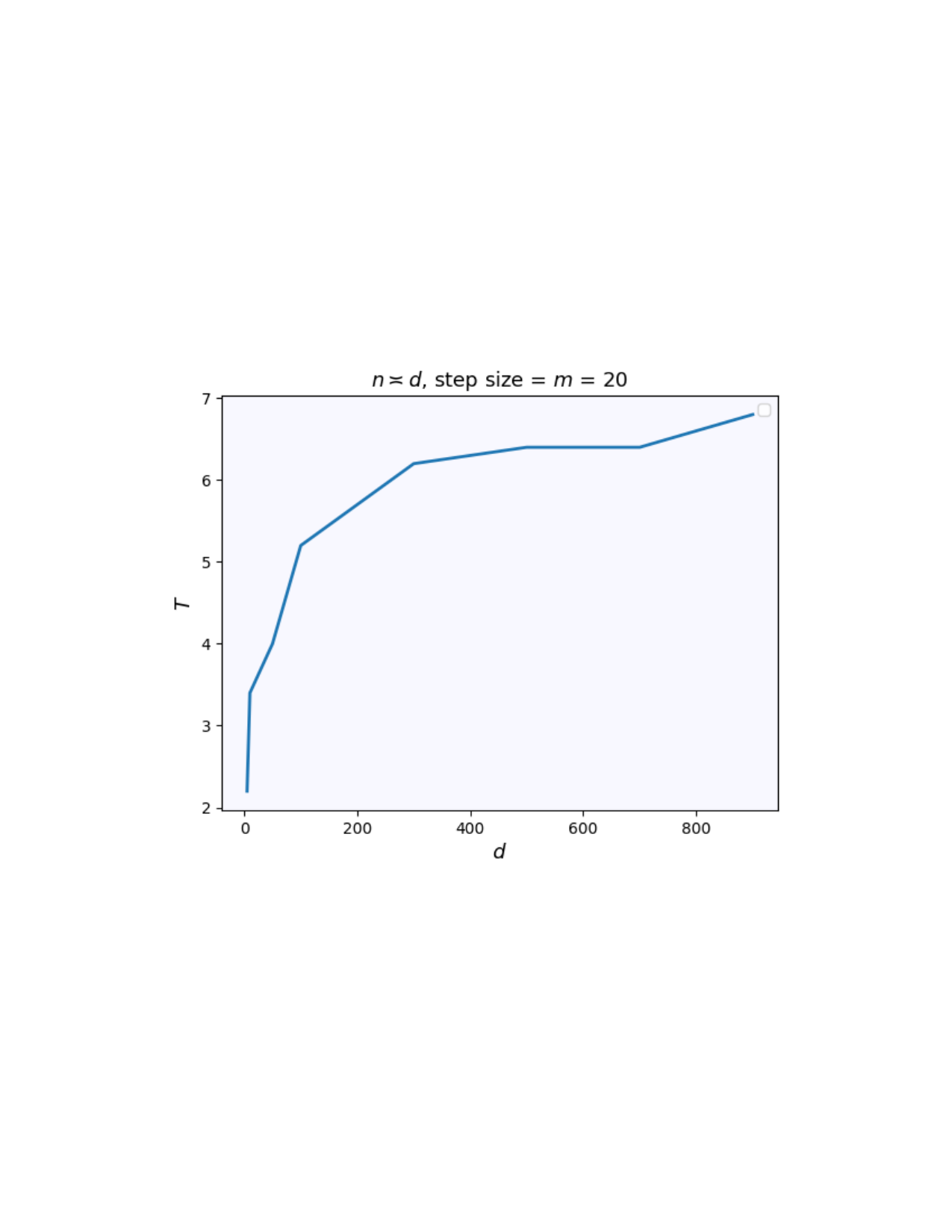}
\caption{Left: Misclassification error based on iteration in learning the $d-$dimensional XOR distribution with SGD. Right: Total number of SGD steps based on data dimension to reach approximately zero test error.}
\label{fig:4}
\end{figure}
We consider binary classification with a 2-hidden layer network with softplus activation ($\sigma(t) = \log(1+e^t)$) trained by the logistic loss function. Figure \ref{fig:1} presents train, test and generalization behavior of GD for learning a such a model with FashionMNIST dataset. The two lines in each figure correspond to $\eta=0.01, 0.1$. In the two rightmost plots, the resulting theoretical generalization and test loss curves derived from Eq. \ref{eq:stabsim2} are compared with the empirical values.  Note the good alignment between theoretical and empirical behavior for the generalization and test loss. 
\par
A similar behavior is observed in Figures \ref{fig:2}-\ref{fig:3}. In Figure \ref{fig:2}, we consider a similar setup but we reduce the sample size to $4000$ training data in order to allow larger test-loss behavior. The resulting plots show the training and test loss performance for two choices of $m=250$ and $m=500.$ In Figure \ref{fig:3}, we consider the MNIST dataset for $m=300$ and $m=600$ and let the sample size be $n=2000.$ It is noteworthy that the findings in both figures bear similarities, with the theoretical bounds providing non-vacuous and accurate approximations for the test loss and generalization gap.
\paragraph{Experiments on learning the XOR distribution with large step-size.}
Figure \ref{fig:4} demonstrates the test error curves associated with learning the XOR distribution according to the setting of Theorem \ref{thm:xor}. In particular, we fix $n = 6d$, $\eta = m = 20$ and set the total number of SGD steps as $T=\ceil{\log(d)}$. Note that the number of iterations required to reach perfect accuracy grows with $d$. The right side of Figure \ref{fig:4} provides further insight into the relationship between dimensionality and convergence rate. It displays the total number of SGD steps required to reach a test error below 0.01 for different values of $d$ using $n=3d$, $m=\eta=20$. The results are averages over five independent experiments and highlight the logarithmic dependence of the total SGD iterations on data dimension for achieving near-zero test error.

\section{Conclusions and Future Directions}
We explored the convergence and generalization of smooth neural networks trained with gradient methods. Our first goal in this paper was to derive generalization bounds through a new stability-based approach which had not been discussed in the vast literature of deep learning. These findings represent an improvement over previous results that either required substantial over-parameterization or provided suboptimal generalization rates that depended on the network width. For general noisy data distributions, we also derived generalization guarantees showing that GD can reach the optimal test loss. We also showed that orders of magnitude improvements in sample and computational complexity are possible by surpassing the NTK limitations and using SGD with large step-size. Several directions remain open to future research:

\begin{itemize}
\item It remains open to explore whether the minimum width conditions of Theorems \ref{thm:train-test}-\ref{thm:generalization} can be improved in terms of $\gamma$ or $n$. 

\item It is also interesting to extend the XOR analysis to the noisy setting where a fraction of the data points have corrupted labels. 

\item The feature learning phenomenon in multi-index classification tasks remains largely unexplored. While we believe the XOR setup is a must-take first step, extensions to other multi-index models can shed further light on the  strengths and limitations of neural networks. 

\item We also aim to understand the potential benefits of network depth in either the NTK regime or feature learning i.e., whether adding a single layer  can improve sample complexity or reduce the total number of SGD iterations.
\end{itemize}
\bibliography{main}
\bibliographystyle{apalike}
\clearpage

\appendix
\section*{Appendix}
\section{Preliminaries : Hessian and Gradient norm}
\subsection{ Hessian norm}\label{sec:hess}

It is essential to our analysis to obtain Hessian's spectral norm throughout the iterates of GD. With our weights' initialization in place, one can bound the Hessian's spectral norm based on the euclidean distance of the weights from initialization and network's size as in the following assumption. 
\begin{assumption}[Hessian's Spectral Norm]\label{ass:hess}
The spectral norm of the model's Hessian for any $w\in\R^p,\rho>0$ such that $\|w-w_0\|\le \rho$ is bounded as $\|\nabla^2\Phi(w,\cdot)\|_2\le \beta_L \frac{\rho^{3L}}{\sqrt{m}}$ for some constant $\beta_L$ that depends only on $L$.
\end{assumption}
It can be shown \cite[Lemma \ref{lem:hess}]{liu2020linearity} that for standard Gaussian initialization and bounded data, Assumption \ref{ass:hess} holds with high-probability over initialization for $\beta_L= e^{O(L)}$.

For the spectral norm of the model's Hessian which is needed by our analysis, we use the following bound which bounds the spectral norm based on the distance from initialization. 
\begin{lemma}[Hessian's spectral norm \cite{liu2020linearity}]\label{lem:hess}
Assume the defined $L$ layer neural network architecture with standard Gaussian initialization i.e., $w_{_{0,{ij}}}^{(\ell)} \simiid N(0,1)$. Let $\|w-w_0\| \le \rho$. Then with high probability
\bea\nn
\|\nabla^2 \Phi(w,x)\|_2 \le \beta_L\frac{\rho^{3L}}{\sqrt{m}}
\eea
for some constant $\beta_L$ that only depends on $L.$
\end{lemma}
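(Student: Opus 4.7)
The plan is to bound $\|\nabla^2 \Phi(w,x)\|_2$ via a block decomposition of the Hessian with respect to the layer-wise weight matrices $W_1,\ldots,W_{L+1}$, relying on three ingredients: Gaussian concentration to control the initial weights, inductive bounds on forward activations and layer-to-layer Jacobians, and the chain rule for second derivatives.

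First, I would establish uniform spectral bounds on all weight matrices. Standard Gaussian matrix concentration (Bai--Yin / Davidson--Szarek) gives $\|W_{\ell,0}\|_2 \le c_0 \sqrt m$ simultaneously for $\ell = 1,\ldots,L+1$ with probability at least $1 - (L+1)\exp(-c m)$, for absolute constants $c_0, c$. Under the hypothesis $\|w-w_0\|\le\rho$ we have $\|W_\ell - W_{\ell,0}\|_F \le \rho$, and hence $\|W_\ell\|_2 \le c_0\sqrt m + \rho$ uniformly in $\ell$. Defining the hidden activations $h_0 := x$ and $h_\ell := \tfrac{1}{\sqrt m}\sigma(W_\ell^\top h_{\ell-1})$, the hypotheses $\sigma(0)=0$ and $\|\sigma'\|_\infty \le 1$ give by induction $\|h_\ell\| \le \prod_{k=1}^\ell \|W_k\|_2/\sqrt m$, and similar product bounds hold for every layer-to-layer Jacobian $\partial h_\ell/\partial h_{k-1}$.

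Next, I would decompose the Hessian as
\[
\nabla^2 \Phi(w,x) \;=\; \sum_{1 \le k \le k' \le L+1} B_{k,k'},
\]
where $B_{k,k'}$ is the block of mixed second derivatives with respect to $W_k$ and $W_{k'}$. Applying the chain rule twice, each such block factors into: (i) an output factor containing a single column of $W_{L+1}^\top/\sqrt m$ --- crucially of norm $O(1/\sqrt m)$ rather than $O(1)$, because the Hessian only activates a single coordinate of the top layer; (ii) a forward Jacobian chain through layers $1,\ldots,k-1$; (iii) an intermediate Jacobian chain through layers $k,\ldots,k'$; and (iv) a pointwise $\sigma''$ factor if $k=k'$, or a product of $\sigma'$ terms with one additional $1/\sqrt m$-rescaled weight column if $k<k'$. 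In either case the second differentiation contributes an extra $1/\sqrt m$ factor relative to the first-derivative expression, and this is the source of the $1/\sqrt m$ in the final bound.

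Third, I would bound each $\|B_{k,k'}\|_2$ by substituting $\|W_\ell\|_2 \le c_0\sqrt m + \rho$ into the chain factors. In the nontrivial regime $\rho \gtrsim \sqrt m$, the three chains (forward to the first pivot, between the pivots, and backward from the second pivot) combine to contribute factors of order $\rho^{3L}/m^{3L/2}$ in the worst case, corresponding to perturbations of size $\rho$ being simultaneously absorbed along three independent depth-$L$ paths through the network. Multiplying by the $O(1/\sqrt m)$ output factor, summing over the $O(L^2)$ blocks, and absorbing all $L$-dependent constants into a single $\beta_L$ yields the claim $\|\nabla^2 \Phi(w,x)\|_2 \le \beta_L \rho^{3L}/\sqrt m$. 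The main obstacle is the careful bookkeeping of $m$-factors along the chain-rule expansion: the potentially catastrophic cancellation is that $\|W_{L+1}\|_2 = O(\sqrt m)$ could in principle absorb the output $1/\sqrt m$, so one must argue that each Hessian block isolates only a single coordinate-projection of $W_{L+1}$, preserving the critical $1/\sqrt m$ factor that makes the bound meaningful for small $\rho$. Once this accounting is in place, the remainder is a layer-by-layer application of the uniform weight bounds.
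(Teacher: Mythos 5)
This lemma is not proved in the paper at all: it is imported verbatim from \cite{liu2020linearity} (the appendix only restates it and invokes it), so there is no internal proof to compare against. Judged on its own terms, your sketch has a genuine gap at the one step that carries all of the difficulty: the claim that every Hessian block carries an output factor of norm $O(1/\sqrt m)$ because ``the Hessian only activates a single coordinate of the top layer.'' That is true only for the $\sigma''$-term of the diagonal block of the last hidden layer. For every other block the backward factor is the fully contracted chain $\tfrac{1}{\sqrt m}W_{L+1}^\top D_L \tfrac{1}{\sqrt m}W_L\cdots$, and the purely deterministic spectral-norm bookkeeping you set up gives only $O(1)$ for it, not $O(1/\sqrt m)$. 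Concretely, take $L=2$ and the diagonal block in $W_1$: differentiating twice along a unit-Frobenius direction $U$ produces, among others, the term $\tfrac{1}{m}\,a^\top D_2' W_2^\top \ddot x^{(1)}$ with $\ddot x^{(1)}_i=\sigma''_i\langle U_i,x\rangle^2$, and Cauchy--Schwarz with $\|a\|\approx\sqrt m$, $\|W_2\|_2\approx\sqrt m$, $\|\ddot x^{(1)}\|\le 1$ yields only $O(1)$. The $1/\sqrt m$ is recovered only by pairing the $\ell_1$ bound $\|\ddot x^{(1)}\|_1\le\|U\|_F^2\le 1$ with a high-probability \emph{entrywise} bound $\|\tfrac{1}{m}W_2 D_2' a\|_\infty=\tilde O(1/\sqrt m)$, i.e.\ a $(2,\infty)$-type concentration statement for the backward vectors at initialization, together with a perturbation argument showing these entrywise bounds degrade by at most $\poly(\rho)$ when $\|w-w_0\|\le\rho$ (this is also where the powers of $\rho$ actually accumulate). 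Those two ingredients are the technical heart of the proof in \cite{liu2020linearity} and are absent from your plan; without them the accounting does not close.

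A secondary issue: the regime you call nontrivial, $\rho\gtrsim\sqrt m$, is the opposite of the one the paper operates in (its width conditions force $\rho\lesssim m^{1/(6L+4)}\ll\sqrt m$), and in that regime all of your chain factors $c_0+\rho/\sqrt m$ are $O(1)$, so the entire content of the bound is precisely the $1/\sqrt m$ factor whose derivation is the missing step.
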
                                           
As mentioned in the main body, the result above guarantees that Assumption \ref{ass:hess} holds with high probability. 
\subsection{Gradient norm at initialization}\label{sec:gradnorm}
Recall the model for Theorems \ref{thm:train-test}-\ref{thm:generalization},
\begin{align*}
\Phi(w,x)&:= \frac{1}{\sqrt m} \langle W^{(L+1)}, x^{(L)}\rangle\\
x_i^{(\ell+1)}&:=\frac{1}{\sqrt m} \sigma(\langle W_i^{(\ell+1)}, x^{(\ell)}\rangle )\;\;\; \forall \ell\in\{1,\cdots,L-1\},\forall i\in\{1,\cdots,m\}\\
x_i^{(1)} &:= \sigma(\langle x,W^{(1)}_i\rangle)\;\;\;\;\; \forall i\in\{1,\cdots,m\}
\end{align*}
\bea
\eea

Let us consider the standard Gaussian initialization. We have

\bea
\frac{\partial \Phi(w,x)}{\partial W^{(\ell+1)}_{ij}} = \frac{1}{\sqrt m} x^{(\ell)}_j \sigma'(\langle W_i^{(\ell+1)},x^{(\ell)}\rangle)\frac{\partial \Phi(w,x)}{\partial x^{(\ell+1)}_{i}}
\eea

Therefore, 
\bea
\|\frac{\partial \Phi(w,x)}{\partial W^{(\ell+1)}}\|^2 =\sum_{ij}\|\frac{\partial \Phi(w,x)}{\partial W^{(\ell+1)}_{ij}}\|^2 \le \frac{1}{m}\|x^{(\ell)}\|^2\|\frac{\partial \Phi(w,x)}{\partial x^{(\ell+1)}}\|^2
\eea
by \cite[Lemmas F.3 and F.5]{liu2020linearity} :
\bea
\|x^{(\ell)}\| \le c_0^\ell \sqrt{m},~~~~ \|\frac{\partial \Phi}{\partial x^{(\ell)}} \|\le c_0^{L-\ell+1}
\eea
Therefore $\|\frac{\partial \Phi(w_0,x)}{\partial W^{(\ell+1)}}\|^2 \le c_0^{2L}$ and 
$$\|\nabla \Phi(w_0,x)\| = (\sum_{\ell=0}^L \|\frac{\partial \Phi(w_0,x)}{\partial W^{(\ell+1)}}\|^2)^{1/2}\le \sqrt{L+1} c_0^{L}.$$

where $c_0$ is any constant that bounds the spectral norm of initialization weights with high probability, i.e., $\frac{\|W_0\|_2}{\sqrt{m}} \le c_0.$ Specially, well-known bounds on the spectral norm of Gaussian matrices guarantee that $c_0$ is constant with high proabability \cite{vershynin2018high}.

Moreover, based on \cite{liu2020linearity}, if $\|w-w_0\|=\rho$ the spectral norm is bounded by,
\bea
\|\nabla^2\Phi(w,x)\|_2\le \beta_L\frac{\rho^{3L}}{\sqrt{m}}
\eea
where $\beta_L$ is a depth-only dependent variable derived to be $\beta_L  = e^{O(L)}$.

Now, we use the above relation on Hessian to obtain a bound on the gradient:

\bea
\frac{\|\nabla \Phi(w^*,x)-\nabla\Phi(w_0,x)\|}{\|w^*-w_0\|} \le \max_{w\in[w_0,w^*]} \|\nabla^2 \Phi(w,x)\|_2
\eea
therefore if $w^\star\in\mathcal{B}_\rho(w_0)$,
\bea
\|\nabla \Phi(w^*,x)-\nabla\Phi(w_0,x)\| \le\frac{\beta_L \rho^{3L+1}}{\sqrt{m}}.
\eea

Thus,
\bea
\|\nabla\Phi(w^*,x)\|- \|\nabla\Phi(w_0,x)\| \le \|\nabla \Phi(w^*,x)-\nabla\Phi(w_0,x)\| \le \frac{\beta_L \rho^{3L+1}}{\sqrt{m}}.
\eea
Thus w.h.p. 
\bea
\max_{i\in[n]}\|\nabla \Phi(w^*,x_i)\| \le \max_{i\in[n]}\|\nabla\Phi(w_0,x_i)\| + \frac{\beta_L\rho^{3L+1}}{\sqrt{m}}
\eea

Let us denote the gradient and hessian bounds on $\Phi$ with $C_1$ and $C_2$, respectively. Note that based on our derivations above it holds that $C_1 \le  \frac{\beta_L \rho^{3L+1}}{\sqrt{m}} + G_0$ 
and $C_2 \le \frac{\beta_L \rho^{3L}}{\sqrt{m}},$ where $G_0$ is the Lipschtiz parameter at initialization. With these parameters we can now obtain the Lipschitz and Smoothness parameter as well as the smallest eigenvalue of the objective along the gradient descent path. This is derived in the next lemma.

\subsection{Objective's Gradient and Hessian along the GD path}

\begin{lemma}\label{lem:gradients_and_hessians_bounds_general}

The following are true for the loss gradient and Hessian:
\begin{enumerate}
  
\item $\|\nabla \Fh(w)\|\le C_1 \, \Fh(w).$

\item $\|\nabla^2 \Fh(w)\|_2 \leq  (C_2+ (C_1)^2) \Fh(w).$

\item $\lambda_\min\left(\nabla^2 \widehat F(w)\right) \geq - C_2 \Fh(w)$.

\end{enumerate}
where we define $C_1 := G_0 + \beta_L \|w-w_0\|^{3L+1}/\sqrt{m},$ $C_2 := \beta_L \|w-w_0\|^{3L}/\sqrt{m}$ and $G_0$ is the Lipschitz parameter of the network at initialization i.e., $\|\nabla \Phi(w_0,\cdot)\|\le G_0.$
\end{lemma}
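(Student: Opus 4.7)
The plan is to reduce all three bounds to two key facts about the logistic loss, namely $|f'(t)| \le f(t)$ and $0 \le f''(t) \le f(t)$, and then combine them with the gradient/Hessian bounds on $\Phi$ that give $\|\nabla \Phi(w,x)\| \le C_1$ and $\|\nabla^2 \Phi(w,x)\|_2 \le C_2$ (established in Section~\ref{sec:gradnorm} and Lemma~\ref{lem:hess}). First I would verify the loss inequalities. Writing $u = e^{-t}$, we have $-f'(t) = u/(1+u)$ and $f(t) = \log(1+u)$; the function $g(u) = \log(1+u) - u/(1+u)$ satisfies $g(0)=0$ and $g'(u) = u/(1+u)^2 \ge 0$, so $|f'(t)| \le f(t)$ for all $t$. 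For the second derivative, $f''(t) = 1/(1+e^t) \cdot 1/(1+e^{-t}) = |f'(t)| \cdot |f'(-t)| \le |f'(t)| \le f(t)$, and it is manifestly nonnegative (so $f$ is convex).

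Next I would compute $\nabla \widehat F$ and $\nabla^2 \widehat F$ via the chain rule. Writing $g_i(w) = y_i \Phi(w,x_i)$, we get
\begin{align*}
\nabla \widehat F(w) &= \frac{1}{n}\sum_{i=1}^n f'(g_i(w))\, y_i \nabla \Phi(w,x_i),\\
\nabla^2 \widehat F(w) &= \frac{1}{n}\sum_{i=1}^n \Big[ f''(g_i(w))\, \nabla\Phi(w,x_i)\nabla\Phi(w,x_i)^\top + f'(g_i(w))\, y_i\, \nabla^2\Phi(w,x_i)\Big],
\end{align*}
using $y_i^2=1$. For claim (1), the triangle inequality plus $\|\nabla\Phi(w,x_i)\| \le C_1$ and $|f'(g_i)| \le f(g_i)$ immediately give $\|\nabla\widehat F(w)\| \le C_1 \cdot \frac{1}{n}\sum_i f(g_i(w)) = C_1 \widehat F(w)$.

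For claim (2), I would bound the spectral norm term-by-term: the rank-one piece contributes at most $f''(g_i)\|\nabla\Phi(w,x_i)\|^2 \le f(g_i) \cdot C_1^2$, and the Hessian-of-$\Phi$ piece contributes at most $|f'(g_i)|\,\|\nabla^2\Phi(w,x_i)\|_2 \le f(g_i) \cdot C_2$. Summing and averaging yields $\|\nabla^2 \widehat F(w)\|_2 \le (C_1^2 + C_2)\widehat F(w)$. For claim (3), for any unit vector $v$, convexity of $f$ makes the first term $f''(g_i)(v^\top\nabla\Phi_i)^2$ nonnegative, so only the second term can go negative; bounding it by $-|f'(g_i)|\,\|\nabla^2\Phi_i\|_2 \ge -f(g_i)\,C_2$ and averaging gives $v^\top \nabla^2\widehat F(w)\, v \ge -C_2\widehat F(w)$, which is the desired eigenvalue lower bound.

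There is no real obstacle here beyond carefully tracking the self-bounding inequalities $|f'|\le f$ and $f''\le f$; the rest is mechanical application of the chain rule and the norm bounds on $\Phi$ already assembled in the preliminaries. The only subtlety worth flagging is that the $C_1, C_2$ bounds on $\|\nabla \Phi\|$ and $\|\nabla^2\Phi\|_2$ only hold pointwise in $w$ under $\|w-w_0\|\le\rho$ with high probability over initialization, so this lemma is implicitly being invoked along the gradient-descent trajectory where the inductive distance-from-initialization control of Theorem~\ref{thm:train-test} keeps us inside this ball.
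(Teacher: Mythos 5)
Your proposal is correct and follows essentially the same route as the paper: chain-rule expressions for $\nabla\Fh$ and $\nabla^2\Fh$, term-by-term bounding via $\|\nabla\Phi\|\le C_1$, $\|\nabla^2\Phi\|_2\le C_2$, and the self-bounding properties $|f'|\le f$, $0\le f''\le f$ of the logistic loss (which the paper uses implicitly and you verify explicitly), with convexity of $f$ handling the eigenvalue lower bound. Your closing remark that the $C_1,C_2$ bounds hold only with high probability inside the ball $\|w-w_0\|\le\rho$ is an accurate reading of how the lemma is deployed downstream.
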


\begin{proof} 
The loss gradient is derived as follows,
    \begin{align*}
    \nabla \widehat F(w) &= \frac{1}{n} \sum_{i=1}^n f^\prime(y_i \Phixi)y_i \nabla \Phi(w,x_i)
    \end{align*}
    Recalling that $y_i\in\{\pm 1\}$, we can write
\bea
\Big\|\nabla \widehat F(w)\Big\| &= \frac{1}{n} \Big\| \sum_{i=1}^n f'(y_i \Phi(w,x_i))y_i \nabla \Phi (w,x_i)\Big\|\nn\\
&\le \frac{1}{n} \sum_{i=1}^n |f'(y_i \Phi(w,x_i))| \Big\|\nabla \Phi (w,x_i)\Big\|.\nn\\
&\le C_1 \, \hf(w). 
\eea
For the Hessian of loss, note that
    \begin{align}\label{eq:lossh}
    \nabla^2 \widehat F(w) &= \frac{1}{n} \sum_{i=1}^n f^{\prime\prime}(y_i \Phixi) \nabla \Phi(w,x_i)\nabla \Phi(w,x_i)^\top + 
     f^{\prime}(y_i \Phixi)y_i \nabla^2 \Phi(w,x_i).
    \end{align}
It follows that
\bea
\Big\|\nabla^2 \widehat F(w)\Big\|_2 &= \left\|\frac{1}{n} \sum_{i=1}^n f^{\prime}(y_i \Phi(w,x_i))y_i \nabla^2 \Phi (w,x_i) + f^{\prime\prime}(y_i \Phi(w,x_i)) \nabla\Phi(w,x_i) \nabla_1 \Phi(w,x_i)^\top \right\|\nn\\
&\le \frac{1}{n}\sum_{i=1}^n |f'(y_i \Phi(w,x_i))| \Big\|\nabla^2 \Phi (w,x_i) \Big\| + |f''(y_i \Phi(w,x_i))|\Big\|\nabla \Phi (w,x_i) \nabla\Phi(w,x_i)^\top \Big\|\nn\\
&\le  \frac{1}{n}\sum_{i=1}^n |f'(y_i \Phi(w,x_i))| \Big\|\nabla^2 \Phi (w,x_i) \Big\| + |f''(y_i \Phi(w,x_i))|\Big\|\nabla \Phi (w,x_i)\Big\|^2\nn\\
&\le C_2 \Fh(w)+ (C_1)^2 \Fh(w).
\label{eq:heslast} 
\eea
To lower-bound the minimum eigenvalue of Hessian, note that the loss function $f$ is convex and thus $f''(\cdot)\ge 0$. Therefore the first term in Eq. \ref{eq:lossh} is positive semi-definite and the second term can be lower-bounded as follows,
\bea
\lambda_{\min} (\nabla ^2 \widehat F(w)) &\ge -\left \|\frac{1}{n} \sum_{i=1}^n y_i f'(y_i\Phi(w,x_i)) \nabla^2 \Phi(w,x_i)\right\|\nn\\
&\ge -\frac{1}{n} \sum_{i=1}^n |y_i f'(y_i\Phi(w,x_i))| \Big\|\nabla^2 \Phi(w,x_i)\Big\|\nn\\
&\ge -C_2 \Fh(w).\nn
\eea
    
\end{proof}
\section{Proof of Theorem \ref{thm:train-test}}\label{sec:proof_thm1}
With the bounds on Hessian and Gradient from the last section, we are ready to prove Theorem \ref{thm:train-test} for obtaining the training and generalization rates of deep nets. We start with proving the training guarantees in the following theorem. 
\subsection{Proof for training loss}
\begin{theorem}[Convergence of GD in deep nets]\label{thm:b1}
Let the descent lemma hold. Fix a $T$ and assume the target weights vector $w^\star\in\R^p$ that obtain small training loss such that 
\bea\nn
\rho^\star\geq \max\left\{\sqrt{\eta T \widehat F(w^\star)},\sqrt{\eta \widehat F(w_0)}\right\}.
\eea
where $\rho^\star:=\|w^\star-w_0\|$. Moreover, assume the width $m$ is large enough such that it satisfies,
\bea\nn
m \ge 4\beta_L^2 (3 {\rho^\star})^{6L+4}.
\eea
The the training loss satisfies with high probability over initialization,
\bea\nn
\frac{1}{T} \sum_{t=1}^T \widehat{F}\left(w_t\right) \le 2 \Fh(w^\star)+\frac{2{\rho^\star}^2}{\eta T}.
\eea
\end{theorem}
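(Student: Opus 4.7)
The overall strategy is a coupled induction on $t$ showing that the iterates remain in a ball around initialization of radius $O(\rho^\star)$, combined with an approximate-convexity inequality that telescopes. Let $c_t := \tfrac{1}{2}\beta_L\|w_t-w_0\|^{3L}\|w_t-w^\star\|^2/\sqrt{m}$ denote the ``curvature correction'' that arises when Taylor-expanding $\Phi$ along the segment from $w_t$ to $w^\star$. Under the working hypothesis $\|w_t-w_0\|\le 2\rho^\star$, one has $\|w_t-w^\star\|\le 3\rho^\star$; Lemma~\ref{lem:hess} and the width condition $m\ge 4\beta_L^2(3\rho^\star)^{6L+4}$ together then force $c_t\le 1/4$.

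The first analytical step is to establish the approximate-convexity bound
\begin{equation*}
\widehat F(w_t)-\widehat F(w^\star)\;\le\; \langle \nabla\widehat F(w_t),\,w_t-w^\star\rangle + c_t\,\widehat F(w_t),
\end{equation*}
which I would obtain by applying convexity of $f$ sample-by-sample, substituting the second-order Taylor expansion of $\Phi(\cdot,x_i)$ around $w_t$ evaluated at $w^\star$, bounding the quadratic remainder via Lemma~\ref{lem:hess}, and using the pointwise inequality $|f'(t)|\le f(t)$ for the logistic loss so that $\frac{1}{n}\sum_i |f'(y_i\Phi(w_t,x_i))|\le \widehat F(w_t)$. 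Combined with the elementary GD identity
\begin{equation*}
\langle\nabla\widehat F(w_t),w_t-w^\star\rangle \;=\; \tfrac{1}{2\eta}\bigl(\|w_t-w^\star\|^2-\|w_{t+1}-w^\star\|^2\bigr)+\tfrac{\eta}{2}\|\nabla\widehat F(w_t)\|^2
\end{equation*}
and the descent lemma $\tfrac{\eta}{2}\|\nabla\widehat F(w_t)\|^2\le \widehat F(w_t)-\widehat F(w_{t+1})$, this yields a per-step recursion bounding $\widehat F(w_{t+1})$ in terms of a squared-distance difference plus $c_t\widehat F(w_t)$. Telescoping $t=0,\dots,T-1$, using $c_t\le 1/4$ to absorb $\tfrac{1}{4}\sum_{t=0}^{T-1}\widehat F(w_t)$ onto the left side, and applying the theorem's hypothesis $\eta\widehat F(w_0)\le{\rho^\star}^2$ to dispose of the boundary term then delivers $\frac{1}{T}\sum_{t=1}^{T}\widehat F(w_t)\le 2\widehat F(w^\star)+\tfrac{2{\rho^\star}^2}{\eta T}$.

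The main obstacle is closing the inductive hypothesis $\|w_{t+1}-w_0\|\le 2\rho^\star$, because the approximate-convexity bound already presupposes it through $c_t\le 1/4$. For the closure I would plug the same approximate-convexity inequality into the Pythagorean expansion
\begin{equation*}
\|w_{t+1}-w^\star\|^2 \;=\; \|w_t-w^\star\|^2 - 2\eta\langle\nabla\widehat F(w_t),w_t-w^\star\rangle + \eta^2\|\nabla\widehat F(w_t)\|^2,
\end{equation*}
use the descent lemma on the final term, and telescope to obtain $\|w_{t+1}-w^\star\|^2 \le {\rho^\star}^2 + 2\eta T\widehat F(w^\star) + 2\eta\sum_{s\le t} c_s\,\widehat F(w_s)$. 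The first two terms contribute at most $3{\rho^\star}^2$ by the hypothesis $\rho^\star\ge\sqrt{\eta T\widehat F(w^\star)}$, and the last is bounded using the partial training-loss sum we derive simultaneously; hence $\|w_{t+1}-w^\star\|\le 2\rho^\star$, whence $\|w_{t+1}-w_0\|\le 3\rho^\star$, which is consistent with the working Hessian radius chosen in the width condition. The delicate aspect is precisely that distance control and loss-sum control must be propagated in tandem: each step requires the other, and the constants in the width assumption $m\ge 4\beta_L^2(3\rho^\star)^{6L+4}$ must be tuned so that $c_t$ stays below the level at which the coupled recursion becomes self-defeating.
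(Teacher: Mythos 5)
Your proposal is essentially correct and follows the same high-level architecture as the paper (approximate convexity along $[w_t,w^\star]$, descent lemma, telescoping, and a coupled induction that controls $\|w_t-w_0\|$ and the partial loss sums in tandem), but it differs in one genuinely substantive step. You obtain the inequality $\widehat F(w_t)-\widehat F(w^\star)\le\langle\nabla\widehat F(w_t),w_t-w^\star\rangle+c_t\widehat F(w_t)$ by Taylor-expanding $\Phi(\cdot,x_i)$ sample-by-sample while keeping the factor $f'(y_i\Phi(w_t,x_i))$ anchored at the \emph{current iterate}, then invoking $|f'|\le f$; this makes the curvature correction proportional to $\widehat F(w_t)$ itself. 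The paper instead Taylor-expands the full empirical loss $\widehat F$, so its remainder is proportional to $\widehat F(w'_t)$ at an unknown intermediate point $w'_t\in[w_t,w^\star]$, and it must then invoke the generalized local quasi-convexity property (Proposition~\ref{prop:GLQCapp}) to convert $\max_{w'_t\in[w_t,w^\star]}\widehat F(w'_t)$ into $\tfrac{4}{3}\max\{\widehat F(w_t),\widehat F(w^\star)\}$. Your route dispenses with that proposition entirely, at no cost in the width condition, since both arguments ultimately need the same Hessian bound $\|\nabla^2\Phi(\xi,x)\|_2\le\beta_L\tilde\rho^{3L}/\sqrt{m}$ at points $\xi$ on the segment. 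This is a clean simplification.

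One bookkeeping issue you should repair: your working hypothesis is $\|w_t-w_0\|\le 2\rho^\star$, but your closure step only delivers $\|w_{t+1}-w^\star\|\lesssim 2\rho^\star$ and hence $\|w_{t+1}-w_0\|\le 3\rho^\star$, which does not re-establish the hypothesis you started from. The fix is to run the induction with radius $3\rho^\star$ for both $\|w_t-w_0\|$ and $\|w_t-w^\star\|$ from the outset (as the paper's lemma does with $\tilde\rho=3\rho^\star$); the stated width condition $\sqrt{m}\ge 2\beta_L(3\rho^\star)^{3L+2}$ then still yields $c_t\le\tfrac14$, and the closure $\|w_{t+1}-w^\star\|\le 2\rho^\star$, $\|w_{t+1}-w_0\|\le 3\rho^\star$ is strictly inside the working radius. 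Also note that the Hessian radius entering $c_t$ should be $\max\{\|w_t-w_0\|,\rho^\star\}$ rather than $\|w_t-w_0\|$ alone, since the Taylor intermediate points range over the whole segment $[w_t,w^\star]$; under the corrected induction hypothesis this changes nothing quantitatively.
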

To begin the proof, we first state our descent lemma.
\begin{lemma}[Descent lemma]\label{lem:des}
Consider the notation as in Lemma \ref{lem:gradients_and_hessians_bounds_general}. Assume for all $t\in[T]$ we have $\eta\le 1/((C_1)^2+C_2)$, then 
$$\widehat{F}\left(w_{t+1}\right) \le \widehat{F}\left(w_t\right)-\frac{\eta}{2}\left\|\nabla \widehat{F}\left(w_t\right)\right\|^2 .$$

\end{lemma}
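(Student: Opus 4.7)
The proof is a classical second-order descent argument, made slightly non-trivial by the fact that the Hessian bound supplied by Lemma \ref{lem:gradients_and_hessians_bounds_general}(2) is of \emph{self-bounded} type (proportional to $\widehat F$ itself) rather than a uniform Lipschitz-gradient estimate.

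First, I would apply Taylor's theorem with Lagrange remainder on the segment $[w_t,w_{t+1}]$: for some $\tilde w_t\in[w_t,w_{t+1}]$,
\begin{equation*}
\widehat F(w_{t+1}) = \widehat F(w_t) + \langle \nabla \widehat F(w_t),\, w_{t+1}-w_t\rangle + \tfrac{1}{2}(w_{t+1}-w_t)^\top \nabla^2 \widehat F(\tilde w_t)(w_{t+1}-w_t).
\end{equation*}
Plugging in the GD update $w_{t+1}-w_t = -\eta\nabla \widehat F(w_t)$ and bounding the quadratic form by the spectral norm of the Hessian times $\|w_{t+1}-w_t\|^2$ gives
\begin{equation*}
\widehat F(w_{t+1}) \le \widehat F(w_t) - \eta\|\nabla \widehat F(w_t)\|^2 + \tfrac{\eta^2}{2}\,\|\nabla^2 \widehat F(\tilde w_t)\|_2\,\|\nabla \widehat F(w_t)\|^2.
\end{equation*}

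Second, I would invoke Lemma \ref{lem:gradients_and_hessians_bounds_general}(2) at $\tilde w_t$ to bound $\|\nabla^2 \widehat F(\tilde w_t)\|_2 \le (C_1^2+C_2)\,\widehat F(\tilde w_t)$, and then argue that $\widehat F(\tilde w_t)$ is comparable to $\widehat F(w_t)$ along this single step. The latter relies on the self-bounded gradient identity of Lemma \ref{lem:gradients_and_hessians_bounds_general}(1), $\|\nabla \widehat F(w_t)\|\le C_1\,\widehat F(w_t)$, which forces the step length $\eta\|\nabla \widehat F(w_t)\|$ to be small relative to $\widehat F(w_t)$. A brief continuity argument --- consider the largest sub-segment of $[w_t,w_{t+1}]$ along which $\widehat F$ has not yet doubled, and show using the Taylor estimate above that this sub-segment already covers the full step --- then allows one to replace $\widehat F(\tilde w_t)$ by a universal multiple of $\widehat F(w_t)$ that can be absorbed into the stated step-size condition.

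Finally, combining the two estimates with the hypothesis $\eta(C_1^2+C_2)\le 1$ collapses the quadratic remainder, giving $\tfrac{\eta^2}{2}\|\nabla^2\widehat F(\tilde w_t)\|_2\,\|\nabla \widehat F(w_t)\|^2 \le \tfrac{\eta}{2}\|\nabla \widehat F(w_t)\|^2$ and hence the announced descent inequality. The main obstacle is precisely the self-bounded character of the Hessian bound: translating an $\widehat F$-proportional Hessian estimate into a workable smoothness-type descent step is not immediate from any off-the-shelf $L$-smoothness lemma, and requires the short-segment continuity argument above. Once this is in place, the remainder of the proof is routine Taylor-expansion bookkeeping.
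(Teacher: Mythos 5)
Your overall architecture (Taylor with Lagrange remainder on $[w_t,w_{t+1}]$, plug in the GD update, bound the quadratic form by the Hessian's spectral norm) is the same as the paper's, but your handling of the Hessian bound introduces a gap that the paper avoids. You invoke the self-bounded estimate $\|\nabla^2\widehat F(\tilde w_t)\|_2\le (C_1^2+C_2)\,\widehat F(\tilde w_t)$, argue via a doubling/continuity step that $\widehat F(\tilde w_t)\le c\,\widehat F(w_t)$ for a universal $c\ge 1$, and then claim this factor ``can be absorbed into the stated step-size condition.'' It cannot: the hypothesis $\eta\le 1/(C_1^2+C_2)$ only yields $\tfrac{\eta^2}{2}(C_1^2+C_2)\,c\,\widehat F(w_t)\,\|\nabla\widehat F(w_t)\|^2\le \tfrac{\eta}{2}\,c\,\widehat F(w_t)\,\|\nabla\widehat F(w_t)\|^2$, and nothing in the lemma's hypotheses guarantees $c\,\widehat F(w_t)\le 1$. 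Indeed $\widehat F$ is an average logistic loss, so already $\widehat F(w_0)\approx\log 2$ (and possibly much larger, since $|\Phi(w_0,x_i)|$ is only bounded by some constant $B$), and your comparability constant $c\ge 1$ makes the product exceed $1$; the announced inequality $\tfrac{\eta^2}{2}\|\nabla^2\widehat F(\tilde w_t)\|_2\|\nabla\widehat F(w_t)\|^2\le\tfrac{\eta}{2}\|\nabla\widehat F(w_t)\|^2$ therefore does not follow from what you have established.

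The fix — and this is what the paper's one-line proof implicitly does — is to not use the self-bounded form of the Hessian bound at all. For the logistic loss one has the uniform derivative bounds $|f'(t)|\le 1$ and $0\le f''(t)\le 1/4$, so the same computation as in Lemma \ref{lem:gradients_and_hessians_bounds_general} gives $\|\nabla^2\widehat F(w)\|_2\le C_2+\tfrac14 C_1^2\le C_1^2+C_2$ \emph{without} any factor of $\widehat F$, valid at every point of the segment where the gradient/Hessian bounds $C_1,C_2$ on $\Phi$ apply. With this uniform smoothness constant the standard descent computation closes immediately under $\eta\le 1/(C_1^2+C_2)$, and your continuity/doubling machinery becomes unnecessary. (One small point that applies to both arguments and is worth stating: the intermediate point $\tilde w_t$ must lie in the region where $C_1,C_2$ are valid, i.e.\ the segment $[w_t,w_{t+1}]$ must stay within the relevant radius of $w_0$; this is ensured by the induction on $\|w_t-w_0\|$ carried out in the proof of Theorem \ref{thm:train-test}.)
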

\begin{proof}
By Taylor expansion and the bound on smoothness parameter of the objective, we immediately deduce:
\begin{align*}
\widehat{F}\left(w_{t+1}\right) & \leq \widehat{F}\left(w_t\right)-\eta\left\|\nabla \widehat{F}\left(w_t\right)\right\|^2+\eta^2\frac{ (C_1)^2+C_2}{2}\left\|\nabla \widehat{F}\left(w_t\right)\right\|^2 \\
& \leq \widehat{F}\left(w_t\right)-\frac{\eta}{2}\left\|\nabla \widehat{F}\left(w_t\right)\right\|^2 
\end{align*}
This completes the proof of the lemma. 
\end{proof}

Define $\rho(t):= \|w_t-w_0\|$ and $\rho^* = \|w^\star-w_0\|$. Also, define $\widetilde \rho$ as any constant such that 
$$\tilde\rho > \max\{\rho(T-1),\rho(T-2),\cdots,\rho(1),\rho^\star\}.$$ 
Then for any $w\in[w_t,w^\star]$ such that $w=\alpha w_t + (1-\alpha)w^\star$ it holds that 
$$
\|w-w_0\|\le \|\alpha w_t + (1-\alpha)w^\star-w_0\|\le \alpha \|w_t-w_0\| + (1-\alpha)\|w^\star-w_0\| \le \tilde\rho,
$$
and thus for any $w\in[w_t,w^\star]$
\bea
\lambda_\min\left(\nabla^2 \widehat F(w)\right) \ge - \widetilde C \Fh(w)
\eea
where $\widetilde C :=\frac{\beta_L \tilde \rho^{3L}}{\sqrt{m}}$, i.e., the smoothness parameter with respect to the distance $\tilde \rho$ and as computed by Lemma \ref{lem:gradients_and_hessians_bounds_general}. Therefore, by Taylor remainder theorem, for any fixed $w^\star$ there exists $w'_t\in[w_t,w^\star]$ such that .
\begin{align*}
    \widehat F(w^\star) &\geq \widehat F(w_t) + \left\langle\nabla \widehat F(w_t),w^\star-w_t\right\rangle + \frac{1}{2}\lamin{\nabla^2 \widehat F(w_t')} \|w^\star-w_t\|^2
    \\
    &\geq \widehat F(w_t) + \left\langle\nabla \widehat F(w_t),w^\star-w_t\right\rangle - \frac{1}{2}\frac{\beta_L \tilde\rho^{3L}}{{\sqrt{m}}} \widehat F(w_t') \,\|w^\star-w_t\|^2.
\end{align*}
As a direct result of the above inequality, we have derived that for any $w^\star$,
\begin{align*}
    \widehat F(w^\star)\geq \widehat F(w_t) + \left\langle \nabla \widehat F(w_t),w^\star-w_t\right\rangle - \frac{1}{2}\frac{\beta_L\tilde\rho^{3L}}{{\sqrt{m}}} \max_{w'_t\in[w_t,w^\star]} \widehat F(w_t') \,\|w-w_t\|^2.
\end{align*}
Putting this in the relation from descent lemma (Lemma \ref{lem:des}) followed by completion of squares using $w_{t+1}-w_t=-\eta\nabla \widehat F(w_t)$ gives
\begin{align}
    \widehat F(w_{t+1})&\leq \widehat F(w^\star) - \left\langle\nabla \widehat F(w_t),w^\star-w_t\right\rangle - \frac{\eta}{2}\left\|\nabla \widehat F(w_t)\right\|^2+\frac{1}{2}\frac{\beta_L\tilde\rho^{3L}}{{\sqrt{m}}} \max_{w'_t\in[w_t,w^\star]}\widehat F(w_t') \,\|w^\star-w_t\|^2\nn
    \\
    &= \widehat F(w^\star) +\frac{1}{\eta}\left(\|w^\star-w_t\|^2-\|w^\star-w_{t+1}\|^2 \right) +\frac{1}{2}\frac{\beta_L \tilde\rho^{3L}}{{\sqrt{m}}} \max_{w'_t\in[w_t,w^\star]}\widehat F(w_t') \,\|w^\star-w_t\|^2.\label{eq:descent with weak convexity}
\end{align}

Telescoping the above display for $t=0,\ldots,{T-1}$, we deduce that,

\bea\label{eq:trainloss_gen}
\frac{1}{T} \sum_{t=1}^T \widehat{F}\left(w_t\right) \leq \widehat{F}(w^\star)+\frac{\left\|w^\star-w_0\right\|^2}{\eta T}+ \frac{\beta_L\tilde\rho^{3L}}{2T} \sum_{t=0}^{T-1} \max _{w'_t\in[w_t,w^\star]} \widehat{F}\left(w_t'\right)\left\|w^\star-w_t\right\|^2.
\eea
In order to proceed with the above relation, it is necessary to further simplify the terms $\max _{w'_t\in[w_t,w^\star]} \widehat{F}\left(w_t'\right)$. In doing so, we utilize the following Generalized Local Quasi-convexity (GLQC) property from \cite{taheri2024generalization}.
\begin{proposition}[Prop. 8 in \cite{taheri2024generalization}]\label{prop:GLQCapp}
 Let $w_1,w_2\in\R^{p}$ be two arbitrary points. Suppose $\hf:\R^{p}\rightarrow\R$ satisfies the \sbwc property with parameter $\kappa$ along the line $[w_1,w_2]$. Assume $\left\|w_1-w_2\right\|\leq D$ where $D<\sqrt{2/\kappa}$. Then,
\bea
\max_{v\in[w_1,w_2]} \widehat F(v)\le \tau\cdot \max\{\widehat F(w_1),\widehat F(w_2)\}.
\eea
where $\tau:=\left(1-\kappa D^2/2\right)^{-1}$.
\end{proposition}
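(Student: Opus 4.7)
The plan is to reduce the multi-variable statement to a one-dimensional inequality along the line segment. Parametrize $[w_1,w_2]$ by $\alpha\in[0,1]$ via $v(\alpha)=\alpha w_1+(1-\alpha)w_2$, and set $g(\alpha):=\widehat F(v(\alpha))$. Then $g(0)=\widehat F(w_2)$ and $g(1)=\widehat F(w_1)$; since $g$ is continuous on the compact interval $[0,1]$, it attains a maximum $M:=g(\alpha^\star)$ at some $\alpha^\star\in[0,1]$. Our target is $M\le\tau\max\{g(0),g(1)\}$.

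The next step is to translate the self-bounded weak convexity hypothesis into a differential inequality for $g$. By the chain rule,
\begin{equation*}
g''(\alpha)=(w_1-w_2)^\top \nabla^2 \widehat F(v(\alpha))\,(w_1-w_2),
\end{equation*}
so combining the SBWC bound $\lambda_{\min}(\nabla^2 \widehat F(v(\alpha)))\ge -\kappa\,\widehat F(v(\alpha))$ along $[w_1,w_2]$ with $\|w_1-w_2\|\le D$ gives
\begin{equation*}
g''(\alpha)\;\ge\;\lambda_{\min}\!\left(\nabla^2 \widehat F(v(\alpha))\right)\|w_1-w_2\|^2\;\ge\;-\kappa D^2\,g(\alpha).
\end{equation*}

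If $\alpha^\star\in\{0,1\}$ the claim is immediate since $\tau\ge 1$. Otherwise $\alpha^\star\in(0,1)$ is an interior maximum, so $g'(\alpha^\star)=0$. Expand $g$ about $\alpha^\star$ toward the boundary point $0$ using Taylor's theorem with Lagrange remainder: there exists $\xi\in[0,\alpha^\star]$ with
\begin{equation*}
g(0)\;=\;g(\alpha^\star)+g'(\alpha^\star)(0-\alpha^\star)+\tfrac{1}{2}g''(\xi)\,\alpha^{\star 2}\;=\;M+\tfrac{1}{2}g''(\xi)\,\alpha^{\star 2}.
\end{equation*}
The differential inequality together with $g(\xi)\le M$ yields $g''(\xi)\ge-\kappa D^2 M$, and using $\alpha^{\star 2}\le 1$ we obtain $g(0)\ge M(1-\kappa D^2/2)$. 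The hypothesis $D<\sqrt{2/\kappa}$ ensures $1-\kappa D^2/2>0$, so dividing gives $M\le\tau\,\widehat F(w_2)\le\tau\max\{\widehat F(w_1),\widehat F(w_2)\}$.

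The only technical point is the interior-maximum case, dispatched by the Taylor step above; it relies on nothing beyond nonnegativity of $\widehat F$ (to upgrade $-\kappa D^2 g(\xi)$ to $-\kappa D^2 M$) and the SBWC bound at a single intermediate point. One could alternatively compare $g$ to a supersolution of the ODE $u''+\kappa D^2 u=0$, but trigonometric bounds do not improve the $\tau$ factor, so the single-point Taylor expansion is the most economical argument.
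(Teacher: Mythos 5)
Your proof is correct: the reduction to the scalar function $g(\alpha)=\widehat F(\alpha w_1+(1-\alpha)w_2)$, the bound $g''\ge -\kappa D^2 g$ from the self-bounded weak convexity (using $\widehat F\ge 0$ to pass from $\|w_1-w_2\|^2$ to $D^2$), and the Taylor expansion from an interior maximizer to an endpoint together give exactly $M\le\tau\max\{\widehat F(w_1),\widehat F(w_2)\}$. The present paper imports this proposition from the cited reference without reproving it, and your argument is essentially the same interior-maximum/Taylor-remainder proof used there, so there is nothing to flag.
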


For our objective we have the minimum eigenvalue of hessian being lower bounded by $\kappa  = \widetilde C:=\beta_L \tilde\rho^{3L}$. Therefore for $w_{t},w^\star$, if $\|w^\star-w_t\|^2 <D^2<1/2\widetilde C$,  then the GLQC property holds with $\tau = 4/3$. Recall that by the defnition of $\tilde\rho$ it holds that $\|w^\star-w_t\|\le \tilde\rho$. Thus, to guarantee the conditions of the proposition above are satisfied it suffices to hold that 
\bea
\sqrt{m}\ge 2 \beta_L\tilde\rho^{3L+2}.
\eea
which is exactly the width condition of our theorem. With this condition on width and the resulting simplifications from the proposition, we can write: 
\begin{align*}
\frac{1}{T} \sum_{t=1}^T \widehat{F}\left(w_t\right) &\leq \widehat{F}(w^\star)+\frac{\left\|w^\star-w_0\right\|^2}{\eta T}+\frac{\beta_L \tilde\rho^{3L}}{\sqrt{m}} \frac{2}{3T} \sum_{t=0}^{T-1} \max  \{\Fh\left(w^\star\right),\Fh(w_t)\}\left\|w^\star-w_t\right\|^2,\\
&\le \widehat{F}(w^\star)+\frac{\left\|w^\star-w_0\right\|^2}{\eta T}+ \frac{1}{3T} \sum_{t=0}^{T-1} \max  \{\Fh\left(w^\star\right),\Fh(w_t)\}\\
&\le \frac{4}{3}\widehat{F}(w^\star)+\frac{\left\|w^\star-w_0\right\|^2}{\eta T}+ \frac{1}{3T} \sum_{t=0}^{T-1}\Fh(w_t)
\end{align*}
Thus,
\bea
\frac{1}{T} \sum_{t=1}^T \widehat{F}\left(w_t\right) \le 2 \Fh(w^\star)+\frac{3\left\|w^\star-w_0\right\|^2}{2\eta T}+ \frac{1}{2T} \Fh(w_0)
\eea

Thus far, we have characterized the convergence under the condition that $\sqrt{m}\ge 2\beta_L\tilde\rho^{3L+2} = 2\beta_L (\max\{\|w_{T-1}-w_0\|,\cdots,\|w_1-w_0\|,\|w^\star-w_0\|\})^{3L+2}.$ It remains to characterize the above condition on width independent of the iteration number. In fact, in what follows we are able to show it is sufficient to reduce this condition such that it depends on only $\|w^\star-w_0\|$. We do it by an iterate-wise norm bound on $\|w^\star-w_t\|$ based on $\|w^\star-w_0\|$. 

Recall that we had,
\bea
\Fh(w_{t+1}) \le \widehat F(w^\star) +\frac{1}{\eta}\left(\|w^\star-w_t\|^2-\|w^\star-w_{t+1}\|^2 \right) +\frac{1}{2}\widetilde C\max_{w'_t\in[w_t,w^\star]}\widehat F(w_t') \,\|w^\star-w_t\|^2.
\eea
where recall that $\widetilde C=\frac{\beta_L \tilde \rho^{3L}}{\sqrt{m}},$ and $\widetilde \rho:= \max\{\rho(1),\rho(2),...\rho(T-1),\rho^\star\}$.

Hence,
\bea\label{eq:induct}
\|w_{t+1}-w^\star\|^2 \le \|w_t -w^\star\|^2 + \eta \Fh(w^\star) + \frac{\eta}{2} \frac{\beta_L \tilde \rho^{3L}}{\sqrt{m}}  \max_{w'_t\in[w_t,w^\star]}\widehat F(w_t') \,\|w^\star-w_t\|^2.
\eea
The following lemma proves through an induction-based argument that choosing $\tilde\rho = 3\rho^\star$ and the corresponding width condition based on this $\tilde\rho$ is sufficient to ensure that $\rho(t)\le \tilde\rho$ for all iterates $t\in[T]$; which in turn guarantees the above argument for convergence to be valid.

\begin{lemma}
     Fix any $T\geq 0$
and assume any $w^\star$ such that
 \begin{align}
\|w^\star-w_0\|^2\geq \max\{\eta T \widehat F(w^\star),\eta \widehat F(w_0)\}.
\end{align}
Pick $\widetilde \rho = 3\rho^\star = 3\|w^\star-w_0\|$ and assume $m$ is large enough such that $\widetilde \rho^2<1/2\widetilde C$ where $\widetilde C= \frac{\beta_L \tilde\rho^{3L}}{\sqrt{m}}$. Then $\rho(t):=\|w_t-w_0\| \le \widetilde \rho$ and $\|w_t-w^\star\| \le \widetilde \rho$ for all $t\in[T]$.
\end{lemma}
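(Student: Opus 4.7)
I would prove this by induction on $t$, with the induction hypothesis being the conjunction $(H_t)$: \emph{for every $s \le t$, $\|w_s-w_0\|\le\widetilde\rho$ and $\|w_s-w^\star\|\le\widetilde\rho$}. The base case $t=0$ is immediate since $\|w_0-w_0\|=0$ and $\|w_0-w^\star\|=\rho^\star\le 3\rho^\star=\widetilde\rho$. For the inductive step, I need to show $(H_{t+1})$ assuming $(H_t)$. The key observation is that $(H_t)$ ensures that for every $s\le t$, the entire segment $[w_s,w^\star]$ lies inside the Euclidean ball $\mathcal{B}_{\widetilde\rho}(w_0)$, which is precisely the region where Lemma \ref{lem:hess} yields the Hessian bound with constant $\widetilde C = \beta_L\widetilde\rho^{3L}/\sqrt{m}$, and where the weak convexity constant $\kappa=\widetilde C$ applies.

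Next, under the width hypothesis $\widetilde\rho^{2}<1/(2\widetilde C)$, Proposition~\ref{prop:GLQCapp} is applicable on every such segment with $D=\widetilde\rho$, and yields $\tau=(1-\widetilde C\widetilde\rho^{2}/2)^{-1}\le 4/3$. Inserting this into inequality \ref{eq:descent with weak convexity} at step $s$ and using $\max\{a,b\}\le a+b$ for non-negative quantities, I obtain the one-step estimate
\begin{equation*}
\Fh(w_{s+1}) - \tfrac13\Fh(w_s) \;\le\; \tfrac43\Fh(w^\star) + \tfrac1\eta\bigl(\|w^\star-w_s\|^2-\|w^\star-w_{s+1}\|^2\bigr).
\end{equation*}
Telescoping from $s=0$ to $s=t$, rearranging, and dropping the non-negative terms $\tfrac{2}{3}\sum_{s=1}^{t}\Fh(w_s)$ and $\Fh(w_{t+1})$ on the left-hand side gives
\begin{equation*}
\tfrac{1}{\eta}\|w^\star-w_{t+1}\|^{2} \;\le\; \tfrac{1}{\eta}\|w^\star-w_{0}\|^{2} + \tfrac{4(t+1)}{3}\Fh(w^\star) + \tfrac{1}{3}\Fh(w_{0}).
\end{equation*}

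Finally, I would invoke the hypotheses $\eta T\Fh(w^\star)\le{\rho^\star}^{2}$ and $\eta\Fh(w_0)\le{\rho^\star}^{2}$ (together with $t+1\le T$) to conclude $\|w^\star-w_{t+1}\|^{2}\le(1+\tfrac{4}{3}+\tfrac{1}{3}){\rho^\star}^{2}=\tfrac{8}{3}{\rho^\star}^{2}\le 4{\rho^\star}^{2}$, hence $\|w_{t+1}-w^\star\|\le 2\rho^\star\le\widetilde\rho$. A triangle inequality then yields $\|w_{t+1}-w_0\|\le\|w_{t+1}-w^\star\|+\|w^\star-w_0\|\le 3\rho^\star=\widetilde\rho$, closing the induction.

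The main obstacle I anticipate is the apparent circularity: the one-step inequality \ref{eq:induct} relies on the Hessian bound with parameter $\widetilde C$, which in turn presumes that the segment $[w_s,w^\star]$ stays within $\mathcal{B}_{\widetilde\rho}(w_0)$ along the entire gradient-descent path up to step $t$. This is exactly what an unconditional argument cannot guarantee, and it is the reason the induction is essential rather than a direct estimate. The constants also need to be sharp: choosing $\widetilde\rho=3\rho^\star$ is calibrated so that the accumulated increase in $\|w-w^\star\|^2$ (at most $(4/3+1/3){\rho^\star}^2$) plus the initial distance $\rho^\star$ stays bounded by $2\rho^\star$, leaving exactly enough room for the triangle inequality to deliver $\widetilde\rho=3\rho^\star$ and not something larger.
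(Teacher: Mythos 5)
Your proposal is correct and follows essentially the same route as the paper: induction on $t$, the base case from $\|w_0-w^\star\|=\rho^\star\le\widetilde\rho$, the GLQC property (Proposition~\ref{prop:GLQCapp} with $\tau=4/3$ under $\widetilde C\widetilde\rho^2<1/2$) applied on each segment $[w_s,w^\star]$, a telescoping bound on $\|w^\star-w_{t+1}\|^2$, and a final triangle inequality giving $\|w_{t+1}-w_0\|\le 2\rho^\star+\rho^\star=\widetilde\rho$. The one worthwhile difference is in the bookkeeping of the telescoping: the paper telescopes the squared-distance recursion (Eq.~\ref{eq:induct}) and is then forced to substitute the partial training-loss bound $\sum_{s<T}\Fh(w_s)\le 2T\Fh(w^\star)+\tfrac{3{\rho^\star}^2}{2\eta}+\tfrac12\Fh(w_0)$ from the convergence argument, whereas you telescope the full descent-with-weak-convexity inequality \ref{eq:descent with weak convexity} and simply drop the non-negative accumulated loss terms $\tfrac23\sum_{s=1}^{t}\Fh(w_s)$ and $\Fh(w_{t+1})$. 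Your version is self-contained within the induction and sidesteps the mild circularity of importing a training-loss bound whose own validity rests on the iterates staying in the ball; the resulting constant ($\tfrac83{\rho^\star}^2$ versus the paper's $4{\rho^\star}^2$) still lands below $(2\rho^\star)^2$, so the conclusion is unchanged.
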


\begin{proof}
The statements of the lemma are correct for $t=0$. Now assume for $t\le T-1$ we have $\rho(t) \le \widetilde \rho$ and $\|w_t-w^\star\|\le \widetilde \rho.$ Recall, by \ref{eq:induct}
\bea
\|w_T-w^\star\|^2 &\le \|w_{T-1}-w^\star\|^2 + \eta \Fh(w^\star) + \frac{\eta}{2} \frac{\beta_L\tilde{\rho}^{3L}}{\sqrt{m}} \|w_{T-1}-w^\star\|^2 \max_{w'_{T-1}\in[w_{T-1},w^\star]}\widehat F(w'_{T-1})
\eea
Note that by the assumption of the lemma and the induction assumption it holds that 
$$\max\{\|w_{T-1}-w_0\|,\|w^\star-w_0\| \}\le \widetilde \rho.$$ As a result, for any $w\in[w_{T-1},w^\star]$, it holds that $\|w-w^\star\|\le \tilde \rho$ and thus $\|\nabla \Fh(w)\|\ge \frac{\beta_L\tilde\rho^{3L}}{\sqrt{m}}\Fh(w)$. Moreover, note that as per the lemma's assumption $\|w_{T-1}-w^\star\|^2 \le \widetilde \rho^2\le 1/2\widetilde C$. Thus, the conditions of Proposition \ref{prop:GLQCapp} hold with $\tau=4/3$ and we have 
\begin{align*}
\|w_T-w^\star\|^2 &\le \|w_{T-1}-w^\star\|^2 + \eta \Fh(w^\star) + \frac{2\eta}{3} \frac{\beta_L \tilde\rho^{3L}}{\sqrt{m}} \|w_{T-1}-w^\star\|^2\cdot\max\{\Fh(w_{T-1}),\Fh(w^\star)\}
\end{align*}
Noting again that $\|w_{T-1}-w^\star\|^2\le \widetilde \rho^2\le 1/2\widetilde C$ yields,
\begin{align*}
\|w_T-w^\star\|^2 &\le \|w_{T-1}-w^\star\|^2 + \eta \Fh(w^\star) + \frac{\eta}{3}\max\{\Fh(w_{T-1}),\Fh(w^\star)\}\\
&\le \|w_{T-1}-w^\star\|^2 + \frac{4\eta}{3} \Fh(w^\star) +\frac{\eta}{3} \Fh(w_{T-1})
\end{align*}

Since the conditions of the lemma hold for all $t\in[T-1]$, we can telescope the summation to obtain,
\begin{align*}
\|w_T-w^\star\|^2 &\le \|w_0-w^\star\|^2 + \frac{4\eta}{3} T\Fh(w^\star) + \frac{\eta}{3} \sum_{t=0}^{T-1}\Fh(w_t) \\
&\le \|w_0-w^\star\|^2 + \frac{4\eta}{3} T\Fh(w^\star) + \frac{\eta}{3} \Fh(w_0) +  \frac{\eta}{3} \sum_{t=1}^{T-1}\Fh(w_t)\\
&\le \|w_0-w^\star\|^2 + \frac{4\eta}{3} T\Fh(w^\star) + \frac{\eta}{3} \Fh(w_0) + \frac{\eta}{3} \Big(2 T\Fh(w^\star)+\frac{3\left\|w^\star-w_0\right\|^2}{2\eta }+ \frac{1}{2} \Fh(w_0)\Big)\\
 &= \|w_0-w^\star\|^2 + 2\eta T \Fh(w^\star) + \frac{\eta}{2} \Fh(w_0) + \frac{\|w^\star-w_0\|^2}{2} 
\end{align*}
Therefore by the conditions of the lemma and noting that $A_0 = \|w_0-w^\star\|^2 = {(\rho^\star)}^2$:
\begin{align*}
\|w_T-w^\star\|^2 \le \|w_0-w^\star\|^2 + 2 \|w_0-w^\star\|^2 + \frac{1}{2} \|w^\star-w_0\|^2 +\frac{1}{2} \|w^\star-w_0\|^2  = 4 (\rho^\star)^2
\end{align*}
Thus, $\|w_T-w^\star\|^2\le 9(\rho^\star)^2 = \widetilde\rho^2.$ It remains to prove that $\rho(T):= \|w_T-w_0\|\le \widetilde \rho$. To obtain this, note that by triangle inequality, 
\begin{align*}
\|w_T-w_0\| &\le \|w_T-w^\star\| + \|w^\star - w_0\| \\
&\le 2 \rho^\star +\rho^\star = 3\rho^\star=\widetilde \rho.
\end{align*}
The proof is complete.
\end{proof}

The lemma above shows that controlling $\rho^\star=\|w^\star-w_0\|$ is enough to guarantee that GLQC property (Proposition \ref{prop:GLQCapp}) holds throughout the gradient path. In particular, if the width is large enough such that for $\widetilde\rho = 3\rho^\star$: $
\widetilde \rho^2 \le \frac{1}{2\widetilde C} = \frac{\sqrt{m}}{2\beta_L \tilde\rho^{3L}}$
or equivalently 
$$\sqrt{m}\ge 2\beta_L \tilde \rho^{3L+2}.$$
then the conditions of the lemma hold and the bounds on the training loss are valid.  Note that we need to ensure the conditions of the descent lemma (Lem \ref{lem:des}) hold throughout the GD iterates. This is guaranteed by taking 
\begin{align}\label{eq:descon}
\eta\le \frac{1}{(\widetilde C_1)^2+\tilde C}
\end{align}
where note that $\tilde C_1 = G_0 + \beta_L \frac{(3\|w^\star-w_0\|)^{3L+1}}{\sqrt{m}},$ and $\tilde C = \beta_L \frac{(3\|w^\star-w_0\|)^{3L}}{\sqrt{m}}$. As per Lemma \ref{lem:gradients_and_hessians_bounds_general}, this condition ensures that step-size is smaller than the inverse of objective's smoothness parameter throughout all iterates since $\|w_t-w_0\|\le 3\|w^\star -w_0\|$ for all $t\in[T]$.

This completes the proof of Theorem \ref{thm:b1}.

\subsection{Proof for Generalization loss}
\label{sec:proofthm2}
\begin{theorem}[Generalization loss of GD in deep nets]\label{thm:b2}
    Let the descent lemma hold. Fix a $T$ and assume the target weights vector $w^\star\in\R^p$ that separates the data distribution such that 
\bea
\rho^\star\geq \max\left\{\sqrt{\eta T \widehat F(w^\star)},\sqrt{\eta \widehat F(w_0)}\right\}.
\eea
where $\rho^\star:=\|w^\star-w_0\|$. Moreover, assume the width $m$ is large enough such that it satisfies,
\bea
\sqrt{m}\ge 4 \beta_L\bar \rho^{3L+2}
\eea
where $\bar \rho  = 6\rho^\star$.
Then the expected generalization error satisfies with high probability,
\bea
\E_{\mathcal{S}}\Big[{F}(w_T)-\widehat F(w_T)\Big] \le 9\frac{ (G_0+1/4)^2  {\rho^\star}^2}{n},
\eea 
where $G_0$ is the Lipschitz parameter of network at initialization  i.e., $\|\nabla \Phi(w_0,\cdot)\|\le G_0.$ 
\end{theorem}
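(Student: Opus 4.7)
The plan is an algorithmic-stability argument that exploits the self-bounded weak convexity of $\widehat F$ along the GD path, adapting the two-layer strategy of \cite{taheri2024generalization} to the deep setting by leveraging the Hessian control in Lemma \ref{lem:gradients_and_hessians_bounds_general}. Introduce a coupled trajectory: let $\mathcal S^{(i)}$ be the dataset obtained from $\mathcal S=\{z_1,\dots,z_n\}$ by replacing $z_i$ with an independent copy $z_i'$, and let $w_t^{(i)}$ be the GD iterates on $\widehat F_{\mathcal S^{(i)}}$ started from the same $w_0$. The standard leave-one-out identity yields
\[
\E_{\mathcal S}\left[F(w_T)-\widehat F(w_T)\right] = \frac{1}{n}\sum_{i=1}^n \E\left[\ell(w_T;z_i')-\ell(w_T^{(i)};z_i')\right],
\]
so the task reduces to bounding $\E\|w_T-w_T^{(i)}\|$ and combining it with a Lipschitz factor $C_1\le G_0+1/4$ derived from $|f'|\le 1$ together with the gradient bound of Section \ref{sec:gradnorm}.

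The heart of the argument is a one-step recursion. Decomposing
\[
w_{t+1}-w_{t+1}^{(i)}=\bigl(w_t-\eta\nabla\widehat F_{\mathcal S}(w_t)\bigr)-\bigl(w_t^{(i)}-\eta\nabla\widehat F_{\mathcal S}(w_t^{(i)})\bigr)+\tfrac{\eta}{n}\bigl(\nabla\ell(w_t^{(i)};z_i')-\nabla\ell(w_t^{(i)};z_i)\bigr),
\]
I would bound the first bracket using the weak-convexity estimate $\lambda_{\min}(\nabla^2\widehat F_{\mathcal S}(v))\ge -\tilde C\,\widehat F_{\mathcal S}(v)$ (Lemma \ref{lem:gradients_and_hessians_bounds_general}) together with the descent lemma: the GD map is $(1+\eta\mu_t)$-Lipschitz along the segment $[w_t,w_t^{(i)}]$ with $\mu_t=\tilde C\,\sup_{v\in[w_t,w_t^{(i)}]}\widehat F_{\mathcal S}(v)$. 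For the single-sample residual I would use the sharper loss-Lipschitz bound $\|\nabla_1\ell(w;z)\|\le C_1\,\ell(w;z)$, which follows from $|f'(t)|\le f(t)$ for the logistic loss. Taking expectation over $z_i,z_i'\sim\mathcal D$ this yields
\[
\E\|w_{t+1}-w_{t+1}^{(i)}\|\le (1+\eta\mu_t)\,\E\|w_t-w_t^{(i)}\|+\frac{2\eta C_1}{n}\,\E\widehat F(w_t^{(i)}).
\]

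Unrolling and applying $1+x\le e^x$, the accumulated multiplicative factor is at most $\exp\bigl(\eta\tilde C\sum_s\widehat F(w_s)\bigr)$. Theorem \ref{thm:b1} provides $\sum_s\widehat F(w_s)=O(\rho^{\star 2}/\eta)$, and the width condition $\sqrt m\ge 4\beta_L\bar\rho^{3L+2}$ with $\bar\rho=6\rho^\star$ forces $\eta\tilde C\sum_s\widehat F(w_s)\le 1/4$, so this exponential is $O(1)$. A GLQC step via Proposition \ref{prop:GLQCapp} converts $\sup_{[w_t,w_t^{(i)}]}\widehat F$ into $O(\widehat F(w_t))$. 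Altogether, $\E\|w_T-w_T^{(i)}\|\lesssim (\eta C_1/n)\sum_t\widehat F(w_t)\lesssim C_1\rho^{\star 2}/n$, and multiplying by $C_1$ delivers the advertised $O(C_1^2\rho^{\star 2}/n)$ gap. The bound $C_1\le G_0+1/4$ follows because $\beta_L\bar\rho^{3L+1}/\sqrt m\le 1/(4\bar\rho)$ under the width hypothesis.

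The main obstacle is running a coupled induction that simultaneously controls (i) the excursions $\|w_t-w_0\|$ and $\|w_t^{(i)}-w_0\|$ uniformly within $\bar\rho=6\rho^\star$, so that the Hessian estimate $\tilde C=\beta_L\bar\rho^{3L}/\sqrt m$ is valid along the entire coupled trajectory, and (ii) the stability distance itself. The excursion claim for $w_t^{(i)}$ mirrors the induction used in Theorem \ref{thm:b1} (this is the reason the admissible radius roughly doubles from $3\rho^\star$ to $6\rho^\star$), exploiting that $w^\star$ remains an approximate minimizer of $\widehat F_{\mathcal S^{(i)}}$ up to the one-sample perturbation. These two inductions must be carried out in lockstep, since each estimate is required to justify the Hessian bound used by the other.
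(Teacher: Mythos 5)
Your proposal is correct and follows essentially the same route as the paper's proof: an on-average stability bound via a perturbed GD trajectory, a one-step expansiveness recursion controlled by the self-bounded weak convexity of $\widehat F$ (Lemma \ref{lem:gradients_and_hessians_bounds_general}) and the GLQC property (Proposition \ref{prop:GLQCapp}), unrolled with $1+x\le e^x$ and closed using the training-loss bound of Theorem \ref{thm:b1} together with the excursion control $\|w_t-w_0\|,\|w_t^{\neg i}-w_0\|\le 3\rho^\star$, which is exactly why the radius doubles to $\bar\rho=6\rho^\star$. The only cosmetic difference is that you use the replace-one-sample formulation of stability where the paper uses the leave-one-out loss $F^{\neg i}$ and Lemma \ref{lem:sk22}; these are interchangeable here.
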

As per the last theorem, we know if $m$ is large enough based on $\rho^\star$, then the hessian during the iterates of GD satisfies $\|\nabla^2 \Phi(w,x)\|\le  C_2 $ and thus as per Lemma \ref{lem:gradients_and_hessians_bounds_general} we have,

\begin{enumerate}
  
\item $\|\nabla \Fh(w)\|_F\le C_1 \, \Fh(w).$

\item $\|\nabla^2 \Fh(w)\|_2 \leq  ( C_2+ (C_1)^2) \Fh(w).$

\item $\lambda_\min\left(\nabla^2 \widehat F(w)\right) \geq -  C_2 \Fh(w)$.

\end{enumerate}
throughout this section and with a slight abuse of notation, we define $C_1 = G_0 + \beta_L (3\|w-w_0\|)^{3L+1}/\sqrt{m},$ $C_2 = \beta_L (3\|w-w_0\|)^{3L}/\sqrt{m}$ and $G_0$ is the Lipschitz parameter of the network at initialization i.e., $\|\nabla \Phi(w_0,\cdot)\|\le G_0.$ Recall that in obtaining these quantities we used the guarantees of Thm \ref{thm:train-test} that along the gradient descent path $\max\{\|w_t-w_0\|,\|w_t-w^\star\|\}\le 3\|w^\star-w_0\|$

We derive the generalization bound through leave-one-out error estimates. In particular, let $w_t^{\neg i}$ denote the output of $t$-th iteration of GD on the leave-one-out loss $F^\negi(\cdot):\R^p\rightarrow\R$ defined as $F^\negi (w):=$ $\frac{1}{n} \sum_{j \neq i} \widehat{F}_j(w)$. Then, a simple calculation based on the Lipschitz property of the objective relates the generalization loss to the leave-one-out error as stated in the following lemma. 

\begin{lemma}[Lem. 6 in \cite{schliserman2022stability}, Thm. 2 in \cite{lei2020fine}]\label{lem:sk22}
   Suppose each sample loss $f(\cdot,z)$ is $G$-Lipschitz  for almost surely all data points $z\sim\mathcal{D}$. Then, the following relation holds between expected generalization loss and model stability at any iterate $T$,
   \begin{align}
       \E_{\mathcal{S}}\Big[{F}(w_T)-\widehat F(w_T)\Big] \leq  2G\cdot \;\E_{\mathcal{S}}\Big[\frac{1}{n}\sum_{i=1}^n\|w_T-w_T^{\neg i}\|\Big].
   \end{align}
\end{lemma}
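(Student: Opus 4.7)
The plan is to follow the standard leave-one-out decomposition of the generalization gap. I would start by writing
\[
\E_{\mathcal{S}}\bigl[F(w_T)-\widehat{F}(w_T)\bigr] = \frac{1}{n}\sum_{i=1}^{n} \E_{\mathcal{S}}\bigl[F(w_T) - f(w_T, z_i)\bigr],
\]
which simply expands $\widehat{F}$ as an average over the sample $\mathcal{S}=\{z_1,\dots,z_n\}$. The idea is to treat each summand by inserting the leave-one-out iterate $w_T^{\neg i}$ (the output of GD run on $\mathcal{S}\setminus\{z_i\}$) as a pivot.

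Next I would use the identity
\[
F(w_T) - f(w_T, z_i) = \bigl[F(w_T) - F(w_T^{\neg i})\bigr] + \bigl[F(w_T^{\neg i}) - f(w_T^{\neg i}, z_i)\bigr] + \bigl[f(w_T^{\neg i}, z_i) - f(w_T, z_i)\bigr].
\]
The crucial observation is that $w_T^{\neg i}$ depends only on $\mathcal{S}\setminus\{z_i\}$ and is therefore independent of $z_i$, so conditioning on $\mathcal{S}\setminus\{z_i\}$ yields $\E_{z_i}[f(w_T^{\neg i}, z_i)] = F(w_T^{\neg i})$; hence the middle bracket has zero expectation. This is the only place probabilistic structure enters, and it is the step I would be most careful about stating cleanly.

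For the remaining two brackets I would invoke the $G$-Lipschitz assumption on $f(\cdot, z)$. For the third bracket this is immediate: $|f(w_T^{\neg i}, z_i) - f(w_T, z_i)| \le G \|w_T - w_T^{\neg i}\|$. For the first bracket, the Lipschitz property passes through the expectation over a fresh test point: $|F(w_T) - F(w_T^{\neg i})| = |\E_{z}[f(w_T, z) - f(w_T^{\neg i}, z)]| \le G \|w_T - w_T^{\neg i}\|$. Adding these two contributions gives the factor of $2G$.

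Combining the three pieces, taking expectation over $\mathcal{S}$, and averaging over $i\in[n]$ yields
\[
\E_{\mathcal{S}}\bigl[F(w_T)-\widehat{F}(w_T)\bigr] \le 2G\cdot \E_{\mathcal{S}}\!\left[\frac{1}{n}\sum_{i=1}^{n} \|w_T - w_T^{\neg i}\|\right],
\]
which is exactly the claim. The argument is essentially mechanical once the pivot decomposition and the independence of $w_T^{\neg i}$ from $z_i$ are in place; no convexity, smoothness, or properties of the optimization procedure are required beyond measurability of $w_T$ and $w_T^{\neg i}$ as functions of the sample.
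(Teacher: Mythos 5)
Your proof is correct, and it is essentially the standard leave-one-out argument behind this lemma: the paper itself does not prove it but imports it from the cited references, whose proofs follow exactly your decomposition (pivot on $w_T^{\neg i}$, use independence of $w_T^{\neg i}$ from $z_i$ to kill the middle term, and apply the $G$-Lipschitz bound twice to get the factor $2G$). The only property of the algorithm that matters is that $w_T^{\neg i}$ is measurable with respect to $\mathcal{S}\setminus\{z_i\}$, which holds for the leave-one-out GD iterates used in the paper, so your argument applies verbatim.
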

We will use the fact that under the descent lemma $\eta<1/( C_2+C_1^2)$, based on \cite[Lem. 9]{taheri2024generalization}:

\begin{align}\label{eq:expansion}
& \left\|\left(w-\eta \nabla \widehat{F}(w)\right)-\left(w^{\prime}-\eta \nabla \widehat{F}\left(w^{\prime}\right)\right)\right\| \leq \max _{\tilde w \in[w,w']} (1+ \eta C_2\Fh(\tilde w))\left\|w-w^{\prime}\right\| .
\end{align}

Then, we can bound the leave-one-out error as follows:
\begin{align*}
\left\|w_{t+1}-w_{t+1}^{\neg i}\right\| & \leq\left\|\left(w_t-\eta \nabla \widehat{F}^{\neg i}\left(w_t\right)\right)-\left(w_t^{\neg i}-\eta \nabla \widehat{F}^{\neg i}\left(w_t^{\neg i}\right)\right)\right\|+\frac{\eta}{n}\left\|\nabla \widehat{F}_i\left(w_t\right)\right\| \\
& \leq\left\|\left(w_t-\eta \nabla \widehat{F}^{\neg i}\left(w_t\right)\right)-\left(w_t^{\neg i}-\eta \nabla \widehat{F}^{\neg i}\left(w_t^{\neg i}\right)\right)\right\|+\frac{\eta  C_1}{n} \widehat{F}_i\left(w_t\right) \\
& \leq\left(1+\eta  C_2 \max _{w_t ' \in[w_t,w_t^\negi]} \widehat{F}^{\neg i}\left(w'_t\right)\right)\left\|w_t-w_t^{\neg i}\right\|+\frac{\eta C_1}{n} \widehat{F}_i\left(w_t\right),
\end{align*}

By the aforementioned property from Proposition \ref{prop:GLQCapp}, we have:
\begin{align*}
\max_{w_t ' \in[w_t,w_t^\negi]} \widehat{F}^{\neg i}\left(w'_t\right) &\le \frac{4}{3} \max\{ \Fh^{\negi}(w_t),\Fh^{\negi}(w_t^\negi) \}\\
&\le \frac{4}{3} \max\{ \Fh(w_t),\Fh^{\negi}(w_t^\negi) \}\\
&\le \frac{4}{3} (\Fh(w_t) + \Fh^{\negi}(w_t^\negi))
\end{align*}
Note that for $w_{t}^'$ we have 
\bea
\|w_t'-w_0\| \le \alpha \|w_t-w_0\| + (1-\alpha)\|w_t^\negi-w_0\| \le \widetilde \rho
\eea
therefore the self-bounded weak convexity holds with $\kappa= C_2$.
Moreover, $$\|w_t-w_t^\negi\|\le \|w_t-w_0\| + \|w_t^\negi-w_0\|\le 3 \rho^\star + 3\rho^\star = 6\rho^\star = 2\widetilde \rho$$

Denote $\bar \rho:=6\rho^\star (=2\widetilde\rho)$. 
If $m$ is large enough such that $\bar\rho^2<1/2\bar C_2$ where recall $\bar C_2= \frac{\beta_L \bar\rho^{3L}}{\sqrt{m}},$
then the GLQC property holds with $\tau = 4/3$ and we have,
\begin{align*}
\left\|w_{t+1}-w_{t+1}^{\neg i}\right\| &\le \left(1+\frac{4}{3}\eta\bar C_2 (\Fh(w_t) + \Fh^{\negi}(w_t^\negi))\right)+\frac{\eta C_1}{n} \widehat{F}_i\left(w_t\right)\\
&\le \exp(\frac{4}{3}\eta\bar C_2 (\Fh(w_t) + \Fh^{\negi}(w_t^\negi)))+\frac{\eta C_1}{n} \widehat{F}_i\left(w_t\right)\\
&\le \frac{\eta  C_1}{n} \sum_{k=0}^t \exp\left(\sum_{\tau=1}^t \frac{4}{3}\eta\bar C_2 (\Fh(w_\tau) + \Fh^{\negi}(w_\tau^\negi))\right) \Fh_i(w_k)
\end{align*}
Fix a $T$ and assume $\bar C_2$ is small enough such that 
\bea\label{eq:.05}
\frac{4}{3}\eta\bar C_2\sum_{\tau=1}^T  (\Fh(w_\tau) + \Fh^{\negi}(w_\tau^\negi))\le \frac{16}{3}\bar C_2(\eta T\Fh(w^\star)+{\rho^\star}^2) \le 0.05
\eea
then,
\begin{align*}
\left\|w_{T}-w_{T}^{\neg i}\right\| &\le \frac{\eta C_1 \exp(0.05)}{n}\sum_{t=0}^T \Fh_i(w_t)\\
&\le 1.1\frac{\eta C_1 }{n}\sum_{t=0}^T \Fh_i(w_t).
\end{align*}
The choice of 0.05 in Eq. \ref{eq:.05} is to ensure that the resulting constant in the generalization bound is close to 1.  By averaging over $i\in[n]$ and by  our training loss guarantees from Theorem \ref{thm:train-test}:
\begin{align*}
\frac{1}{n} \sum_{i=1}^n \left\|w_{T}-w_{T}^{\neg i}\right\| &\le 1.1\frac{\eta C_1}{n}\sum_{t=0}^T \Fh(w_t)\\
&\le 2.2\frac{C_1}{n} (\eta T \Fh(w^\star) + {\rho^\star}^2)
\end{align*}

based on Lemma \ref{lem:sk22} and noting that $G= C_1$, we find that
\bea
\E_{\mathcal{S}}\Big[{F}(w_T)-\widehat F(w_T)\Big] \le \frac{4.4 }{n}\, \E_{\mathcal{S}} \left[ C_1^2 \eta T \Fh(w^\star) + C_1^2{\rho^\star}^2 \right] 
\eea

Recall from Lemma \ref{lem:gradients_and_hessians_bounds_general} that $C_1 \le  G_0 + \beta_L\|w_t-w_0\|^{3L+1} / {\sqrt{m}} $ for all $t\in[T]$. As we require $\sqrt{m} \ge 4\beta_L (6\|w^\star-w_0\|^{3L+2})$ and from convergence guarantees it holds that $\|w_t-w_0\| \le 3\|w^\star-w_0\|$ we conclude that 
$$C_1 \le G_0 + \frac{1}{2^{3L+2}\|w^\star-w_0\|} \le G_0 + 1/4,$$
where in the last step we assumed without loss of generality that $\|w^\star-w_0\|\ge 1.$ This completes the proof for the generalization error in Theorem \ref{thm:b2}. The proof of Theorem \ref{thm:train-test} is complete. 

\section{Proof of Corollary \ref{cor:NTK}}
\begin{corollary}[Restatement of Corollary \ref{cor:NTK}]\label{cor:NTKapp}
    Assume the tangent kernel of the model at initialization separates the data with margin $\gamma$ i.e., for a unit-norm $w\in\R^p$ it holds for all $i \in [n]:$
$y_i\left\langle\nabla \Phi\left(w_0, x_i\right), w\right\rangle \geq \gamma$. Define constant $B>0$ that bounds the model's output at initialization i.e., $\forall i\in[n]: |\Phi(w_0,x_i)| < B$. Assume the width is large enough such that $m\ge \beta_L^2 (\frac{2B+\log(1/\eps)}{\gamma})^{6L+4}$. Then there exists $w^\star\in\R^p$ such that $F(w^\star)\le \eps$ and $w^\star$ lies at a  Euclidean distance of order $O(1/\gamma)$ from initialization where $\|w^\star-w_0\|\le \frac{1}{\gamma}(2B+\log(1/\eps))$.
\end{corollary}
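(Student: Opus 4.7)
The strategy is constructive: exhibit an explicit $w^\star$ built from the unit-norm NTK separator $u$ guaranteed by Assumption \ref{ass:ntkdata}, and verify the two claims via a second-order Taylor expansion of $\Phi(\cdot,x)$ about $w_0$. Concretely, I would set
\begin{equation*}
w^\star \;:=\; w_0 + \rho\, u, \qquad \rho \;:=\; \tfrac{1}{\gamma}\bigl(2B+\log(1/\eps)\bigr),
\end{equation*}
so that the distance claim $\|w^\star-w_0\|=\rho$ is immediate, and only the loss bound remains.

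For the loss bound I would expand
\begin{equation*}
\Phi(w^\star,x) \;=\; \Phi(w_0,x) \;+\; \rho\,\langle \nabla\Phi(w_0,x),u\rangle \;+\; \tfrac{1}{2}\rho^2\, u^\top \nabla^2\Phi(\widetilde w,x)\,u
\end{equation*}
for some $\widetilde w\in[w_0,w^\star]$. Multiplying by $y\in\{\pm1\}$ and combining three ingredients --- the output bound $y\Phi(w_0,x)\ge -B$; the NTK margin $y\langle \nabla\Phi(w_0,x),u\rangle\ge \gamma$; and Lemma \ref{lem:hess}, which yields $\|\nabla^2\Phi(\widetilde w,x)\|_2\le \beta_L\rho^{3L}/\sqrt{m}$ since $\|\widetilde w - w_0\|\le\rho$ --- gives the pointwise lower bound
\begin{equation*}
y\,\Phi(w^\star,x) \;\ge\; -B \;+\; \rho\gamma \;-\; \tfrac{1}{2}\beta_L\rho^{3L+2}/\sqrt{m}.
\end{equation*}
The width hypothesis $m\ge \beta_L^2\rho^{6L+4}$ forces $\beta_L\rho^{3L+2}/\sqrt m \le 1$, so the Hessian remainder is at most a harmless constant that is absorbed by the extra $B$ of headroom I built into $\rho$. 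Using $\rho\gamma = 2B+\log(1/\eps)$ then yields $y\Phi(w^\star,x)\ge \log(1/\eps)$ (up to the aforementioned absorbable constant), so the logistic loss satisfies $f(y\Phi(w^\star,x))=\log(1+e^{-y\Phi(w^\star,x)})\le e^{-y\Phi(w^\star,x)}\le \eps$ pointwise. Taking expectation over $(x,y)$ gives $F(w^\star)\le \eps$.

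The argument is essentially bookkeeping rather than a genuine obstacle: the NTK radius $\rho$ enters the Hessian remainder to the $(3L+2)$-th power, which is precisely why the width must scale like $\rho^{6L+4}$ to cancel it. The $2B$ (rather than $B$) appearing in $\rho$ is the slack needed to absorb both the $-B$ from the zeroth-order term and the constant contribution from the Hessian remainder, and the extra $\log(1/\eps)$ then produces the claimed accuracy once the logistic-loss tail estimate $\log(1+e^{-t})\le e^{-t}$ is applied. Beyond this arithmetic, the proof is a single second-order Taylor expansion combined with the Hessian estimate already recorded in Lemma \ref{lem:hess}, and requires nothing from the optimization analysis of Theorem \ref{thm:train-test}.
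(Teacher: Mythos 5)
Your proposal is correct and follows essentially the same route as the paper: the explicit choice $w^\star = w_0 + \tfrac{1}{\gamma}(2B+\log(1/\eps))\,u$, a second-order Taylor expansion with the Lagrange remainder controlled by the Hessian bound of Lemma \ref{lem:hess}, the width condition forcing $\beta_L\rho^{3L+2}/\sqrt{m}\le 1$, and absorption of the remainder into the extra $B$ of slack before applying the logistic tail bound. The only (shared) caveat is that Assumption \ref{ass:ntkdata} and the bound $B$ are stated only for the $n$ training points, so the pointwise argument directly yields $\widehat F(w^\star)\le\eps$ (as the paper's own proof concludes); passing to the population loss $F$ as you do in your final step requires the margin and output bounds to hold distribution-wide.
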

\begin{proof}[Proof of Corollary \ref{cor:NTK}]
By Taylor there exists $w^{\prime} \in\left[w^\star, w_0\right]$ such that,
$$
y_i \Phi\left(w^\star, x_i\right)=y_i \Phi\left(w_0, x_i\right)+y_i\left\langle\nabla \Phi\left(w_0, x_i\right), w^\star-w_0\right\rangle+\frac{1}{2} y_i\left\langle w^\star-w_0, \nabla^2 \Phi\left(w^{\prime}, x_i\right)\left(w^\star-w_0\right)\right\rangle
$$. 

Pick $w^\star=w_0+\frac{w}{\gamma}(2 B+\log (1 / \varepsilon))$. Since $\left\|w\right\|=1$, we  derive the desired for $\left\|w^\star-w_0\right\|$. we have $\left\|\nabla^2 \Phi\left(w^{\prime}, x_i\right)\right\| \leq O(\frac{\rho^{3L}}{\sqrt{m}})$. Choosing $\rho =\|w^\star-w_0\| = \frac{1}{\gamma}(2B+\log(1/\eps))$ and noting the given condition on $m$ implies $m \ge \beta_L^2 \rho^{6L+4}$ we obtain $
\frac{\beta_L\rho^{3L+2}}{\sqrt{m}}\le 1$.
Therefore with the given assumption on  $m\ge \frac{\beta_L^2} {\gamma^{6L+4}}(2B+\log(1/\eps))^{6L+4}$ leads to $\frac{1}{2}\left\|\nabla^2 \Phi\left(w^{\prime}, x_i\right)\right\|\left\|w^\star-w_0\right\|^2 \le 1.$ Then,
\begin{align*}
y_i \Phi\left(w^\star, x_i\right) & \geq-\left|y_i \Phi\left(w_0, x_i\right)\right|+y_i\left\langle\nabla_1 \Phi\left(w_0, x_i\right), w^\star-w_0\right\rangle-\frac{1}{2}\left\|\nabla_1^2 \Phi\left(w^{\prime}, x_i\right)\right\|\left\|w^\star-w_0\right\|^2 \\
& \geq-B+2 B+\log (1 / \varepsilon)-B\\
& = \log (1 / \varepsilon)
\end{align*}
where we assumed without loss of generality that $B\ge 1.$
After applying the logistic loss function on $y_i\Phi(w^\star,x_i)$, the inequality above gives $\widehat{F}(w) \leq \varepsilon$. This completes the proof.
\end{proof}

\section{Consistency of GD in deep nets: Proof of Theorem \ref{thm:generalization}}\label{sec:proofthm3}
\begin{theorem}[Consistency]
Assume the width of the network satisfies $m\ge\beta_L^2 n^{3L+3}$ and the step-size satisfies $\eta<1/L_F$. Then with high probability the expected test loss at iteration $T\ge \sqrt{n}$ is bounded as:
\bea\label{eq:thm3bound}
\E_{\mathcal{S}}\left[F(w_T)\right] \le \left( 1+ \frac{4}{\sqrt{n}}\right) \left( F(w^\star) + \frac{{\rho^\star}^2}{\eta \sqrt{n}} + \frac{{\rho^\star}^2}{n \sqrt{n}}  + \frac{1}{\sqrt{n}}\right)
\eea
where $w^\star$ is the minimizer of the population loss i.e., $w^\star = \arg\min_{w\in\R^p} F(w)$. 
\end{theorem}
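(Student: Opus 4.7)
The plan is to adapt the algorithmic-stability argument used for Theorem \ref{thm:train-test} to the non-interpolating regime, where $w^\star$ is now the population minimizer rather than an interpolator and $F(w^\star)$ does not vanish. The proof decomposes the expected test loss as $\E_{\mathcal{S}}[F(w_T)] = \E_{\mathcal{S}}[\hat F(w_T)] + \E_{\mathcal{S}}[F(w_T)-\hat F(w_T)]$ and controls each piece separately: the first via a descent/telescoping argument that tracks the residual $\hat F(w^\star)$, the second via a leave-one-out stability computation. The larger width condition $m\geq \beta_L^2 n^{3L+3}$ is exactly what is needed to accommodate the longer trajectory $\|w_t-w_0\|\lesssim \rho^\star$ that we must tolerate when $\rho^\star$ is allowed to scale polynomially with $n$, while keeping the Hessian's spectral norm $\bar C_2 = \beta_L \bar\rho^{3L}/\sqrt m$ small enough to close the induction.

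For the training-loss piece, I would repeat the telescoping computation from the proof of Theorem \ref{thm:b1}, but without dropping the $\hat F(w^\star)$ term. Starting from the approximate one-step inequality
\begin{equation*}
\hat F(w_{t+1}) \le \hat F(w^\star) + \tfrac{1}{\eta}\bigl(\|w^\star-w_t\|^2-\|w^\star-w_{t+1}\|^2\bigr) + \tfrac{1}{2}\bar C_2 \max_{w'\in[w_t,w^\star]}\hat F(w')\,\|w^\star-w_t\|^2,
\end{equation*}
applying the \GLQCshort bound of Proposition \ref{prop:GLQCapp} with $\tau=4/3$, and summing over $t=0,\dots,T-1$ gives $\frac{1}{T}\sum_t \hat F(w_t) \le 2\hat F(w^\star) + 2{\rho^\star}^2/(\eta T)$. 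The induction on $\|w_t-w^\star\|\le \bar\rho$ closes under the width assumption because $\bar C_2\bar\rho^2 = \beta_L \bar\rho^{3L+2}/\sqrt m \le 1/2$ whenever $\sqrt m \ge 2\beta_L \bar\rho^{3L+2}$, and the condition $m\ge \beta_L^2 n^{3L+3}$ permits trajectories with $\bar\rho = O(\sqrt n)$.

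For the generalization piece, I would run the leave-one-out expansion exactly as in Section \ref{sec:proofthm2}, using Eq.~\ref{eq:expansion} together with the \GLQCshort bound to get
\begin{equation*}
\|w_{t+1}-w_{t+1}^{\neg i}\| \le \bigl(1+\tfrac{4}{3}\eta \bar C_2 (\hat F(w_t)+\hat F^{\neg i}(w_t^{\neg i}))\bigr)\|w_t-w_t^{\neg i}\| + \tfrac{\eta C_1}{n}\hat F_i(w_t).
\end{equation*}
Unrolling and bounding the product as $\exp\bigl(\tfrac{4}{3}\eta \bar C_2 \sum_t(\hat F(w_t)+\hat F^{\neg i}(w_t^{\neg i}))\bigr)$, the width condition together with the training-loss bound and $T=\sqrt n$ makes the exponent $O(\bar C_2(\eta T\hat F(w^\star)+{\rho^\star}^2)) = O(1/\sqrt n)$, so the factor is $1+O(1/\sqrt n)$. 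Invoking Lemma \ref{lem:sk22} with Lipschitz constant $C_1 = O(1)$ and taking expectations over $\mathcal{S}$, the identity $\E_{\mathcal{S}}[\hat F(w^\star)]=F(w^\star)$ (since $w^\star$ is data-independent) converts training-loss terms into population-loss terms; combining the training-loss bound and the stability bound at $T=\sqrt n$ yields the advertised inequality, with the extra additive $1/\sqrt n$ absorbing the residual $C_1^2/n$ factors from the stability constant.

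The main obstacle is the tightrope walk between allowing $\rho^\star$ to grow (so that $w^\star$ can be a nontrivial population minimizer far from initialization) and keeping the exponential stability prefactor close to one. This is why the width inflates from poly-logarithmic (Theorem \ref{thm:train-test}) to polynomial in $n$, and why early stopping at $T=\sqrt n$ is essential: it balances the $\eta T\hat F(w^\star)$ contribution (growing in $T$) against the $\rho^{\star 2}/(\eta T)$ optimization error (decreasing in $T$), and simultaneously keeps the cumulative training loss small enough for the stability exponent to be controlled.
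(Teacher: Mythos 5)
Your overall architecture (cumulative training-loss bound via telescoping, plus a leave-one-out stability bound, combined at $T=\sqrt n$) matches the paper's, and your stability half is essentially sound --- in fact slightly sharper than the paper's, which there uses the worst-case expansiveness $(1+\eta C_2)^T\le\exp(\eta C_2T)\le 2$ rather than the self-bounded version. The gaps are in the optimization half. The training-loss bound you quote, $\frac1T\sum_t\widehat F(w_t)\le 2\widehat F(w^\star)+2{\rho^\star}^2/(\eta T)$, cannot yield the theorem: in the noisy regime $\widehat F(w^\star)\approx F(w^\star)$ is a non-vanishing constant, so the factor $2$ propagates into $\E_{\mathcal S}[F(w_T)]\lesssim 2F(w^\star)+\cdots$, whereas the theorem needs the coefficient of $F(w^\star)$ to be $1+O(1/\sqrt n)$ for consistency. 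The factor $2$ was harmless in Theorem~\ref{thm:train-test} only because there $\eta T\widehat F(w^\star)\le{\rho^\star}^2$ by assumption. The fix is to use the width condition quantitatively rather than just as ``$\widetilde C\bar\rho^2\le 1/2$ so GLQC applies with $\tau=4/3$'': with $m\ge\beta_L^2n^{3L+3}$ and trajectory radius $\bar\rho=O(\eta C_1\sqrt n)$ one has $\widetilde C\,\bar\rho^{2}=\beta_L\bar\rho^{3L+2}/\sqrt m=O(n^{-1/2})$, so the weak-convexity correction contributes only $O(n^{-1/2})\bigl(\widehat F(w^\star)+\frac1T\sum_t\widehat F(w_t)\bigr)$ and the leading coefficient stays at $1+O(n^{-1/2})$. (The paper reaches the same end by keeping $\widetilde C=O(n^{-3/2})$ and bounding the correction by $\widetilde C\bigl({\rho^\star}^2+\eta^2C_1^2T^2\bigr)$, which is precisely where the extra $+1/\sqrt n$ in the statement comes from.)

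Second, your claim that ``the induction on $\|w_t-w^\star\|\le\bar\rho$ closes under the width assumption'' silently imports the hypothesis ${\rho^\star}^2\ge\eta T\widehat F(w^\star)$ from Theorem~\ref{thm:train-test}. That hypothesis is not available here --- it would force $\rho^\star\gtrsim n^{1/4}$, which is neither assumed nor generally true for the population minimizer --- so the induction does not close, and the assertion in your first paragraph that the width is chosen to accommodate $\|w_t-w_0\|\lesssim\rho^\star$ is unjustified. The paper instead abandons the induction entirely and controls the trajectory by the crude per-step bound $\|w_t-w_0\|\le\eta\sum_{k<t}\|\nabla\widehat F(w_k)\|\le\eta tC_1$, so the relevant radius is $\bar\rho=O(\eta C_1\sqrt n)$ \emph{independently of} $\rho^\star$; the polynomial width condition is calibrated to exactly this radius, and this is what you should substitute for the induction step.
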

Recall that based on \ref{eq:expansion}
\begin{equation}
\left\|(w-\eta \nabla \widehat{F}(w))-\left(w^{\prime}-\eta \nabla \widehat{F}\left(w^{\prime}\right)\right)\right\| \leq 1+\eta \max_\alpha C_2\left\|w-w^{\prime}\right\|,
\end{equation}
where $ C_2 :=  O(\frac{\beta_L \rho^{3L}}{\sqrt{m}})$ and $\rho:= \|w-w_0\|$

In particular, note that for GD, we can obtain for $\|w-w_0\|$:
\begin{align}
\|w_T - w_0\| &\le \eta\sum_{t=0}^{T-1} \|\nabla \Fh(w_t) \| \nn\\
&\le \eta \sum_{t=0}^{T-1} C_1\nn\\
&= \eta T C_1,\label{eq:diffbound}
\end{align}
where the first step follows by update rule of GD and in the second step we used the Lipschitz property of objective during GD iterates that ensures $\|\nabla \hf (w_t)\|$ remains bounded by $C_1$ for all $t\in[T]$.
Thus, we can thus show by induction that $\|w_T-w_0\| \le \eta T C_1$. Then for $m=\Omega(T^2)$ and $w_T^\negi,w_T:$
\bea
\left\|(w_T-\eta \nabla \widehat{F}(w_T))-\left(w^\negi_T-\eta \nabla \widehat{F}\left(w_T^\negi\right)\right)\right\| \leq (1+\eta C_2)\left\|w_T-w_T^\negi\right\|,
\eea

where $ C_2 = \beta_L \tilde \rho^{3L}/\sqrt{m}$ is the smoothness parameter of the objective throughout the GD updates  $w_t$ for all $t$. Then,
\begin{align*}
\left\|w_{t+1}-w_{t+1}^{\neg i}\right\| & \leq\left\|\left(w_t-\eta \nabla \widehat{F}^{\neg i}\left(w_t\right)\right)-\left(w_t^{\neg i}-\eta \nabla \widehat{F}^{\neg i}\left(w_t^{\neg i}\right)\right)\right\|+\frac{\eta}{n}\left\|\nabla \widehat{F}_i\left(w_t\right)\right\| \\
& \leq\left\|\left(w_t-\eta \nabla \widehat{F}^{\neg i}\left(w_t\right)\right)-\left(w_t^{\neg i}-\eta \nabla \widehat{F}^{\neg i}\left(w_t^{\neg i}\right)\right)\right\|+\frac{\eta C_1}{n} \widehat{F}_i\left(w_t\right) \\
& \leq\left(1+\eta C_2 \right)\left\|w_t-w_t^{\neg i}\right\|+\frac{\eta C_1}{n} \widehat{F}_i\left(w_t\right),
\end{align*}

Recall that in the above $\Fh_i(\cdot)$ refers to the objective at the $i$th data sample and thus has the same Lipschits constant as $\Fh(w) = \frac{1}{n} \sum_{i=1}^n \Fh_i(w)$. 

Noting that $1+x\le e^x$ for $x\ge0$, we can proceed with the above inequality as follows by unrolling the iterates of $\|w_t-w_t^\negi\|$ and the fact that $w_0=w_0^\negi$:
\begin{align*}
\left\|w_{t+1}-w_{t+1}^{\neg i}\right\| &\le \exp(\eta C_2) \|w_t-w_t^\negi\| + \frac{\eta  C_1}{n} \widehat{F}_i\left(w_t\right)\\
&\le \frac{\eta  C_1}{n} \sum_{k=0}^t \exp(\eta  C_2 t) \Fh_i(w_k).
\end{align*}

Recall that, $ C_2 = \beta_L \tilde\rho^{3L}/\sqrt{m}$, and $\tilde \rho = \max\{\|w_t-w_0\|,\|w_{t-1}-w_0\|,\cdots,\|w_1-w_0\|\}$.
Based on Eq. \ref{eq:diffbound}, $\|w_t-w_0\|\le \eta t C_1$ which implies $\tilde\rho \le \eta t C_1.$ As a result, for some fixed $T$:
\bea\nn
\|w_T - w_T^\negi\|\le \frac{\eta C_1}{n} \sum_{t=0}^{T-1} \exp \left(\frac{\eta \beta_L T}{\sqrt{m}} (\eta T C_1)^{3L}\right) \Fh_i(w_t).
\eea
Thus for $m$ larger enough such that $\sqrt{m} \ge 2 \beta_L (\eta C_1 T)^{3L+1}$, the exponential term is smaller than $2$ and we conclude that,
\bea
\left\|w_T-w_T^{\neg i}\right\| &\le \frac{2\eta  C_1}{n} \sum_{t=0}^{T-1}\Fh_i(w_t).
\eea
Therefore, by averaging over $i\in[n]$ we derive the on-average leave-one-out:
\bea
\frac{1}{n} \sum_i \left\|w_T-w_T^{\neg i}\right\|  \le  \frac{2\eta C_1}{n} \sum_{t=0}^{T-1}\Fh(w_t)
\eea
The only remaining part is characterising the right hand side of the above inequality. This is done as follows.
\subsection*{Proof of optimization error}
Define $\rho(t):= \|w_t-w_0\|$ and $\rho^* = \|w^\star-w_0\|$. if we define $\widetilde \rho > \max\{\rho(t),\rho^\star\}$ for all $t\in[T]$ then for any $w\in[w_t,w^\star]$ it holds that $\|w-w_0\|\le \widetilde \rho$ and thus based on Lemma \ref{lem:gradients_and_hessians_bounds_general}

\bea
\lambda_{\min}(\nabla ^2 \Fh(w)) \ge - \widetilde C \Fh(w),
\eea
where $\widetilde C :=  \frac{\beta_L \tilde\rho^{3L}}{\sqrt{m}}$.
Therefore by Eq. \ref{eq:trainloss_gen},
\begin{align*}
\Fh(w_T) \le \frac{1}{T} \sum_{t=1}^T \widehat{F}\left(w_t\right) &\leq \widehat{F}(w^\star)+\frac{\left\|w^\star-w_0\right\|^2}{\eta T}+\widetilde C \frac{1}{2T} \sum_{t=0}^{T-1} \max _{w'_t\in[w_t,w^\star]} \widehat{F}\left(w_t'\right)\left\|w^\star-w_t\right\|^2,\\
&\le \widehat{F}(w^\star)+\frac{\left\|w^\star-w_0\right\|^2}{\eta T}+\widetilde C \frac{1}{2T} \sum_{t=0}^{T-1} \left\|w^\star-w_t\right\|^2\\
&\le \widehat{F}(w^\star)+\frac{\left\|w^\star-w_0\right\|^2}{\eta T}+\widetilde C  \left\|w^\star-w_0\right\|^2 + \widetilde C \frac{1}{T} \sum_{t=0}^{T-1} \|w_t-w_0\|^2.
\end{align*}
For the last term, we have a simple bound:
\bea\nn
\|w_t-w_0\| = \eta \|\sum_{k=0}^{t-1} \nabla \Fh(w_k)\| \le \eta \sum_{k=0}^{t-1} \|\nabla \Fh(w_k)\|\le \eta t C_1 
\eea
where $C_1$ denote the lipschitz parameter bound along the GD iterates and the last inequality follows by descent lemma condition. 

Then,
\bea\nn
\Fh(w_T)\le \widehat{F}(w^\star)+\frac{\left\|w^\star-w_0\right\|^2}{\eta T}+\widetilde C  \left\|w^\star-w_0\right\|^2 + \eta^2 \widetilde C C_1^2 T^2.
\eea
Pick large enough width such that $m \ge \beta_L^2 n^{3L+3} (\eta C_1)^{6L}$ and choose $T=\sqrt{n}$. Then $\tilde \rho<(\eta C_1 n)^{3L/2}$ and 
\begin{align*}
\Fh(w_T) &\le \widehat{F}(w^\star)+\frac{{\rho^\star}^2}{\eta \sqrt{n}}+  \frac{\beta_L \tilde \rho^{3L}}{\sqrt{m}} ({\rho^\star}^2+ \eta^2 C_1^2 n )\\
& \le  \widehat{F}(w^\star)+\frac{{\rho^\star}^2}{\eta \sqrt{n}} + \frac{{\rho^\star}^2+ \eta^2 C_1^2 n }{n\sqrt{n}}
\end{align*}
In particular, in the above one can choose $w^\star:=\arg\min _{w\in\R^p}F(w)$. Moreover, from the generalization guarantees,
\begin{align*}
\E_{\mathcal{S}}\left[F(w_T)\right] &\le \E_{\mathcal{S}}[\Fh(w_T)] + \frac{4\eta C_1^2}{n} \sum_{t=1}^{T-1} \E_{\mathcal{S}}[\Fh(w_t)]\\
&\le \left(1+\frac{4\eta C_1^2}{\sqrt{n}}\right) \left(\E_{\mathcal{S}}[\Fh(w^\star)] + \frac{{\rho^\star}^2}{\eta \sqrt{n}}+\frac{{\rho^\star}^2+ \eta^2 C_1^2 n }{n\sqrt{n}}\right) \\
&= \left(1+\frac{4\eta C_1^2}{\sqrt{n}}\right) \left(F(w^\star) + \frac{{\rho^\star}^2}{\eta \sqrt{n}}+\frac{{\rho^\star}^2+ \eta^2 C_1^2 n }{n\sqrt{n}}\right).
\end{align*}

Note that $\eta\le \frac{1}{C_1^2+ C_2}$ by the descent lemma condition (Lemma \ref{lem:des}). Therefore 
$\eta\le 1/C_1^2.$ Without loss of generality we can assume $C_1\ge 1$. This simplifies the above inequality as follows:
\begin{align*}
\E_{\mathcal{S}}\left[F(w_T)\right] &\le \left( 1+ \frac{4}{\sqrt{n}}\right) \left( F(w^\star) + \frac{{\rho^\star}^2}{\eta \sqrt{n}} + \frac{{\rho^\star}^2}{n \sqrt{n}}  + \frac{1}{\sqrt{n}}\right)\\
&= F(w^\star)+ O\left(\frac{F(w^\star) + {\rho^\star}^2}{\sqrt{n}}\right).
\end{align*}
This completes the proof of Theorem \ref{thm:generalization}.


\section{Proof of Theorem \ref{thm:xor}}\label{sec:proof_xor}

We recall our setup and notation for Theorem \ref{thm:xor}. We consider in this section the one-hidden layer network with quadratic activation where $x\in\R^d,w_i\in \R^d$:
\begin{align*}
    \Phi(w,x) = \frac{1}{2m}\sum_{i=1}^m a_i (x^\top w_i)^2
\end{align*}
in the above, $a_i = \pm 1$ are fixed and half of the neurons have $a_i=1$. For initialization of first layer weights $w_i \simiid N(0,\frac{I_d}{d})$ that is each $w_{ij}\simiid N(0,\frac{1}{d})$. The data points $x$ are uniformly drawn from the Rademacher distribution (of size $2^d$) that is $x(k)=\pm 1$ i.i.d and $w.p. \; 1/2$ and the labels $y=x(1)\cdot x(2).$ Consider the linear loss where $f(u)=-u$ and $\widehat F(w):= \frac{1}{n}\sum_{j=1}^n f(y_j \Phi(x_j,w))$. Consider stochastic gradient descent with the update rule as follows for each $w_i$:
\begin{align*}
w_i^{t+1} &= w_i^t - \eta \nabla \widehat F_i(w^t) 
\\&= w_i^{t} - \frac{\eta a_i}{nm}  \sum_{j=1}^n (x_j^\top w_i^t) x_j y_j f'(y_j\Phi(x_j,w)) \\
&= w_i^{t} + \frac{\eta a_i}{nm}  \sum_{j=1}^n (x_j^\top w_i^t) x_j y_j
\end{align*}
Note that in the above the superscripts denote the iteration number and the subscript $i$ indicates the vector $w_i\in\R^d$ is associated with neuron $i.$
Fixing the initialization, define $\E_x[w_i^t]$ as the outcome of GD after $t$ iterations on \emph{population gradient}. Formally, $\E_x[w_i^t]$ is defined recursively as follows:
\begin{align*}
    \E_x[w_i^{t+1}] := \E_x[w_i^t] - \eta \nabla  F_i(\E_x[w^t]),
\end{align*}
where $F(w) := \E_x[\widehat F(w)]$ is the test loss and define $\E_x[w^0]:=w^0$. Based on previous derivations the update rule for the expected weights $\E_x[w_i^t]$ obeys the following:

\begin{align*}
\E_x[w_i^{t+1}] = \E_x[w_i^t] + \frac{\eta }{m} a_i\E_x \left[\langle x,\E_x[w_i^t]\rangle x y  \right].
\end{align*}
for $t=0:$ 

\begin{align*}
\E_x[w_i^{1}] &= w_i^0 + \frac{\eta}{m}  a_i\E_x \left[\langle x,w_i^0\rangle x y\right] .
\end{align*}
For $t=1:$
\begin{align*}
\E_x[w_i^2] =  \E_x[w_i^1] + \frac{\eta }{m}a_i \E_x \left[\langle x,\E_x[w_i^1]\rangle x y \right].
\end{align*}
and similarly $\E_x[w_i^t]$ is defined for all iterations $t$. 
\subsection{Calculating $\E_x[w_i^1]$}
For the expected gradient $g_i\in\R^d$ we have by noting $y=x(1)x(2)$:
\begin{align*}
g_i = -\E_x[ \langle x,w_i^0 \rangle\cdot x \cdot x(1)x(2)]
\end{align*}
for $g_i^3$ (we drop $i$ and 0 for simplicity and denote the $k'$th element of $W_i^0$ with $W^k$ ):
\begin{align*}
-g_i^3 &= \E_x [(x(3) w^3 +\sum_{k\neq 3} x(k) w^k)\cdot  x(3) \cdot x(1)x(2)]\\
&= \E_x[w^3 \cdot x(1)x(2) + x(3)x(1)x(2)\sum_{k\neq 3} x(k) w^k]\\
& =  \E_x[w^3 \cdot x(1)x(2)] + \E_x[x(3)x(1)x(2)\sum_{k\neq 3} x(k) w^k]\\
&= 0 + \E_x[x(3) \sum_{k\neq 3} x(k) w^k]\, \E_x[x(1)x(2)]\\
&=0.
\end{align*}
Therefore for any $k\neq 1,2$ we have $g_i^k=0.$

for $k=1$ :
\begin{align*}
-g_i^1 &= \E_x [ \langle x, w \rangle\cdot x(1) \cdot x(1)x(2)] \\
&= \E_x[w^1 x(1)x(2) + x(1)x(2) x(1)\sum_{k\neq 1} w^k x(k)]\\
& = \E_x[{w^1}x(1)x(2)] + \E_x[x(1)x(2) x(1) w^2 x(2)]+\E_x[x(1)x(2) x(1) \sum_{k\neq 1,2}w^k x(k)]\\
& = {w^1} \E_x[x(1)x(2)]  + w^2 \E_x[x(1)x(2) x(1)x(2)] + \E_x[x(1)x(2) x(1) ]\cdot \E_x[\sum_{k\neq 1,2}w^k x(k)]\\
&= w^2 \E_x[(x(1)x(2))^2] \\
&= w^2.
\end{align*}

Similarly, we derive that
\begin{align*}
-g_i^2 = w_i^0(1)
\end{align*}
Thus, the expected weight after one step is 
\begin{align*}
\E_x[w_i^1] &= w_i^0 + \frac{\eta a_i}{m} [w_i^0(2),w_i^0(1),0,\cdots,0]\\
&=\Big[w_i^0 (1)+ \frac{\eta a_i}{m} w_i^0(2), w_i^0(2)+\frac{\eta a_i}{m} w_i^0(1), w_i^0(3), \cdots,w_i^0(d) \Big]
\end{align*}

\subsection{Calculating $\E_x[w_i^2]$}

For $k=1:$

\begin{align*}
\E_x[w_i^2(1)] &=  \E_x[w_i^1(1)]+ \frac{\eta a_i}{m} \E_x[\langle x,\E_x[w_i^1]\rangle x(2)]\\
& = \E_x[w_i^1(1)]+ \frac{\eta a_i}{m} \E_x[w_i^1 (2)]\\
&=  w_i^0 (1)+ \frac{\eta a_i}{m} w_i^0(2)+ \frac{\eta a_i}{m} (w_i^0(2)+ \frac{\eta a_i}{m} w_i^0(1))
\end{align*}
Similarly we obtain for $k=2:$

\begin{align*}
\E_x[w_i^2(2)] &=  \E_x[w_i^1(2)]+ \frac{\eta a_i}{m} \E_x[\langle x,\E_x[w_i^1]\rangle x(1)]\\
& = \E_x[w_i^1(2)]+ \frac{\eta a_i}{m} \E_x[w_i^1 (1)]\\
&=  w_i^0 (2)+ \frac{\eta a_i}{m} w_i^0(1)+ \frac{\eta a_i}{m} (w_i^0(1)+ \frac{\eta a_i}{m} w_i^0(2))
\end{align*}
for $k\neq 1,2$:
\begin{align*}
    \E_x[w_i^2(k)] &=  \E_x[w_i^1(k)]+ \frac{\eta a_i}{m} \E_x[\langle x,\E_x[w_i^1]\rangle x(k)x(1)x(2)]\\
    &=\E_x[w_i^1(k)]
\end{align*}
\subsection{Calculating $\E_x[w_i^t]$}

In general, we observe the following pattern for arbitrary $t$:
for $k\neq 1,2:$
\begin{align*}
\E_x[w_i^{t+1}(k)] = \E_x[w_i^{t}(k)]
\end{align*}
and for $k=1,2$:
\begin{align*}
\E_x[w_i^{t+1} (1)] &= \E_x[w_i^{t}(1)] + \frac{\eta a_i}{m}\E_x[w_i^t(2)]\\
\E_x[w_i^{t+1} (2)] &= \E_x[w_i^{t}(2)] + \frac{\eta a_i}{m}\E_x[w_i^t(1)]
\end{align*}
Therefore for general $t$, defining $\gamma = \eta a_i/m$, $\alpha(t):= \E_x[w_i^t(1)]$ and $\beta(t):=\E_x[w_i^t(2)]$ we have the following equations:
\begin{align*}
    \alpha(t+1) &= \alpha(t) + \gamma \beta(t)\\
    \beta(t+1) &= \beta(t)+ \gamma \alpha(t).
\end{align*}
It can be verified that the solution to the above equations take the following form:
\begin{align*}
\alpha(t) &= \left(\sum_{r\in[t],r: even} \binom{t}{r} \gamma^r\right) \alpha(0)  + \left(\sum_{r\in[t],r: odd} \binom{t}{r} \gamma^r\right) \beta(0) \\
\beta(t) &= \left(\sum_{r\in[t],r: even} \binom{t}{r} \gamma^r\right) \beta(0)  + \left(\sum_{r\in[t],r: odd} \binom{t}{r} \gamma^r\right) \alpha(0)
\end{align*}
Therefore with replacement:
\begin{align*}
\E_x[w_i^t(1)] &= \left(\sum_{r\in[t],r: even} \binom{t}{r} (\frac{\eta a_i}{m})^r\right) w_i^0(1)  + \left(\sum_{r\in[t],r: odd} \binom{t}{r} (\frac{\eta a_i}{m})^r\right) w_i^0(2) \\
\E_x[w_i^t(2)] &= \left(\sum_{r\in[t],r: even} \binom{t}{r} (\frac{\eta a_i}{m})^r\right) w_i^0(2)  + \left(\sum_{r\in[t],r: odd} \binom{t}{r} (\frac{\eta a_i}{m})^r\right) w_i^0(1)
\end{align*}

\subsection{Empirical Gradient for multiple GD steps}
The results above hold when GD is applied on the entire data distribution. Let us consider the case of SGD on $n$ data points (on the loss $F(\cdot)$) for each iteration and bound the resulting noise. 

Recall that 

\begin{align*}
w^{1} &= w^0 - \eta \nabla F(w^0), \\
\E_x[w^1] &= w^0 - \eta \E_x[\nabla F(w^0)],\\
w^1 &= \E_x[w^1] - \eta (\nabla F(w^0)-\E_x[\nabla F(w^0)]).
\end{align*}
Moreover, by how we defined $\E_x[w^t]$ for $t \ge 2$, we have
\begin{align*}
w^{t+1} &= w^t - \eta \nabla F(w^t),\\
\E_x[w^{t+1}] &= \E_x[w^{t}] - \eta \E_x[\nabla F(\E_x[w^t])], \\
w^{t+1} &= \E_x[w^t] + (w^t - \E_x[w^t]) - \eta (\nabla F(w^t)- \E_x[\nabla F(\E_x[w^t])]) - \eta \E_x[\nabla F(\E_x[w^t])].
\end{align*}
Overall, from the last two equations we have:
\begin{align*}
w^{t+1} - \E_x[w^{t+1}] &= -\eta \sum_{\tau=0}^t \nabla F(w^\tau) - \E_x\left[\nabla F(\E_x[w^\tau])\right]\\
& = -\eta \sum_{\tau=0}^t \underbrace{\nabla F(w^\tau) - \nabla F(\E_x[w^\tau])}_{\text{Term I}} + \underbrace{\nabla F(\E_x[w^\tau])- \E_x[\nabla F(\E_x[w^\tau])]}_{\text{Term II}}.
\end{align*}
We bound each term separately.
\subsection{Term II:}
Starting with the second term, recall that:
\begin{align*}
-\nabla F_i (\E_x[w^t]) = \frac{a_i}{nm}\sum_{j=1}^n \langle x_j,\E_x[w^t_i]\rangle x_jy_j
\end{align*}
\paragraph{Case 1:} For $k\neq 1,2$ :
\begin{align*}
-\nabla  F_i^k (\E_x[w^t]) &= \frac{a_i}{nm} \sum_{j=1}^n \langle x_j,\E_x[w_i^t]\rangle x_j(k)x_j(1)x_j(2)\\
&= \frac{a_i}{nm}  \left\langle \sum_{j=1}^n x_j\cdot x_j(k)\cdot x_j(1)\cdot x_j(2),\E_x[w_i^t]\right \rangle 
\end{align*}

Denoting $v_j : = x_j\cdot x_j(k)\cdot x_j(1)\cdot x_j(2)$ note that $v_j(r)$ are mutually independent for all $r\in d$ and $j\in[n]$. Moreover each $v_j(r)$ is distributed according to a Rademacher distribution and $Var(v_j(r))=1$. 

Therefore, recalling the expression for $\E_x[w^t_i] = [\alpha_2^t w_i^0(1)+\alpha_1^t w_i^0(2),\alpha_1^t w_i^0(1)+ \alpha_2^t w_i^0(2),w_i^0(3),\cdots,w_i^0(d)]$
where $\alpha_2^t:=\sum_{r\in[t],r: even} \binom{t}{r} (\frac{\eta a_i}{m})^r$ and $\alpha_1^t := \sum_{r\in[t],r: odd} \binom{t}{r} (\frac{\eta a_i}{m})^r$
by expanding the summation above:
\begin{align*}
-\nabla  F_i^k (\E_x[w^t]) &= \frac{a_i}{nm} \Big( (\alpha_2^t \sum_j v_j(1) + \alpha_1^t\sum_j v_j(2))w_i^0(1) \\&\hspace{.5in}+ (\alpha_1^t \sum_j v_j(1) + \alpha_2^t\sum_j v_j(2))w_i^0(2) +\sum_{k\neq 1,2} \sum_j v_j(k) w_i^0(k)\Big).
\end{align*}
We use Hoeffding concentration inequality for bounding $\sum_j v_j(k) = O(\sqrt{n}\log(d)) $ uniformly for every $k\neq 1,2$. To be more concrete, by a union bound and Hoeffding it follows that uniformly over all $k\le d$ and $t\le T$:
\begin{align*}
\Pr \left(\sum_{j} v_j(k) \le \sqrt{n}\log(dT)\right)\ge 1- \exp\left(\log(dT)-2\log^2(dT)\right)=: 1-p_x.
\end{align*}
Moreover, recall that $w_i^0(k)\sim N(0,1/d)$ and :
\begin{align*}
    \Pr \left( w_i^0(k) \le\frac{\log(dm)}{\sqrt{d}}\right) \ge 1- \frac{\exp(-\frac{1}{2}\log^2(dm))}{\sqrt{2\pi} \log(dm)}
\end{align*}
By union bound we have for all $i\le m, k\le d$:
\begin{align*}
    \Pr \left( w_i^0(k) \le\frac{\log(dm)}{\sqrt{d}}\right) \ge 1- \frac{\exp(-\frac{1}{2}\log^2(dm)+\log(dm))}{\sqrt{2\pi} \log(dm)}=:1-p_w.
\end{align*}
Therefore w.p. $1-p_x-p_w$ we have for all $t\le T, i\le m,k\le d$: $\sum_j v_j(k)w_i^0(k)\le \frac{\log(dm)\log(dT)\sqrt{n}}{\sqrt{d}}$
By Hoeffding w.p. $1-p_x-p_w$ over data sampling and initialization: 
\begin{align*}
    \Pr\left(\sum_k\sum_j v_j(k)w_i^0(k)\le \log(dm)\log(dT)\log(d)\sqrt{n}\right) \ge 1-\exp(-\log^2(d))=:1-p_1.
\end{align*}
Similarly, we obtain $\Pr\left(\sum_j v_j(1) w_i^0(1)\le \frac{\sqrt{n}}{\sqrt{d}}\log(dT)\log(dm)\right)\ge 1-p_x-p_w$
Overall, the above calculations show that for some constant $C$ with probability $1-C(p_1-p_x-p_w)$ over initialization and data sampling it holds uniformly over all $k,i,t$:
\begin{align*}
  |\nabla  F_i^k (\E_x[w^t])| & \lesssim \frac{\log(dT)\log(dm)}{nm} \left(\frac{2\sqrt{n}(|\alpha_1^t|+\alpha_2^t)}{\sqrt{d}}+\sqrt{n}\log(d)\right) \\
&= \log(dT)\log(dm)\frac{|\alpha_1^t|+\alpha_2^t+\sqrt{d}\log(d)}{m\sqrt{nd}}.
\end{align*}

\paragraph{Case 2:}Now assume $k=1$ then in a similar way as above
\begin{align*}
-\nabla F_i^1(\E_x[w^t]) &= \frac{a_i}{nm} \sum_{j=1}^n \langle x_j,\E_x[w_i^t]\rangle x_j(2)\\
&= \frac{a_i}{m} \E_x[w_i^t(2)] + \frac{a_i}{nm} \sum_{j=1}^n \sum_{j'\neq 2}^d \E_x[w_i^t(j')] x_j(j')x_j(2)\\
&= \frac{a_i}{m} \E_x[w_i^t(2)] + O(\log(dm)\log(dT)\frac{|\alpha_1^t|+\alpha_2^t+\sqrt{d}\log(d)}{m\sqrt{nd}}).
\end{align*}
for $k=2:$

\bea
-\nabla F_i^2(\E_x[w^t]) = \frac{a_i}{m} \E_x[w_i^t(1)] + O(\log(dm)\log(dT)\frac{|\alpha_1^t|+\alpha_2^t+\sqrt{d}\log(d)}{m\sqrt{nd}}).
\eea
Therefore for the weights entering the $i$'th hidden neuron and for all $i\le m,t\le T$, the following concentration bound holds w.p. $1-C(p_1-p_x-p_w)$:
\begin{align*}
\left \|\nabla F_i(\E_x[w^t])- \E_x[\nabla F_i(\E_x[w^t])] \right\| \lesssim \log(dm)\log(dT)\frac{|\alpha_1^t|+\alpha_2^t+\sqrt{d}\log(d)}{m\sqrt{n}}.
\end{align*}
Therefore, as a result of Term II calculations, we deduce that with probability $1-C(p_1-p_x-p_w)$, 
\begin{align*}
\nn \forall i,t: \left \|\nabla F_i(\E_x[w^t])- \E_x[\nabla F_i(\E_x[w^t])] \right\| \lesssim \log(dm)\log(dT)\frac{|\alpha_1^t|+\alpha_2^t+\sqrt{d}\log(d)}{m\sqrt{n}},
\end{align*}
where $x^t$ denotes the data chosen at iteration $t$ of SGD and recall $w^0$ denotes the initialization weights.
\subsection{Term I:}
\paragraph{Case 1:} We first consider $k\neq 1,2$. By applying Hoeffding's concentration inequality we obtain \\$\sum_{j=1}^n x_j(\ell)x_j(k)x_j(1)x_j(2)<\sqrt{n}\log(dT)$ uniformly for every $\ell\in[d],t\le T$ w.p. $1-\exp(\log(dT)-2\log^2(dT))=:1-p_x.$ Applying Hoeffding again yields,
\begin{align*}
\nabla F_i^k(w^t) - \nabla F_i^k(\E_x[w^t]) &= \frac{a_i}{nm}\sum_{j=1}^n \langle x_j,\E_x[w_i^t]-w_i^t\rangle x_j(k)x_j(1)x_j(2)\\
&= \frac{a_i}{nm}\left\langle \sum_{j=1}^n x_j x_j(k)x_j(1)x_j(2),\E_x[w_i^t]-w_i^t\right\rangle\\
&\le \frac{\|\E_x[w_i^t]-w_i^t\|}{\sqrt{n}m} \log(dT)\log(dmT),
\end{align*}
w.p. $1-\exp(\log(dmT)-2\log^2(dmT))-\exp(\log(dT)-2\log^2(dT))=:1-p_2-p_x$ uniformly over all $k\le d, i\le m, t\le T$ where in the above we assumed $\E_x[w_i^t]-w_i^t$ as a fixed vector.  

Denote $v^t := \E_x[w_i^t]-w_i^t$. Concretely, we have the following bound for $k\neq 1,2$ on Term I:

\begin{align*}
\Pr_{x^t}\left (\forall i,k,t : \nabla F_i^k(w^t) - \nabla F_i^k(\E_x[w^t]) \le \frac{\|v^t\|}{\sqrt{n}m} \log(dT)\log(dmT) \Big| v^t \right) \ge 1-p_2-p_x.
\end{align*}
Therefore,
\begin{align*}
\Pr_{x^t,v^t} \left(\forall i,k,t : \nabla F_i^k(w^t) - \nabla F_i^k(\E_x[w^t]) \le \frac{\|v^t\|}{\sqrt{n}m} \log(dT)\log(dmT) \right) \ge 1-p_2-p_x.
\end{align*}

\paragraph{Case 2:} For $k=1:$
\begin{align*}
\nabla F_i^k(w^t) - \nabla F_i^k(\E_x[w^t]) &= \frac{a_i}{nm}\sum_{j=1}^n \langle x_j,\E_x[w_i^t]-w_i^t\rangle x_j(2)\\
&= \frac{a_i}{nm} \langle \sum_{j=1}^n x_jx_j(2),\E_x[w_i^t]-w_i^t\rangle\\
&= \frac{a_i}{m} v^t(2) + \frac{a_i}{m\sqrt{n}}\|v^t(\ell\neq 2)\|\cdot\log(dT)\log(mT),
\end{align*}
w.p. $1-p_2-p_x$. 

Similarly for $k=2$ with the same probability it holds,
\begin{align*}
\nabla F_i^k(w^t) - \nabla F_i^k(\E_x[w^t]) &= \frac{a_i}{nm}\sum_{j=1}^n \langle x_j,\E_x[w_i^t]-w_i^t\rangle x_j(1)\\
&= \frac{a_i}{nm} \langle \sum_{j=1}^n x_jx_j(1),\E_x[w_i^t]-w_i^t\rangle\\
&= \frac{a_i}{m} v^t(1) + \frac{a_i}{m\sqrt{n}}\|v^t(\ell\neq 1)\|\cdot\log(dT) \log(mT)
\end{align*}

\subsection{Combining two terms}
Recall we defined $v^t = \E_x[w_i^t]-w_i^t$ where $v^t\in\R^d$. Denote its entries by $v^t(k)$ for $k\le d$.  
Define $v^0=0$. Then, from the last two sections we obtain the following system of equations which hold w.p. $1-C(p_1-p_x-p_2-p_w)$ for all $t\ge 1$:
\begin{align*}
|v^t(1)|&\le \eta \sum_{\tau=0}^{t-1}\frac{|v^\tau(2)|}{m}+ \frac{\|v^\tau(\ell\neq 2)\|\log(dT)\log(mT)}{m\sqrt{n}} +\frac{|\alpha_1^\tau|+\alpha_2^\tau+\sqrt{d}\log(d)}{m\sqrt{nd}}\log(dm)\log(dT)\\
|v^t(2)|&\le \eta \sum_{\tau=0}^{t-1}\frac{|v^\tau(1)|}{m}+ \frac{\|v^\tau(\ell\neq 1)\|\log(dT)\log(mT)}{m\sqrt{n}} +\frac{|\alpha_1^\tau|+\alpha_2^\tau+\sqrt{d}\log(d)}{m\sqrt{nd}}\log(dm)\log(dT)\\
|v^t(k)|&\le \eta \sum_{\tau=0}^{t-1}\frac{\|v^\tau\|\log(dT)\log(dmT)}{m\sqrt{n}} +\frac{|\alpha_1^\tau|+\alpha_2^\tau+\sqrt{d}\log(d)}{m\sqrt{nd}}\log(dm)\log(dT),\;\;\; \forall k\neq 1,2
\end{align*}

Note that $|\alpha_1^\tau|+\alpha_2^\tau = 2^\tau$. Simplify the above by assuming $\eta = m$ and denoting $\gamma_\tau:=\frac{2^\tau+\sqrt{d}\log(d)}{\sqrt{nd}}\log(dm)\log(dT)$. Then the equations above simplify to: 

\begin{align*}
|v^t(1)|&\le \sum_{\tau=0}^{t-1}|v^\tau(2)|+ \frac{\|v^\tau\|\log^2(dmT)}{\sqrt{n}} +\sum_{\tau=0}^{t-1}\gamma_\tau\\
|v^t(2)|&\le  \sum_{\tau=0}^{t-1}|v^\tau(1)|+ \frac{\|v^\tau\|\log^2(dmT)}{\sqrt{n}} +\sum_{\tau=0}^{t-1}\gamma_\tau\\
|v^t(k)|&\le   \sum_{\tau=0}^{t-1}\frac{\|v^\tau\|\log^2(dmT)}{\sqrt{n}} +\sum_{\tau=0}^{t-1}\gamma_\tau,\;\;\; \forall k\neq 1,2
\end{align*}

Define $z^t:=\max_k |v^t(k)|$. Then since the RHS in the third equations is smaller than the two first equations we deduce that:
\bea\label{eq:beflem}
z^t &\le \sum_{\tau=0}^{t-1}{z^\tau}+ \frac{z^\tau \sqrt{d}\log^2(dmT)}{\sqrt{n}} +\frac{2^\tau+\sqrt{d}\log(d)}{\sqrt{nd}}\log^2(dmT)
\eea

To proceed in simplifying the equation above, we need the following lemma which can be straight-forwardly proved by induction. 
\begin{lemma}\label{lem:recu} Let $v_k,\beta,\gamma_k\in \R$. If $v_0=0$ and for every $t<T$:
\bea
v_{t+1} \le \beta \sum_{k=0}^t v_k + \sum_{k=0}^t \gamma_k
\eea

Then, it can be checked that this results in:
\bea
v_{T} \le \sum_{k=0}^{T-1} (\beta+1)^k \gamma_{T-k-1}.
\eea
\end{lemma}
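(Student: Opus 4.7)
The plan is to introduce an auxiliary majorizing sequence that satisfies the recursion with equality, solve the resulting linear recurrence in closed form, and conclude by a monotonicity comparison. Concretely, I would define $w_0 := 0$ and
\begin{equation*}
w_{t+1} := \beta \sum_{k=0}^t w_k + \sum_{k=0}^t \gamma_k,
\end{equation*}
and then prove $v_t \le w_t$ for all $t \le T$ by a simple induction: the base case is $v_0 = 0 = w_0$, and for the inductive step, assuming $v_k \le w_k$ for all $k \le t$, the hypothesis gives
\begin{equation*}
v_{t+1} \le \beta\sum_{k=0}^t v_k + \sum_{k=0}^t \gamma_k \le \beta\sum_{k=0}^t w_k + \sum_{k=0}^t \gamma_k = w_{t+1},
\end{equation*}
where the second inequality relies on $\beta \ge 0$, which holds in the application in Eq.~\ref{eq:beflem}.

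Next, I would collapse the running sums defining $w_{t}$ into a first-order recursion. Subtracting the defining relations for $w_{t+1}$ and $w_t$ (for $t \ge 1$) yields $w_{t+1} - w_t = \beta w_t + \gamma_t$, i.e.\ $w_{t+1} = (\beta+1)w_t + \gamma_t$, with initial condition $w_1 = \gamma_0$. Unrolling this standard linear recurrence gives the closed form
\begin{equation*}
w_T = \sum_{j=0}^{T-1} (\beta+1)^{T-1-j}\,\gamma_j,
\end{equation*}
and reindexing $k = T-1-j$ turns this into $w_T = \sum_{k=0}^{T-1}(\beta+1)^k\,\gamma_{T-k-1}$. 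Combining with $v_T \le w_T$ gives the claimed bound.

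The argument is genuinely straightforward; the only mild pitfall is the monotonicity step, which silently uses that $\beta \ge 0$ (and that the $\gamma_k$ entering the induction are nonnegative, which is immediate from their definition in Eq.~\ref{eq:beflem}). I would mention this explicitly so that the lemma's scope is clear, and then the rest reduces to a standard first-order linear recursion solved by unrolling. No further structural difficulty arises.
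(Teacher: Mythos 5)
Your proof is correct and is essentially the induction the paper alludes to (the paper offers no written proof beyond saying the lemma ``can be straightforwardly proved by induction''); introducing the majorant $w_t$ that satisfies the recursion with equality, differencing to get $w_{t+1}=(\beta+1)w_t+\gamma_t$, and unrolling is the standard route, and your closed form checks out against small cases. Your remark about $\beta\ge 0$ is well taken and worth keeping: the lemma as stated for arbitrary $\beta\in\R$ is actually false (e.g.\ $\beta=-2$, all $\gamma_k=0$, $v_1=-1$ forces no contradiction yet permits $v_2=2>0$), so the comparison step genuinely needs $\beta\ge 0$, which holds in the application at Eq.~\ref{eq:beflem}; the nonnegativity of the $\gamma_k$, on the other hand, is not needed anywhere since those terms appear identically on both sides of the inductive inequality.
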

Therefore, continuing from Eq. \ref{eq:beflem}:
\begin{align*}
z^t &\le \sum_{\tau=0}^{t-1} (2+\frac{\sqrt{d}\log^2(dmT)}{\sqrt{n}})^\tau \cdot\frac{2^{t-\tau-1}+\sqrt{d}\log(d)}{\sqrt{nd}}\log^2(dmT)\\
&\le \frac{t\log^2(dmT)}{\sqrt{nd}} (2+\frac{\sqrt{d}\log^2(dmT)}{\sqrt{n}})^{t-1} + \frac{(2+\frac{\sqrt{d}\log^2(dmT)}{\sqrt{n}})^t \log^3(dmT)}{\sqrt{n}(1+\frac{\sqrt{d}\log^2(dmT)}{\sqrt{n}})}
\end{align*}
Assume $t\le \log(d)$ and $n\ge d\cdot \log^{2c}(d)$ for some constant $c$. Then,
\begin{align*}
z^t &\le \frac{d}{d\log^{c-3}(d)} + \frac{2d}{\sqrt{d}\log^{c-3}(d)}\\
&\le \frac{3\sqrt{d}}{\log^{c-3}(d)}
\end{align*}
where in the above we used the fact that for $t\le T\le\log(d)$ and any $m = O(\poly(d))$:
\begin{align*}
(2+\frac{\sqrt{d}\log^2(dmT)}{\sqrt{n}})^{t-1} = (2+\frac{1}{\log^{c-2}(d)})^{t-1} \le (2+\frac{1}{\log(d)})^{t-1} \le d
\end{align*}
Therefore 
\begin{align*}
\max_k |v^t(k)| = O(\frac{\sqrt{d}}{\log^{c-3}(d)})
\end{align*}
We next show that $\|v^t(3:d)\|$ is much smaller. To see this, we replace our derived bound in the third equation above. define $\beta^t:=\|v^t(3:d)\|$ and note that $\beta^t \le \sqrt{d} |v^t(k)|$ for some $k\in[3,d]$. Thus:

\begin{align*}
\beta^t &\le \sum_{\tau=0}^{t-1} \frac{\|v^\tau\|\log^2(dmT)\sqrt{d}}{\sqrt{n}} + \sqrt{d}\sum_{\tau=0}^{t-1}\gamma_\tau\\
&\le \sum_{\tau=0}^{t-1} \frac{\|v^\tau\|}{\log^{c-2}(d)} + \sqrt{d}\sum_{\tau=0}^{t-1}\gamma_\tau\\
&\le  \sum_{\tau=0}^{t-1} \frac{\sqrt{({\beta^\tau})^2+\frac{d}{\log^{2c-6}(d)}}}{\log^{c-2}(d)} + \sqrt{d}\sum_{\tau=0}^{t-1}\gamma_\tau\\
&\le \sum_{\tau=0}^{t-1} \frac{{\beta^\tau}+\frac{\sqrt{d}}{\log^{c-3}(d)}}{\log^{c-2}(d)} + \sum_{\tau=0}^{t-1}\frac{2^\tau+\sqrt{d}\log(d)}{\sqrt{n}}\log^2(dmT)
\end{align*}
Using the lemma for the recursive summation (Lemma \ref{lem:recu}) again:
\begin{align*}
\beta^t &\le \sum_{\tau=0}^{t-1} (1+\frac{1}{\log^{c-2}(d)})^\tau(\frac{\sqrt{d}}{\log^{2c-5}(d)}+\frac{2^{t-\tau-1}+\sqrt{d}\log(d)}{\sqrt{n}}\log^2(dmT))\\
&\le (1+\frac{1}{\log^{c-2}(d)})^t \frac{\sqrt{d}}{\log^{c-3}(d)} + \frac{2^{t-1}t}{\sqrt{d}\log^{c-2}(d)} + (1+\frac{1}{\log^{c-2}(d)})^{t}\log^2(dmT)
\end{align*}
where in the above we used the conditions on $m,T$. By using them again and recalling $t\le T\le \log(d)$, the above simplifies into:
\begin{align*}
    \beta^t &\le  \frac{2\sqrt{d}}{\log^{c-3}(d)} + \frac{\sqrt{d}}{\log^{c-3}(d)} + 2\log^2(dmT)\\
    &= O( \frac{\sqrt{d}}{\log^{c-3}(d)})
\end{align*}

recall that $\E_x[w_i^t]$ had the following form 
\begin{align*}
\E_x[w_i^t(1)] &= \left(\sum_{r\in[t],r: even} \binom{t}{r} (\frac{\eta a_i}{m})^r\right) w_i^0(1)  + \left(\sum_{r\in[t],r: odd} \binom{t}{r} (\frac{\eta a_i}{m})^r\right) w_i^0(2) \\
&= \frac{1}{2}((1+a_i)^t+(1-a_i)^t) w_i^0(1) + \frac{1}{2}((1+a_i)^t-(1-a_i)^t) w_i^0(2)\\
\E_x[w_i^t(2)] &= \left(\sum_{r\in[t],r: even} \binom{t}{r} (\frac{\eta a_i}{m})^r\right) w_i^0(2)  + \left(\sum_{r\in[t],r: odd} \binom{t}{r} (\frac{\eta a_i}{m})^r\right) w_i^0(1)\\
&= \frac{1}{2}((1+a_i)^t+(1-a_i)^t) w_i^0(2) + \frac{1}{2}((1+a_i)^t-(1-a_i)^t) w_i^0(1)\\
\E_x[w_i^t(k)] &= w_i^0(k) , k>2.
\end{align*}

where the last line is derived by choosing w.l.o.g. $\eta = m:$
\begin{align*}
\E_x[w_i^t(1)] &= 2^{t-1}(w_i^0(1) + a_i w_i^0(2)),\\
\E_x[w_i^t(2)] &= 2^{t-1}(w_i^0(2) + a_i w_i^0(1)),\\
\E_x[w_i^t(k)] &= w_i^0(k), \,\, k>2.
\end{align*}
therefore,
\begin{align*}
|\E_x[w_i^t(1)]| &= \Theta(\frac{2^t}{\sqrt{d}}),\\
|\E_x[w_i^t(2)]| &= \Theta(\frac{2^t}{\sqrt{d}}),\\
|\E_x[w_i^t(k)]| &= \Theta(\frac{1}{\sqrt{d}}).
\end{align*}

Select $t=\log(d)$. Then, w.h.p. it holds that
\begin{align*}
|\E_x[w_i^t(1)]| &= \Theta(\sqrt{d}),\\
|\E_x[w_i^t(2)]| &= \Theta(\sqrt{d}),\\
|\E_x[w_i^t(k)]| &= \Theta(\frac{1}{\sqrt{d}}).
\end{align*}

The calculations above indicate that the signal strength is already larger than the noise after $\log(d)$ SGD steps. We are now ready to compute $y\Phi(w^t,x)$. Assume for simplicity and without loss of generality that $x(1)=x(2)=y=1$ as other cases lead to the same result. Recall $v_i^t(k) := \E_x[w_i^t(k)]-w_i^t(k)$. Then,

\begin{align*}
y\Phi(w^t,x) &= \frac{1}{m}\sum_{i=1}^m a_i\left(w_i^t(1)+w_i^t(2)+\sum_{k=3}^d w_i^t(k)x(k)\right)^2\\
&=\frac{1}{m}\sum_{i=1}^m a_i \left(d(w_i^0(1)+w_i^0(2))(a_i+1)-v_i^t(1) - v_i^t(2)+\sum_{k\ge 3} (w_i^0(k)-v_i^t(k))x(k)  \right)^2\\
&= \frac{1}{m}\sum_{i\in P} \left(2d(w_i^0(1)+w_i^0(2))-v_i^t(1) - v_i^t(2)+\sum_{k\ge 3} (w_i^0(k)-v_i^t(k))x(k)  \right)^2\\
&\hspace{.5in}+\frac{1}{m}\sum_{i\in N}\left(v_i^t(1) + v_i^t(2)+\sum_{k\ge 3} (v_i^t(k)-w_i^0(k))x(k)  \right)^2,
\end{align*}

where $P$ and $N$ denote the set of neurons for which $a_i=1$ and $a_i=-1$, respectively.

Note that the last term in each summation can be simplified as follows: $$\sum_{k\ge 3} (w_i^0(k)+v_i^t(k)) x(k) = O(\|v_i^t\|) = O\left(\beta^t\right) = O\left(\frac{\sqrt{d}}{\log^{c-3}(d)}\right).$$
Concretely, by Hoeffding w.p. $1-C(p_1-p_2-p_x-p_w)$ over data sampling and initialization :
\begin{align}
    \Pr_x\left(\sum_{k\ge 3} \left(w_i^0(k)+v_i^t(k)\right) x(k) > \frac{\sqrt{d}}{\log^{\frac{c-4}{2}}(d)}\right) \le e^{-\log(d)} = \frac{1}{d}.
\end{align}

Define $\zeta_i^t:=v_i^t(1) + v_i^t(2)+\sum_{k\ge 3} (v_i^0(k)-w_i^t(k))x(k)$. Then,
\begin{align}
y\Phi(w^t,x) &= \frac{1}{m}\sum_{i\in P}\left(2d(w_i^0(1)+w_i^0(2))-\zeta_i^t\right)^2 + \frac{1}{m}\sum_{i\in N} \left(\zeta_i^t\right)^2\nn\\
&= \frac{1}{m}\sum_{i\in P} 4d^2 (w_i^0(1)+w_i^0(2))^2 -4d\zeta_i^t (w_i^0(1)+w_i^0(2)) + \frac{1}{m}\sum_{i\in[m]} (\zeta_i^t)^2\nn\\
&\ge \frac{1}{m}\sum_{i\in P} 4d^2 (w_i^0(1)+w_i^0(2))^2-4d|\zeta_i^t|\cdot |w_i^0(1)+w_i^0(2)|.\label{eq:phi}
\end{align}
Moreover, we had w.p. at least $1-C(p_1+p_2+p_x+p_w)$ that $|v_i^t(1)|+|v_i^t(2)| < \frac{C\sqrt{d}}{\log^{c-3}(d)}$ for some absolute constant $C.$ Adding these two together, we deduce that w.p. $1-o_d(1)$ it holds uniformly over $i,t$,
\begin{align*}
|\zeta_i^t| < \frac{C\sqrt{d}}{\log^{\frac{c-4}{2}}(d)}.
\end{align*}
It remains to obtain lower- and upper-bounds on  $(w_i^0(1)+w_i^0(2))^2$ as required by Eq. \ref{eq:phi}. To proceed, note that $\frac{d}{2}(w_i^0(1)+w_i^0(2))^2\sim \chi^2(1).$ Therefore with standard concentration bounds for $\chi^2(1)$ and $\chi^2(m)$ distributions (e.g., \cite{laurent2000adaptive}) we find for any $u\ge 0$,
\begin{align*}
&\Pr\left(\frac{d}{2}(w_i^0(1)+w_i^0(2))^2-1\ge 2\sqrt{u}+2u\right)\le e^{-u}\\
&\Pr\left(m-\frac{d}{2}\sum_{i=1}^m(w_i^0(1)+w_i^0(2))^2\ge 2\sqrt{mu}\right)\le e^{-u}
\end{align*}
By selecting $u=\log^2(d)$ in the first inequality above, we find w.p. at most $e^{\log(m)-\log^{2}(d)}$ it holds uniformly for all $i\le m$:
\begin{align*}
&(w_i^0(1)+w_i^0(2))^2\ge \frac{2}{d}+ \frac{4}{d}\sqrt{\log^{2}(d)}+\frac{4}{d}\log^{2}(d),
\end{align*}
and by noting that $|P| = |N| = m/2$ and selecting $u=m/16$ in the second inequality we have w.p. at least $1-e^{-\frac{m}{16}}$,
\begin{align*}
\frac{1}{m}\sum_{i\in P} (w_i^0(1)+w_i^0(2))^2 \ge (1-\frac{1}{\sqrt{2}})\frac{1}{d}.
\end{align*}

Hence w.p. $1-e^{\log(m)-\log^2(d)} - e^{-\frac{m}{16}}-C(e^{-\log^2(d)}+e^{\log(dmT)-2\log^2(dmT)}+e^{\log(dT)-2\log^2(dT)}+e^{-\frac{1}{2}\log^2(dm)+\log(dm)})$ over data sampling and initialization,
\bea\label{eq:phifinal}
\Pr_{x,y} \left(y\Phi(w^t,x) \ge d- \frac{4Cd}{\log^{\frac{c-6}{2}}(d)}\right)\ge 1-1/d.
\eea


For the theorem's choice of $T=\log(d),m = O(\poly(d))$ it is deduced that $e^{\log(m)-\log^2(d)} + C(e^{-\log^2(d)}+e^{\log(dmT)-2\log^2(dmT)}+e^{\log(dT)-2\log^2(dT)}+e^{-\frac{1}{2}\log^2(dm)+\log(dm)}) = o_d(1)$. Moreover, for the theorem's choice $c=7$ and large enough $d$ it holds that $d-4Cd/\log^{\frac{c-6}{2}(d)}\ge 0.$ This leads to the following lower bound on test accuracy,
\bea
\Pr_{x,y} \left(y\Phi(w^t,x) \ge 0\right)\ge 1-\frac{1}{d},
\eea
with probability at least  $1 - e^{-\frac{m}{16}}-e^{\log(m)-\log^2(d)}-o_d(1)$ over data sampling and initialization. This completes the proof of Theorem \ref{thm:xor}. 

\end{document}